\newdimen\arrowsize
\DeclarePairedDelimiterX{\norm}[1]{\lVert}{\rVert}{#1}
\DeclarePairedDelimiterX{\abs}[1]{\lvert}{\rvert}{#1}
\DeclareMathOperator*{\argmin}{argmin}
\newcommand{\B}[1]{\mathbf{#1}}
\newcommand{\prob}{{\mathbb P}}
\newcommand{\E}{{\mathbb E}}
\newcommand{\R}{{\mathbb R}}
\newcommand{\N}{{\mathbb N}}
\newcommand{\var}{{\mathbf{var}}}  
\newcommand{\eps}{\varepsilon}  
\newcommand{\PA}[2][]{{\B{PA}}^{#1}_{#2}}
\newcommand{\dY}[2]{\dot{Y}^{(#1)}_{#2}}
\newcommand{\dYshort}[1]{\dot{Y}_{#1}}
\newcommand{\dYest}[2]{\widehat{\dot{Y}}^{(#1)}_{#2}}
\newenvironment{proof}{{\noindent\textbf{Proof}}}{\hfill$\square$\vskip\baselineskip}
\newtheorem{assumption}{Assumption}
\newtheorem{theorem}{Theorem}
\newtheorem{lemma}[theorem]{Lemma}
\title{Learning stable and predictive structures in kinetic systems:
  Benefits of a causal approach}
\author{Niklas Pfister\\ 
ETH Z\"urich, Switzerland\\
\url{niklas.pfister@stat.math.ethz.ch}
\and
Stefan Bauer\\ 
ETH Z\"urich, Switzerland\\
MPI T\"ubingen, Germany\\
\url{stefan.bauer@tuebingen.mpg.de}
\and Jonas Peters \\ 
University of Copenhagen, Denmark\\
\url{jonas.peters@math.ku.dk}}
\begin{document}
\maketitle

\begin{abstract}
  Learning kinetic systems from data is one of the core challenges in
  many fields.  Identifying stable models is essential for the
  generalization capabilities of data-driven inference.  We introduce
  a computationally efficient framework, called CausalKinetiX, that
  identifies structure from discrete time, noisy observations,
  generated from heterogeneous experiments.  The algorithm assumes the
  existence of an underlying, invariant kinetic model, a key criterion
  for reproducible research.  Results on both simulated and real-world
  examples suggest that learning the structure of kinetic systems
  benefits from a causal perspective. The identified variables and
  models allow for a concise description of the dynamics across
  multiple experimental settings and can be used for prediction in
  unseen experiments. We observe significant improvements compared to
  well established approaches focusing solely on predictive
  performance, especially for out-of-sample generalization.
\end{abstract}

\section*{Introduction}
Quantitative models of kinetic systems have become a cornerstone of
the modern natural sciences and are universally used in scientific
fields as diverse as physics, neuroscience, genetics, bioprocessing,
robotics or economics \cite{friston2003dynamic,chen1999modeling,
  ogunnaike1994process,murray2017mathematical, zhang2005differential}.
In systems biology, mechanistic models based on differential
equations, although not yet standard, are being increasingly used, for
example, as biomarkers to predict patient outcomes
\cite{fey2015signaling}, to improve predicting ligand dependent tumors
\cite{hass2017predicting} or for developing mechanism-based cancer
therapeutics \cite{arteaga2014erbb}. While the advantages of a
mechanistic modeling approach are by now well established, deriving
such models from hand is a difficult and labour intensive manual
effort.  With new data acquisition technologies \cite{ren2003unique,
  regev2017science, rozman2018identification} learning kinetic systems
from data has become a core challenge.

Existing data driven approaches infer the parameters of ordinary
differential equations by considering the goodness-of-fit of the
integrated system as a loss function \cite{Bard1974, Benson79}.  To
infer the structure of such models, standard model selection
techniques and sparsity enforcing regularizations can be used.  When
evaluating the loss function or perfoming an optimization step, these
methods rely on numerically integrating the kinetic system.  There are
various versions, and here we concentrate on the highly optimized
Matlab implementation \texttt{data2dynamics} \cite{data2dynamics}.  It
can be considered as a state-of-the-art implementation for directly
performing an integration in each evaluation of the loss
function. However, even with highly optimized integration procedures,
the computational cost of existing methods is high and depending on
the model class, these procedures can be infeasible. Moreover,
existing data driven approaches, not only those using numerical
integration, infer the structure of ordinary differential equations
from a single environment, possibly containing data pooled from
several experiments, and focus solely on predictive performance. Such
predictive based procedures have difficulties in capturing the
underlying causal mechanism and as a result, they may not predict well
the outcome of experiments that are different from the ones used for
fitting the model.

Here, we propose an approach to model the dynamics of a single target
variable rather than the full system. The resulting computational gain
allows our method to scale to systems with many variables. By
efficiently optimizing a non-invariance score our algorithm
consistently identifies causal kinetic models that are invariant
across heterogeneous experiments. In situations, where there is not
sufficient heterogeneity to guarantee identification of a single
causal model, the proposed variable ranking may still be used to
generate causal hypotheses and candidates suitable for further
investigation.  We demonstrate that our novel framework is robust
against model misspecification and the existence of hidden
variables. The proposed algorithm is implemented and available as an
open source R-package. The results on both simulated and real-world
examples suggest that learning the structure of kinetic systems
benefits from taking into account invariance, rather than focusing
solely on predictive performance.  This finding aligns well with a
recent debate in data science proposing to move away from
predictability as the sole principle of inference
\citep{ScholkopfJPSZMJ2012, yu2013, Peters2016jrssb,
  Bareinboim2016pnas, Meinshausen2016pnas, Shiffrin2016pnas, yu2019}.

\section*{Results}

\paragraph{Predictive models versus causal models.}
Established methods mostly focus on predictability when inferring
biological structure from data by selecting models.  This learning
principle, however, does not necessarily yield models that generalize
well to unseen experiments, since purely predictive models remain
agnostic with respect to changing environments or experimental
settings.  Causal models \cite{Pearl2009, Imbens2015} explicitly model
such changes by the concept of interventions.  The principle of
autonomy or modularity of a system \cite{Haavelmo1944,Aldrich1989}
states that the mechanisms which are not intervened on, remain
invariant (or stable). This is why causal models are expected to work
more reliably when predicting under distributional shifts
\cite{PearlMackenzie18,Peters2017book, Oates2014}.

\paragraph{Causality through stability.}
In most practical applications the causal structure is unknown, but it
may still be possible to infer the direct causes of a target variable
$Y$, say, if the system is observed under different, possibly
unspecified experimental settings. For non-dynamical data, this can be
achieved by searching for models that are stable across all
experimental conditions, i.e., the parameter estimates are similar.
Covariates that are contained in all stable models, i.e., in their
intersection, can be proven to be causal predictors for $Y$
\citep{Peters2016jrssb, Eaton2007}. The intersection of stable models,
however, is not necessarily a good predictive model.  In this work, we
propose a method for dynamical systems that combines stability with
predictability, we show that the inferred models generalize to unseen
experiments and we formalize its relation to causality (Methods).

\paragraph{CausalKinetiX: combining stability and predictability.}
The observed data consist of a target variable $Y$ and covariates $X$
measured at several time points across different experimental setups
and is assumed to be corrupted with observational noise,
Figure~\ref{fig:framework} (top).  Our proposed method, CausalKinetiX,
exploits the assumption that the model governing the dynamics of $Y$
remains invariant over the different experiments. We assume there is a
subset $S^*$ of covariates, s.t.\ for all $n$ repetitions,
$\frac{d}{dt} Y_t$ depends on the covariates in the same way, i.e.,
\begin{equation}
  \label{eq:invariance}
  \tfrac{d}{dt}Y^{(i)}_{t} = f\big({X}^{S^*,(i)}_t\big), \, \text{ for all } i = 1, \ldots, n.
\end{equation}
The covariates are allowed to change arbitrarily across different
repetitions $i$. Instead of only fitting based on predictive power,
CausalKinetiX explicitly measures and takes into account violations of
the invariance in [\ref{eq:invariance}]. Figure~\ref{fig:framework}
depicts the method's full workflow.
\begin{figure}[ht]
  \centering
  \includegraphics[width=\columnwidth]{./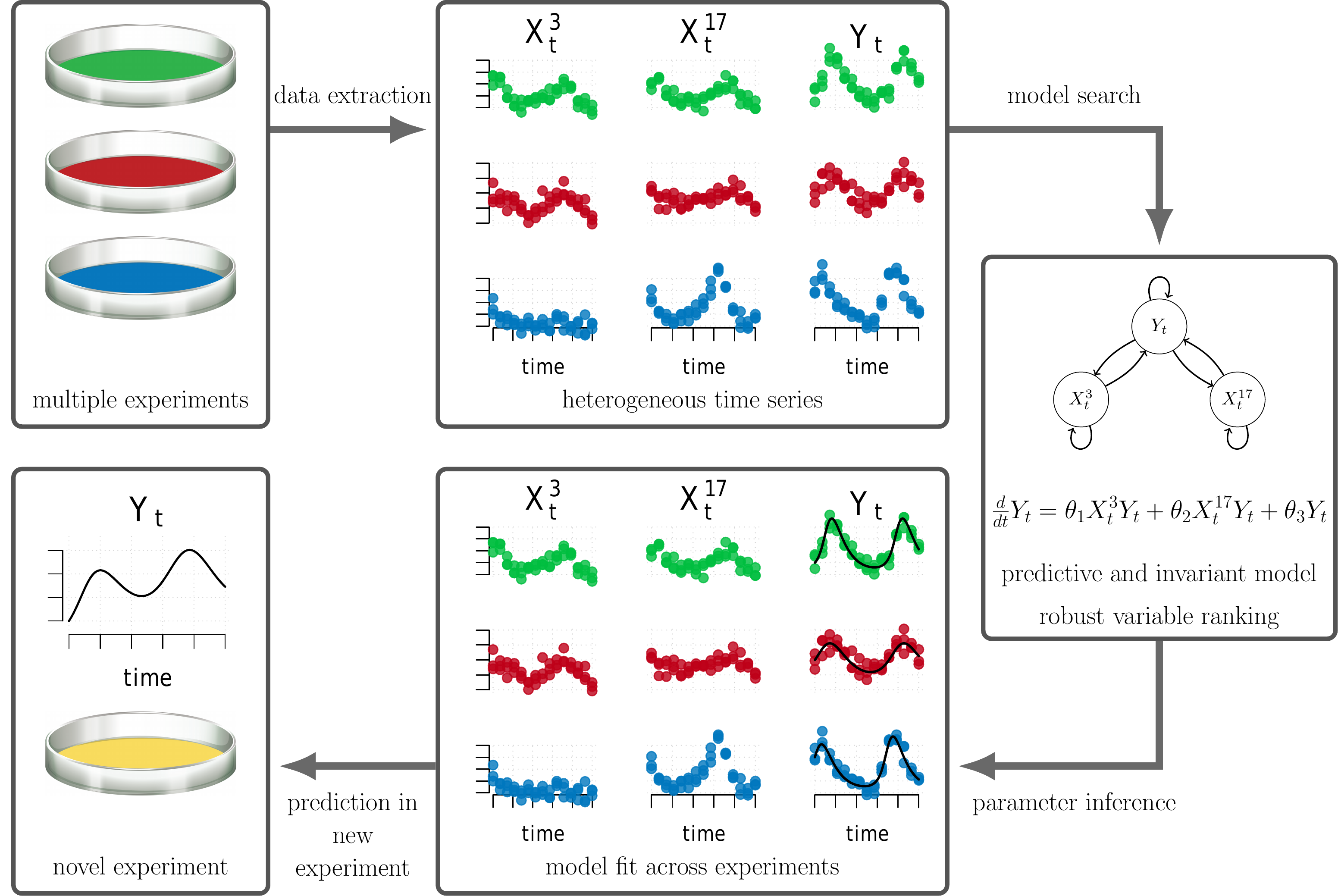}
  \caption{The framework of CausalKinetiX: the data for a target
    variables $Y$ and predictors $X$ come from different experiments; we
    rank models according to their ability to fit the target well in
    all experiments; the top ranked model is then fit to the data; it
    allows to predict the target in an unseen experiment.}
  \label{fig:framework}
\end{figure}
It ranks a collection of candidate models
$\mathcal{M}=\{M_1,\dots,M_m\}$ for the target variable (Methods)
based both on their predictive performance and whether the invariance
in [\ref{eq:invariance}] is satisfied. For a single model, e.g.,
$\frac{d}{dt}Y_t^{(i)} = \theta X^{8, (i)}_t$, and noisy realizations
$\widetilde{Y}_{t_1}^{(i)}, \ldots, \widetilde{Y}_{t_L}^{(i)}$, we
propose to compare the two data fits illustrated in
Figure~\ref{fig:illustrative_plot}.
\begin{figure}[ht]
  \centering
  \includegraphics[width=\columnwidth]{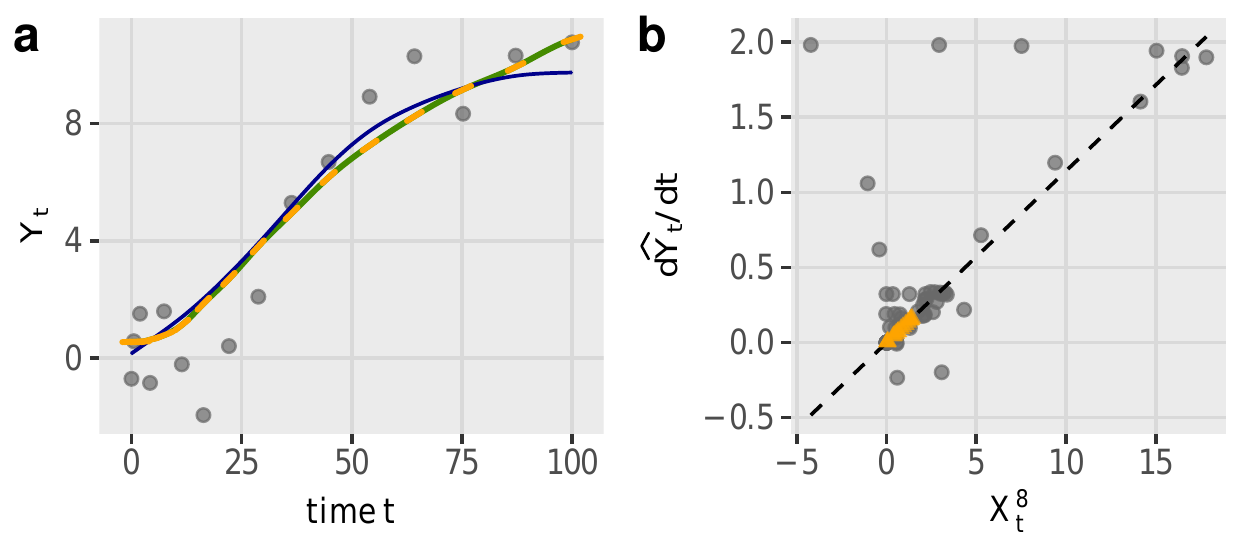}
  \caption{CausalKinetiX assigns a score to each model that trades-off
    predictability and invariance.  Here, we consider the model
    $dY_t/dt = \theta X^8_t$.  \textbf{a},~For each realization, two
    data fits are considered. An entirely data-driven nonlinear
    smoother (\emph{Data Fit A}, blue) is compared against a model
    based fit (\emph{Data Fit B}, green) with constraints on the derivatives
    (orange lines).  \textbf{b},~The derivative constraints are
    obtained from all other experiments: they correspond to fitted
    values (orange triangles) in a regression of the estimated
    derivatives on the predictors.}
  \label{fig:illustrative_plot}
\end{figure}
\emph{Data Fit A} calculates a smoothing spline to the data using all
realization from the same experiment, see
[\ref{eq:smooth_trajectory}].  This fit only serves as a baseline for
comparison: it does not incorporate the form of the underlying kinetic
model, but is entirely data-driven.  To obtain \emph{Data Fit B}, we
fit the considered model, $\frac{d}{dt}Y_t = \theta X^8_t$, on the
data from all other experiments (explicitly leaving out the current
experiment) by regressing estimated derivatives on the predictor
variables. In this example, the model is linear in its parameters, and
it therefore suffices to use linear regression. \emph{Data fit~B} fits
a smoothing spline to the same data, subject to the constraint that
its derivatives coincide with the fitted values from the regression
inferred solely based on the other experiments, see
[\ref{eq:smooth_trajectory2}]. \emph{Data fits A} and \emph{B} are
compared by considering the goodness-of-fit for each realization
$i=1, \ldots, n$.  More specifically, each model $M\in\mathcal{M}$
obtains, similar in spirit to \cite{lim2016}, the non-invariance score
\begin{equation*}
  T(M)
  \coloneqq\frac{1}{n}\sum_{i=1}^n\left[{\operatorname{RSS}_{B}^{(i)}-\operatorname{RSS}_{A}^{(i)}}\right]
  /\left[\operatorname{RSS}_{A}^{(i)}\right],
\end{equation*}
where
$\operatorname{RSS}_{*}^{(i)}\coloneqq
\frac{1}{L}\sum_{\ell=1}^L(\hat{y}^{(i)}_{*}(t_{\ell})-\widetilde{Y}^{(i)}_{t_{\ell}})^2$
is the residual sum of squares based on the respective data fits
$\hat{y}_A^{(i)}$ and $\hat{y}_B^{(i)}$. Due to the additional
constraints, $\operatorname{RSS}_{B}$ is always larger than
$\operatorname{RSS}_{A}$.

The score is large either if the considered model does not fit the
data well or if the model's coefficients differ between the
experiments.  Models with a small score are predictive and invariant.
These are models that can be expected to perform well in novel,
previously unseen experiments.  Models that receive a small residual
sum of squares, e.g., because they overfit, do not necessarily have a
small score $T$. We will see in the experimental section that such
models may not generalize as well to unseen experiments.  This assumes
that~[\ref{eq:invariance}] holds (approximately) when including the
unseen experiments, too.  Naturally, if the unseen experiments may
differ arbitrarily from the training experiments, neither
CausalKinetiX nor any other method will be able to generalize between
experiments.

The score $T$ can be used to rank models. We prove mathematically that
with an increasing number of realizations and a finer time resolution,
truly invariant models will indeed receive a higher rank than
non-invariant models (Methods).

\paragraph{Stable variable ranking procedure.}
In biological applications, modeling kinetic systems is a common
approach that is used to generate hypotheses related to causal
relationships between specific variables, e.g., to find species
involved in the regulation of a target protein. The non-invariance
score can be used to construct a stability ranking of individual
variables. The ranking we propose is similar to Bayesian model
averaging (BMA) \cite{hoeting1999} and is based on how often each
variable appears in a top ranked model. The key advantage of such a
ranking is that it leverages information from several fits leading to
an informative ranking.  It also allows testing whether a specific
variable is ranked significantly higher than would be expected from a
random ranking (Methods). Moreover, we provide a theoretical guarantee
under which the top ranked variables are indeed contained in the true
causal model (Methods).

We compare the performance of this ranking on a simulation study based
on a biological ODE system from the \emph{BioModels Database}
\cite{li2010biomodels} which describes reactions in heated
monosaccharide-casein systems (SI~4).  (In fact, the example in
Figure~\ref{fig:illustrative_plot} comes from this model, with $Y$ and
$X^8$ being the concentrations of Melanoidin and AMP, respectively.)
We compare our method to dynamic Bayesian networks \cite{Koller09}
based on conditional independence (DBN-CondInd), gradient matching
(GM) and an integrated version thereof, which from now on we refer to
as difference matching (DM); the last two methods both use $\ell_1$
penalization for regularization (SI~4). Figure~\ref{fig:roc_plot}
\textbf{a} shows median receiver operator curves (ROCs) for recovering
the correct causal parents based on $500$ simulations for all four
methods. CausalKinetiX has the fastest recovery rate and, in more than
50\% of the cases, it is able to recover all causal parents without
making any false discoveries, see Figure~\ref{fig:roc_plot}
\textbf{c}.  The recovery of the causal parents as a function of noise
level is given in Figure~\ref{fig:roc_plot} \textbf{b}. On the x-axis,
we plot the relative size of the noise, where a value of $1$ implies
that the size of the noise is on the same level as the target dynamic
and the signal is very weak. For all noise levels, CausalKinetiX is
better at recovering the correct model than all competing
methods. More comparisons can be found in SI~4.

\begin{figure}[ht]
  \centering
  \includegraphics[width=\columnwidth]{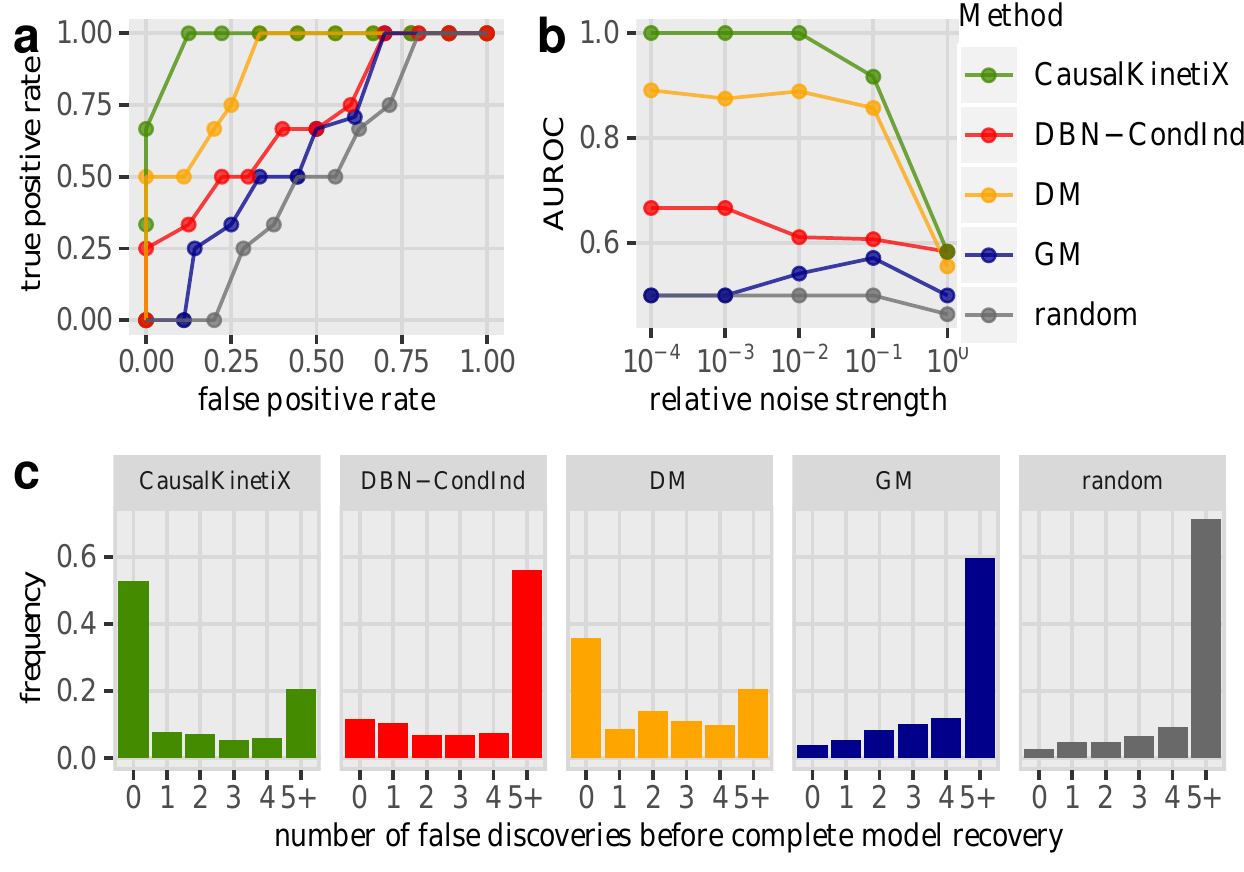}
  \caption{\textbf{a}, Median receiver operator curves (ROCs) for
    recovering the correct causal parents based on $500$
    simulations. CausalKinetiX has the fastest recovery rate.
    \textbf{b},~Median area under the receiver operator curve (AUROC)
    for different relative noise levels, CausalKinetiX outperforms all
    other methods. \textbf{c},~Number of recoveries before all correct
    variables enter the model. In the majority of cases, CausalKinetiX
    has no false discovery.}
  \label{fig:roc_plot}
\end{figure}

\paragraph{Numerical stability, scalability, and misspecified models.}
The method CausalKinetiX builds on standard statistical procedures,
such as smoothing, quadratic programming, and regression.  As opposed
to standard nonlinear least squares, it does not make use of any
numerical integration techniques. This avoids computational issues
that arise when the dynamics result in stiff systems
\cite{shampine2018numerical}. For each model, the runtime is less
than cubic in the sample size, which means that the key computational
cost is the exhaustive model search.  We propose to use a screening
step to reduce the number of possible models (SI~3~C), which allows
applying the method to systems with hundreds of variables (e.g.,
metabolic network below). Moreover, it does not require any
assumptions on the dynamics of the covariates.  In this sense, the
method is robust with respect to model misspecifications on the
covariates that can originate from hidden variables or misspecified
functional relationships.  Consistency of the proposed variable
ranking (SI~3), for example, only requires the model for the target
variable to be correctly specified.  Simulation experiments show that
this robustness can be observed empirically (SI~4). Finally, there is
empirical evidence that incorporating invariance can be interpreted as
regularization preventing overfitting, and that the method is robust
against correlated measurement error (SI~4).

\paragraph{Generalization in metabolic networks.}
We apply the proposed method to a real biological data set of a
metabolic network (Methods). Ion counts of one target variable and
cell concentrations of 411 metabolites are measured at 11 time points
across five different experimental conditions, each of which contains
three biological replicates. The experiments include both up- and
downshifts of the target variable, i.e., some of the conditions induce
an increase of the target trajectory, compared to its starting value,
other conditions induce a decrease.

We compare CausalKinetiX with the performance of nonlinear least
squares (NONLSQ). To make the methods feasible to such a large
dataset, we combine them with a screening based on DM.  We thus call
the method based on nonlinear least squares DM-NONLSQ; its parameters
are estimated using the software Data2Dynamics
(d2d)~\cite{data2dynamics}, which uses CVODES of the SUNDIALS suite
\cite{Hindmarsh2005} for numerical integration.
\begin{figure}
  \includegraphics[width=\columnwidth]{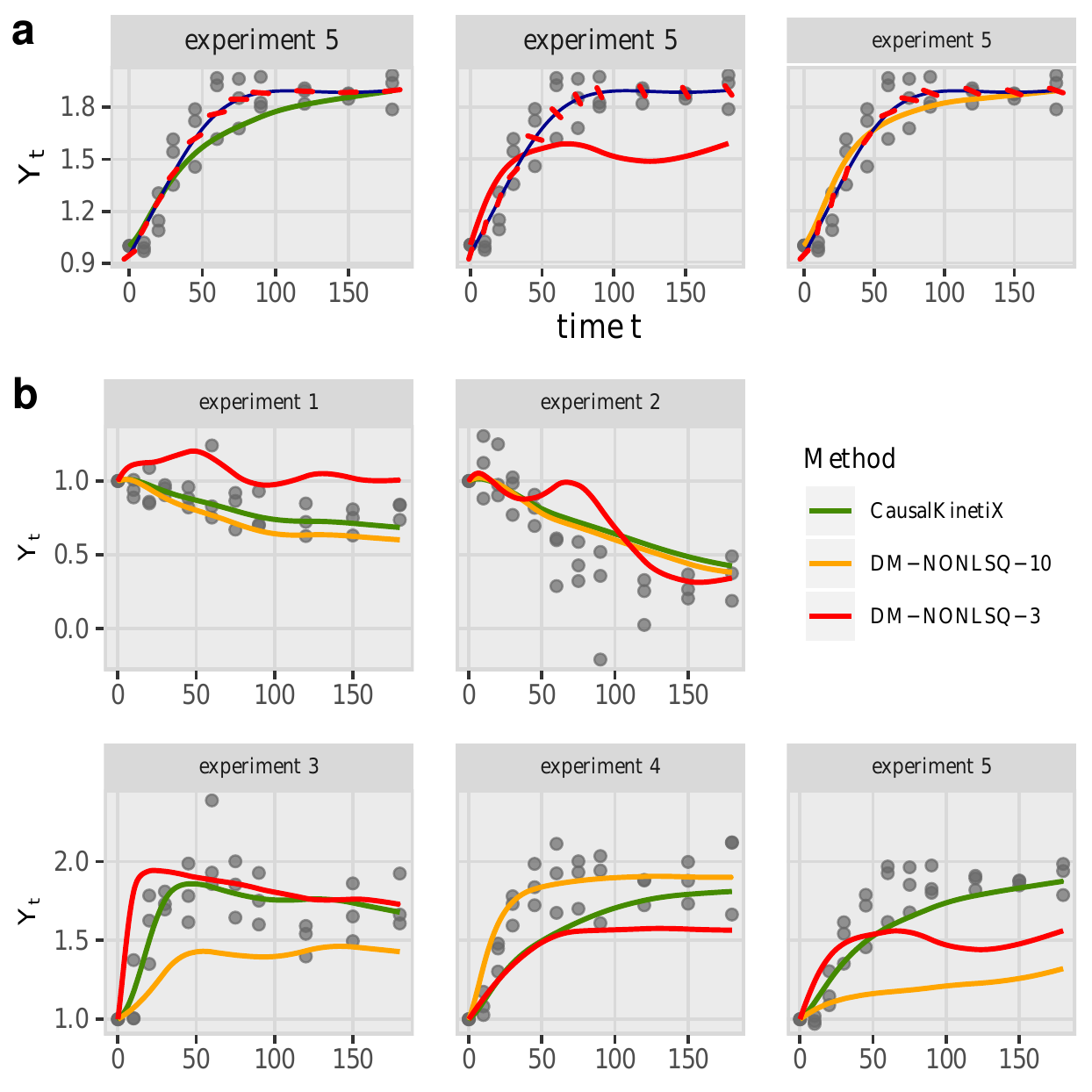}
  \caption{Metabolic network analysis. \textbf{a}, In-sample fit.  All
    five experiments are used for model selection and parameter
    estimation.  The plot shows model-based trajectories (numerically
    integrated) for experiment 5.  DM with 10 terms (right) fits the
    data better than CausalKinetiX (left) or DM with 3 terms (middle).
    \textbf{b},~Out-of-sample fit.  The plot shows the models' ability
    to generalize to new experiments.  Each plot shows model-based
    trajectories that are obtained when that experiment is not used
    for parameter estimation.  CausalKinetiX shows the best
    generalization performance.  The large DM model does not
    generalize well to unseen data due to overfitting.}
  \label{fig:real_data}
\end{figure}
Figure~\ref{fig:real_data} (a) shows the models' ability to describe
the dynamics in the observed experiments (in-sample
performance). DM-NONLSQ directly optimizes the residual sum of squares
(RSS) and therefore fits the data better than CausalKinetiX, which
takes into account stability, as well. The RSS for DM-NONLSQ-10 (based
on 10 terms) is lower ($0.83$) compared to CausalKinetiX ($0.96$)
averaged over all in-sample experiments. The plot contains
diagnostics for analyzing kinetic models.  The integrated dynamics are
shown jointly with a smoother (blue) through the observations (grey).
At the observed time points, the predicted derivatives (red lines) are
also shown using smoothed $X$ and $Y$ values. Model fits that explain
the data reasonably well in the sense that the integrated trajectory
is not far from the observations, may predict derivatives (small red
lines) on the smoother that do not agree well with the data: they fail
to explain the underlying dynamics.  For an example, see the
in-sample-fit of DM-NONLSQ-10 in Figure~\ref{fig:real_data}~(a). We
regard plotting a smoothing spline and the predicted derivatives for
the fitted values as a highly informative tool when analyzing models
for kinetic systems.

Pooling data across heterogeneous experiments, as, for example, done
by DM-NONLSQ, is already a natural regularization technique; if there
is sufficient heterogeneity in the data, the causal model is the only
invariant model. Finitely many experiments, however, only exhibit
limited heterogeneity and one can benefit from focusing specifically
on invariant models.  To compare the out-of-sample performance of the
methods, we consider the best ranked model from
Figure~\ref{fig:real_data}~\textbf{a}, hold out one experiment, fit
the parameters on the remaining four experiments and predict the
dynamics on the held out experiment. While DM-NONLSQ-10 explains the
observations well in-sample, it does not generalize to the held out
experiments. Neither does DM-NONLSQ-3 (based only $3$ terms) which
avoids overfitting. The average RSS of the held-out experiments are
$1.41$, $2.95$, and $3.45$ for CausalKinetiX, DM-NONLSQ-10, and
DM-NONLSQ-3, respectively; see
Figure~\ref{fig:real_data}~\textbf{b}. Another comparison, when the
methods are fully agnostic about one of the experimental conditions is
provided in SI~4.  By trading off invariance and predictability,
CausalKinetiX yields models that perform well on unseen experiments
that have not been used for parameter estimation.

\section*{Discussion}

In the natural sciences, differential equation modeling is a
widely-used tool for describing kinetic systems.  The discovery and
verification of such models from data has become a fundamental
challenge of science today.  Existing methods are often based on
standard model selection techniques or various types of sparsity
enforcing regularization; they usually focus on predictive
performance, and sometimes consider stability with respect to
resampling \citep{meinshausen2010stability, basu2018iterative}. In
this work, we develop novel methodology for structure search in
ordinary differential equation models. Exploiting ideas from causal
inference, we propose to rank models not only by their predictive
performance, but also by taking into account invariance, i.e., their
ability to predict well in different experimental settings.  Based on
this model ranking, we construct a ranking of individual variables
reflecting causal importance. It provides researchers with a list of
promising candidate variables that may be investigated further by
performing interventional experiments, for example.  Our ranking
methodology (both for models and variables) comes with theoretical
asymptotic guarantees and with a clear statement of the required
assumptions. Extensive experimental evaluation on simulated data shows
that our method is able to outperform current state-of-the art
methods.  Practical applicability of the procedure is further
illustrated on a not yet published biological data set. Our
implementation is readily available as an open-source R-package.

The principle of searching for invariant models opens up a promising
direction for learning causal structure from realistic, heterogeneous
datasets. The proposed CausalKinetiX framework is flexible in that it
can be combined with a wide range of dynamical models and any
parameter inference method. 
This is particularly relevant when the
differential equations depend nonlinearly on the parameters. Future
extensions may further include the extension to stochastic, partial
and delay differential equations and the transfer to other areas of
application like robotics, climate sciences, neuroscience and
economics.

\section*{Methods}
In this section, we provide additional details about data format,
methodology and the experiments. We further prove that our method is
statistically consistent, i.e., it infers correct models when the
sample size grows towards infinity.

\subsection*{Data format}
The data consist of $n$ repetitions of discrete time observations of
the $d$ variables $\B{X}$ (or their noisy version $\widetilde{\B{X}}$)
on the time grid $\B{t}=(t_1, \ldots, t_L)$. Each of the repetitions
is part of an experiment $\{e_1, \ldots, e_m\}$.  The experiments
should be thought of as different states of the system and may stem
from different interventions. One of the variables, $X^1$, say, is
considered as the target, we write $Y_t^{1,(i)} = X_t^{1,i}$.  We
further assume an underlying dynamical model (which then results in
various statistical dependencies between the variables and different
time points).

\subsection*{Mass-action kinetic models}
Many ODE based systems in biology are described by the law of
mass-action kinetics. The resulting ODE models are linear combinations
of various orders of interactions between the predictor variables
$\B{X}$. Assuming that the underlying ODE model of our target
$Y = X^1$ is described by a version of the mass-action kinetic law,
the derivative $\dYshort{t} := \frac{d}{dt} Y_t$ equals
\begin{equation}\label{eq:ydotmassaction}
  \dYshort{t} = 
  g_{\theta}(\B{X}_t) = \sum_{k=1}^d\theta_{0,k}X^{k}_t + \sum_{j=1}^d\sum_{k=j}^d\theta_{j,k}X^{j}_tX^{k}_t,
\end{equation}
where
$\theta=(\theta_{0,1},\dots,\theta_{0,d},\theta_{1,1},\theta_{1,2},\dots,\theta_{d,d})\in\R^{{d(d+1)}/{2}
  + d}$ is a parameter vector. We denote the subclass of all such
linear models of degree $1$ consisting of at most $p$ terms (i.e., $p$
non-zero terms in the parameter vector $\theta$) by
$\mathcal{M}^{\operatorname{Exhaustive}}_p$ and call these models
exhaustive linear models of degree $1$. A more detailed overview of
different collections of models is given in SI~2.

\subsection*{Model scoring}
For each model $M$ the score $T = T(M)$ is computed using the
following steps.  They include fitting two models to the data: one in
(M3) and the other one in (M4) and (M5).
\begin{compactenum}[(M1)]
\item\label{it:input} \emph{Input:} Data as described above and a
  collection $\mathcal{M} = \{M^1, M^2, \ldots, M^m\}$ of models over
  $d$ variables that is assumed to be rich enough to describe the
  desired kinetics.  In the case of mass-action kinetics, e.g.,
  $\mathcal{M} = \mathcal{M}^{\operatorname{Exhaustive}}_p$.
\item\label{it:modelscreening} \emph{Screening of predictor terms
    (optional):} For large systems, reduce the search space to less
  predictor terms. Essentially, any variable reduction technique based
  on the regression in step \ref{it:model2} can be used. We propose
  using $\ell_1$-penalized regression (SI~2).
\item\label{it:model1} \emph{Smooth target trajectories:} For each
  repetition $i\in\{1, \ldots, n\}$, smooth the (noisy) data
  $\widetilde{Y}^{(i)}_{t_1},\dots,\widetilde{Y}^{(i)}_{t_L}$ using a
  smoothing spline
  \begin{equation} \label{eq:smooth_trajectory} \hat{y}^{(i)}_{a} :=
    \argmin_{y\in\mathcal{H}_{C}}\, \sum_{\ell = 1}^L
    \big(\widetilde{Y}^{(i)}_{t_\ell} - y(t_\ell)\big)^2 + \lambda
    \int \ddot{y}(s)^2\, ds,
  \end{equation}
  where $\lambda$ is a regularization parameter, which in practice is
  chosen using cross-validation; $\mathcal{H}_{C}$ contains all smooth
  functions $[0,T] \rightarrow \mathbb{R}$, for which values and first
  two derivatives are bounded in absolute value by $C$.  We denote the
  resulting functions by $\hat{y}^{(i)}_{a}:[0,T]\rightarrow\R$,
  $i \in \{1, \ldots, n\}$. For each of the $m$ candidate target
  models $M\in\mathcal{M}$ perform steps
  \ref{it:model2}--\ref{it:model4}.
\item\label{it:model2} \emph{Fit candidate target model:} For every
  $i\in\{1,\dots,n\}$, find the function $g^i\in\mathcal{G}$ s.t.\
  \begin{equation}
    \label{eq:model_fit}
    \dY{k}{t} = g^i\big(\B{X}^{(k)}_t\big)
  \end{equation}
  is satisfied as well as possible for all $t\in\B{t}$ and for all
  repetitions $k$ belonging to a different experiment than repetition
  $i$. Below, we describe two procedures for this estimation step
  resulting in estimates $\hat{g}^i$. For each repetition
  $i\in\{1, \ldots, n\}$, this yields $L$ fitted values
  $\hat{g}^i(\widetilde{\B{X}}^{(i)}_{t_1}),\dots,\hat{g}^i(\widetilde{\B{X}}^{(i)}_{t_L})$.
  Leaving out the experiment of repetition $i$ ensures that only an
  invariant model leads to a good fit, as these predicted derivatives
  are only reasonable if the dynamics generalize across experiments.
\item\label{it:model3} \emph{Smooth target trajectories with
    derivative constraint:} Refit the target trajectories for each
  repetition $i\in\{1, \ldots, n\}$ by constraining the smoother to
  these derivatives, i.e., find the functions
  $\hat{y}^{(i)}_{b}:[0,T]\rightarrow\R$ which minimize
  \begin{align}
    \label{eq:smooth_trajectory2}
    \begin{split}
      \hat{y}^{(i)}_{b}\coloneqq &\argmin_{y\in\mathcal{H}_C}
      \sum_{\ell = 1}^L \big(\widetilde{Y}^{(i)}_{t_\ell} - y(t_\ell)\big)^2 + \lambda \int \ddot{y}(s)^2\, ds,\\
      &\text{such that}\quad \dot{y}(t_{\ell}) =
      \hat{g}^i(\widetilde{\B{X}}^{(i)}_{t_{\ell}}) \text{ for all }
      {\ell} = 1, \ldots, L.
    \end{split}
  \end{align} 
\item\label{it:model4} \emph{Compute score:} If the candidate model
  $M$ allows for an invariant fit, the fitted values
  $\hat{g}^i(\widetilde{\B{X}}^{(i)}_1),\dots,\hat{g}^i(\widetilde{\B{X}}^{(i)}_L)$
  computed in \ref{it:model2} will be reasonable estimates of the
  derivatives $\dY{i}{t_1},\dots,\dY{i}{t_L}$. This, in particular,
  means that the constrained fit in \ref{it:model3} will be good,
  too. If, conversely, the candidate model $M$ does not allow for an
  invariant fit, the estimates produced in \ref{it:model2} will be
  poor. We thus score the models by comparing the fitted trajectories
  $\hat{y}^{(i)}_{a}$ and $\hat{y}^{(i)}_{b}$ across repetitions as
  follows
  \begin{equation}
    \label{eq:scoreTS2}
    T(M)\coloneqq\frac{1}{n}\sum_{i=1}^n\left[\operatorname{RSS}_{b}^{(i)}-\operatorname{RSS}_{a}^{(i)}\right]
    /\left[\operatorname{RSS}_{a}^{(i)}\right],
  \end{equation}
  where
  $\operatorname{RSS}_{*}^{(i)}\coloneqq
  \frac{1}{L}\sum_{\ell=1}^L\big(\hat{y}^{(i)}_{*}(t_{\ell})-\widetilde{Y}^{(i)}_{t_{\ell}}\big)^2$.
  If there is a reason to believe that the observational noise has
  similar variances across experiments the division in the score can
  be removed to improve numerical stability.
\end{compactenum}
The scores $T(M)$ induce a ranking on the models $M\in\mathcal{M}$,
where models with a smaller score have more stable fits than models
with larger scores. Below, we show consistency of the model ranking.

\subsection*{Variable ranking}
The following method ranks individual variables according to their
importance in obtaining invariant models.  We score all models in the
collection $\mathcal{M}$ based on their stability score $T(M)$ (see
[\ref{eq:scoreTS2}]) and then rank the variables according to how many
of the top ranked models depend on them. This can be summarized in the
following steps.
\begin{compactenum}[(V1)]
\item \emph{Input:} same as in~\ref{it:input}.
\item\label{it:var1} \emph{Compute stabilities:} For each model
  $M\in\mathcal{M}$ compute the non-invariance score $T(M)$ as
  described in [\ref{eq:scoreTS2}]. Denote by $M_{(1)},\dots,M_{(K)}$
  the $K$ top ranked models, where $K\in\N$ is chosen to be the number
  of expected invariant models in $\mathcal{M}$.
\item\label{it:var2} \emph{Score variables:} For each variable
  $j\in\{1,\dots,d\}$, compute the following score
  \begin{equation}
    \label{eq:variable_score}
    s_j\coloneqq\frac{|\{k\in\{1,\dots,K\}\,\vert\,
      M_{(k)}\text{ depends on }j \}|}{K}.
  \end{equation}
  Here, ``$M_{(k)}\text{ depends on }j$'' means that the variable $j$
  has an effect in the model $M_{(k)}$ (SI~2). If there are $K$
  invariant models, the above score represents the fraction of
  invariant models that depend on variable $j$. It equals $1$ for
  variable $j$ if and only if every invariant model depends on that
  variable.
\end{compactenum}
These scores $s^j$ are similar to what is referred to as inclusion
probabilities in Bayesian model averaging \cite{hoeting1999}. Below,
we construct hypothesis tests for the test whether a score is
significantly higher than if the models are ranked randomly.
  
A natural choice for the parameter $K$ should equal the number of
invariant models. This may be unknown in practice, but our empirical
studies found that the method's results are robust to the choice of
$K$.  In particular, we propose to choose a small $K$ to ensure that
it is smaller than the number of invariant models (SI~2).

\subsection*{Fitting target models (M4)}
In step \ref{it:model2}, for every $i\in\{1,\dots,n\}$, we perform
a regression to find a function $g^i\in M$ such that
[\ref{eq:model_fit}] is optimized across all repetitions $k$ belonging
to different experiments than $i$. This task is difficult for two
reasons. First, the derivative values $\dY{k}{t}$ are not directly
observed and, second, even if we had access to (noisy and unbiased
versions of) $\dY{k}{t}$, we are dealing with an error-in-variables
problem.  Nevertheless, for certain model classes it is possible to
perform this estimation consistently and since the predictions are
only used as constraints, one expects estimates to work as long as
they preserve the general dynamics. We propose two procedures: (i) a
general method that can be adapted to many model classes and (ii) a
method that performs better but assumes the target model to be linear
in parameters.

The first procedure estimates the derivatives and then performs a
regression based on the model class under consideration.  That is, one
fits the smoother $y_{a}^{(k)}$ from \ref{it:model1} and then
computes its derivatives. When using the first derivative of a
smoothing spline it has been argued that the penalty term in
[\ref{eq:smooth_trajectory}] contains the third rather than the second
derivative of $y$ \cite{RamsayBook}. We then regress the estimated
derivatives on the data.  As a regression procedure, one can use
ordinary least squares if the models are linear or random forests, for
example, if the functions are highly nonlinear.

The second method works for models that are linear in the parameters,
i.e., for models that consist of functions of the form
$ g_{\theta}(\B{x})=\sum_{j=1}^{p}\theta_jg_j(\B{x})$, where the
functions $g_1,\dots,g_p$ are known transformations.  This yields
\begin{equation*}
  Y_{t_{\ell}}^{(k)}-Y_{t_{\ell-1}}^{(k)}=\sum_{j=1}^p\theta_j\int_{t_{\ell-1}}^{t_{\ell}}g_j(\widetilde{\B{X}}_s^{(k)})ds.
\end{equation*}
This approach does not require estimation of the derivatives of $Y$
but instead uses the integral of the predictors. It is well-known
that integration is numerically more stable than differentiation \cite{chen2017network}.
Often, it suffices to approximate the integrals using the trapezoidal
rule, i.e.,
\begin{equation*}
  \int_{t_{\ell-1}}^{t_{\ell}}g_j(\widetilde{\B{X}}_s^{(k)})ds\approx\frac{g_j(\widetilde{\B{X}}_{t_{\ell}}^{(k)})+g_j(\widetilde{\B{X}}_{t_{\ell-1}}^{(k)})}{2}(t_{\ell}-t_{\ell-1}),
\end{equation*}
since the noise in the predictors is often stronger than the error
in this approximation. The resulting bias is then negligible.

As mentioned above, most regression procedures have difficulties
with errors-in-variables and therefore return biased
results. Sometimes it can therefore be helpful to use smoothing or
averaging of the predictors to reduce the impact of this
problem. Our procedure is flexible in the sense that other fitting
procedures, e.g., inspired by \cite{ramsay2007parameter, Oates2014,
  calderhead2009accelerating}, could be applied, too.

\subsection*{Experiment on metabolic network}
Defining the auxiliary variable $Z_t := 2 - Y_t$, we expect that the
target species $Y_t$ and $Z_t$ are tightly related:
$Y_t \rightleftharpoons Z_t$, i.e., $Y_t$ is formed into $Z_t$ and
vice versa.  We therefore expect models of the form
\begin{align*}
  \dot{Y}_t &= \theta_1 Z_t X_t^j X_t^k + \theta_2 Z_t X_t^p X_t^q -
              \theta_3 Y_t X_t^r X_t^s\\
  \dot{Z}_t &= - \theta_1 Z_t X_t^j X_t^k - \theta_2 Z_t X_t^p X_t^q + \theta_3 Y_t X_t^r X_t^s,
\end{align*}
where $j,k,p,q,r,s \in \{1, \ldots, 411\}$ and
$\theta_1, \theta_2, \theta_3 \geq 0$.  By the conservation of mass
both target equations mirror themselves, which makes it sufficient to
only learn the model for $Y_t$. More precisely, we use the model class
consisting of three term models of the form $Z_t X_t^j X_t^k$,
$Y_t X_t^j X_t^k$, $Z_t X_t^j$, $Y_t X_t^j$, $Z_t$, or $Y_t$, where
the sign of the parameter is constrained to being positive or negative
depending on whether the term contains $Z_t$ or $Y_t$, respectively.
We constrain ourselves to three terms, as we found this to be the
smallest number of terms that results in sufficiently good in-sample
fits.  Given sufficient computational resources, one may include more
terms, too, of course. The sign constraint can be incorporated into
our method by performing a constrained least squares fit instead of
OLS in step~\ref{it:model2}. This constrained regression can then
be solved efficiently by a quadratic program with linear constraints.

As the biological data is high-dimensional, our method first screens
down to $100$ terms and then searches over all models consisting of
$3$ terms. To get more accurate fits of the dynamics, we pool and
smooth over the three biological replicates and only work with the
smoothed data.

\subsection*{Significance of variable ranking}
We can test whether a given score $s_j$, defined in
[\ref{eq:variable_score}], is significant in the sense that the number
of top ranked models depending on variable $j$ is higher than one
would expect if the ranking of all models in $\mathcal{M}$ was
random. More precisely, consider the null hypothesis
\begin{equation*}
  H_0:\quad
  \begin{array}{l}
    \text{the top ranked models } M_{(1)},\dots,M_{(K)} \\
    \text{are drawn uniformly from all models in
    }\mathcal{M}.
  \end{array}  
\end{equation*}
It is straightforward to show that under $H_0$ it holds that
$K\cdot s_j$ follows a hypergeometric distribution with parameters
$\abs{\mathcal{M}}$ (population size),
$\abs{\{M\in\mathcal{M}\,\vert\, M\text{ depends on } j\}}$ (number
success in population) and $K$ (number of draws). For each variable we
can hence compute a $p$-value to assess whether it is significantly
important for stability.

\subsection*{Theoretical consistency guarantees}
We prove that both the model ranking and the proposed variable ranking
satisfy theoretical consistency guarantees. More precisely, under
suitable conditions and in the asymptotic setting where both the
number of realizations $n$ and the number of time points $L$ converge
to infinity, every invariant model will be ranked higher than all
non-invariant models. Given sufficient heterogeneity of the
experiments it additionally holds that the variable score $s_j$
defined in [\ref{eq:variable_score}] tends to one if and only if
$j\in S^*$, see~[\ref{eq:invariance}]. Details and proofs are
provided in SI~3.
  
\subsection*{Relation to causality}
Causal models enable us to model a system's behavior not only in an
observational state, but also under interventions.  There are various
ways to define causal models \citep{Pearl2009, Imbens2015}.  The
concept of structural causal models is well-suited for the setting of
this paper and its formalism can be adapted to the case of dynamical
models (SI~1).  If the experimental settings correspond to different
interventions on variables other than $Y$, choosing $S^*$ as the set
of causal parents of $Y$ satisfies [\ref{eq:invariance}].  If the
settings are sufficiently informative, no other set
satisfies~[\ref{eq:invariance}].

\subsection*{Code and data availability}
Well-documented code is available as an open source R package on 
CRAN (\url{https://cran.r-project.org/web/packages/CausalKinetiX}). It 
includes the ODE models used in the simulations, e.g., the Maillard 
reaction. More details, e.g., about reproducing all figures and a ported 
python version of the package are available at 
\url{http://www.causalkinetix.org}.

\section*{Acknowledgements}
We thank R.~Loewith, B.~Ryback, U.~Sauer, E.~M.~Sayas and J.~Stelling
for providing the biological data set as well as helpful biological
insights. We further thank N.~R.~Hansen and N.~Meinshausen for helpful
discussions, and K.~Ishikawa and A.~Orvieto for their help with Python
and d2d. This research was partially supported by the Max Planck ETH
Center for Learning Systems and the SystemsX.ch project SignalX.
J.P.\ was supported by a research grant (18968) from VILLUM
FONDEN. N.P. was partially supported by the European Research
Commission grant 786461 CausalStats - ERC-2017-ADG.

\pagebreak
\bibliographystyle{plainnat}
\bibliography{references}

\begin{thebibliography}{62}
\providecommand{\natexlab}[1]{#1}
\providecommand{\url}[1]{\texttt{#1}}
\expandafter\ifx\csname urlstyle\endcsname\relax
  \providecommand{\doi}[1]{doi: #1}\else
  \providecommand{\doi}{doi: \begingroup \urlstyle{rm}\Url}\fi

\bibitem[Aldrich(1989)]{Aldrich1989}
J.~Aldrich.
\newblock Autonomy.
\newblock \emph{Oxford Economic Papers}, 41:\penalty0 15--34, 1989.

\bibitem[Arteaga and Engelman(2014)]{arteaga2014erbb}
C.~L. Arteaga and J.~A. Engelman.
\newblock Erbb receptors: from oncogene discovery to basic science to
  mechanism-based cancer therapeutics.
\newblock \emph{Cancer cell}, 25\penalty0 (3):\penalty0 282--303, 2014.

\bibitem[Bard(1974)]{Bard1974}
Y.~Bard.
\newblock \emph{Nonlinear Parameter Estimation}.
\newblock Academic Press, New York, NY, 1974.

\bibitem[Bareinboim and Pearl(2016)]{Bareinboim2016pnas}
E.~Bareinboim and J.~Pearl.
\newblock Causal inference and the data-fusion problem.
\newblock \emph{Proceedings of the National Academy of Sciences}, 113\penalty0
  (27):\penalty0 7345--7352, 2016.

\bibitem[Basu et~al.(2018)Basu, Kumbier, Brown, and Yu]{basu2018iterative}
S.~Basu, K.~Kumbier, J.~B. Brown, and B.~Yu.
\newblock Iterative random forests to discover predictive and stable high-order
  interactions.
\newblock \emph{Proceedings of the National Academy of Sciences}, 115\penalty0
  (8):\penalty0 1943--1948, 2018.

\bibitem[Benson(1979)]{Benson79}
M.~Benson.
\newblock Parameter fitting in dynamic models.
\newblock \emph{Ecological Modelling}, 6:\penalty0 97--115, 1979.

\bibitem[Blom and Mooij(2018)]{Blom2018}
T.~Blom and J.~M. Mooij.
\newblock Generalized structural causal models.
\newblock \emph{arXiv preprint arXiv:1805.06539}, 2018.

\bibitem[Boninsegna et~al.(2018)Boninsegna, N{\"u}ske, and
  Clementi]{boninsegna2018sparse}
L.~Boninsegna, F.~N{\"u}ske, and C.~Clementi.
\newblock Sparse learning of stochastic dynamical equations.
\newblock \emph{The Journal of Chemical Physics}, 148\penalty0 (24):\penalty0
  241723, 2018.

\bibitem[Brands and van Boekel(2002)]{brands2002}
C.~M.~J. Brands and M.~A. J.~S. van Boekel.
\newblock Kinetic modeling of reactions in heated monosaccharide-casein
  systems.
\newblock \emph{Journal of agricultural and food chemistry}, 50\penalty0
  (23):\penalty0 6725--6739, 2002.

\bibitem[Brunton et~al.(2016)Brunton, Proctor, and
  Kutz]{brunton2016discovering}
S.~L. Brunton, J.~L. Proctor, and J.~N. Kutz.
\newblock Discovering governing equations from data by sparse identification of
  nonlinear dynamical systems.
\newblock \emph{Proceedings of the National Academy of Sciences}, 113\penalty0
  (15):\penalty0 3932--3937, 2016.

\bibitem[Calderhead et~al.(2009)Calderhead, Girolami, and
  Lawrence]{calderhead2009accelerating}
B.~Calderhead, M.~Girolami, and N.~D. Lawrence.
\newblock Accelerating {B}ayesian inference over nonlinear differential
  equations with {G}aussian processes.
\newblock In \emph{Advances in neural information processing systems {(NIPS)}},
  pages 217--224, 2009.

\bibitem[Cand{\`e}s(2006)]{candes2006}
E.~Cand{\`e}s.
\newblock Compressive sampling.
\newblock In \emph{Proceedings of the international congress of
  mathematicians}, volume~3, pages 1433--1452. Madrid, Spain, 2006.

\bibitem[Chen et~al.(2017)Chen, Shojaie, and Witten]{chen2017network}
S.~Chen, A.~Shojaie, and D.~M. Witten.
\newblock Network reconstruction from high-dimensional ordinary differential
  equations.
\newblock \emph{Journal of the American Statistical Association}, pages 1--11,
  2017.

\bibitem[Chen et~al.(1999)Chen, He, and Church]{chen1999modeling}
T.~Chen, H.~He, and G.~Church.
\newblock Modeling gene expression with differential equations.
\newblock In \emph{Biocomputing'99}, pages 29--40. World Scientific, 1999.

\bibitem[Chickering(2002)]{Chickering2002}
D.~M. Chickering.
\newblock Optimal structure identification with greedy search.
\newblock \emph{Journal of Machine Learning Research}, 3:\penalty0 507--554,
  2002.

\bibitem[Donoho(2006)]{donoho2006}
D.~Donoho.
\newblock Compressed sensing.
\newblock \emph{IEEE Transactions on information theory}, 52\penalty0
  (4):\penalty0 1289--1306, 2006.

\bibitem[Eaton and Murphy(2007)]{Eaton2007}
D.~Eaton and K.~P. Murphy.
\newblock Exact {B}ayesian structure learning from uncertain interventions.
\newblock In \emph{Proceedings of the 11th International Conference on
  Artificial Intelligence and Statistics ({AISTATS})}, pages 107--114, 2007.

\bibitem[Fey et~al.(2015)Fey, Halasz, Dreidax, Kennedy, Hastings, Rauch, Munoz,
  Pilkington, Fischer, Westermann, Koch, and Kholodenko]{fey2015signaling}
D.~Fey, M.~Halasz, D.~Dreidax, S.~P. Kennedy, J.~F. Hastings, N.~Rauch, A.~G.
  Munoz, R.~Pilkington, M.~Fischer, F.~Westermann, W.~Koch, and B.~N.
  Kholodenko.
\newblock Signaling pathway models as biomarkers: Patient-specific simulations
  of jnk activity predict the survival of neuroblastoma patients.
\newblock \emph{Sci. Signal.}, 8\penalty0 (408):\penalty0 ra130--ra130, 2015.

\bibitem[Friston et~al.(2003)Friston, Harrison, and Penny]{friston2003dynamic}
K.~Friston, L.~Harrison, and W.~Penny.
\newblock Dynamic causal modelling.
\newblock \emph{Neuroimage}, 19\penalty0 (4):\penalty0 1273--1302, 2003.

\bibitem[Haavelmo(1944)]{Haavelmo1944}
T.~Haavelmo.
\newblock The probability approach in econometrics.
\newblock \emph{Econometrica}, 12:\penalty0 S1--S115 (supplement), 1944.

\bibitem[Hass et~al.(2017)Hass, Masson, Wohlgemuth, Paragas, Allen, Sevecka,
  Pace, Timmer, Stelling, MacBeath, Schoeberl, and Raue]{hass2017predicting}
H.~Hass, K.~Masson, S.~Wohlgemuth, V.~Paragas, J.~E. Allen, M.~Sevecka,
  E.~Pace, J.~Timmer, J.~Stelling, G.~MacBeath, B.~Schoeberl, and A.~Raue.
\newblock Predicting ligand-dependent tumors from multi-dimensional signaling
  features.
\newblock \emph{NPJ Systems Biology and Applications}, 3\penalty0 (1):\penalty0
  27, 2017.

\bibitem[Hindmarsh et~al.(2005)Hindmarsh, Brown, Grant, Lee, Serban, Shumaker,
  and Woodward]{Hindmarsh2005}
A.~C. Hindmarsh, P.~N. Brown, K.~E. Grant, S.~L. Lee, R.~Serban, D.~E.
  Shumaker, and C.~S. Woodward.
\newblock {SUNDIALS: Suite of nonlinear and differential/algebraic equation
  solvers}.
\newblock \emph{ACM Transactions on Mathematical Software}, 31\penalty0
  (3):\penalty0 363--396, 2005.

\bibitem[Hoeting et~al.(1999)Hoeting, Madigan, Raftery, and
  Volinsky]{hoeting1999}
J.~Hoeting, D.~Madigan, A.~Raftery, and C.~Volinsky.
\newblock Bayesian model averaging: a tutorial.
\newblock \emph{Statistical science}, pages 382--401, 1999.

\bibitem[Imbens and Rubin(2015)]{Imbens2015}
G.~W. Imbens and D.~B. Rubin.
\newblock \emph{Causal Inference for Statistics, Social, and Biomedical
  Sciences: An Introduction}.
\newblock Cambridge University Press, New York, NY, 2015.

\bibitem[Koller and Friedman(2009)]{Koller09}
D.~Koller and N.~Friedman.
\newblock \emph{Probabilistic Graphical Models: Principles and Techniques}.
\newblock MIT Press, 2009.

\bibitem[Li et~al.(2010)Li, Donizelli, Rodriguez, Dharuri, Endler, Chelliah,
  Li, He, Henry, Stefan, Snoep, Hucka, Le~Nov{\`e}re, and
  Laibe]{li2010biomodels}
C.~Li, M.~Donizelli, N.~Rodriguez, H.~Dharuri, L.~Endler, V.~Chelliah, L.~Li,
  E.~He, A.~Henry, M.~I. Stefan, J.~L. Snoep, M.~Hucka, N.~Le~Nov{\`e}re, and
  C.~Laibe.
\newblock Biomodels database: An enhanced, curated and annotated resource for
  published quantitative kinetic models.
\newblock \emph{BMC systems biology}, 4\penalty0 (1):\penalty0 92, 2010.

\bibitem[Lim and Yu(2016)]{lim2016}
C.~Lim and B.~Yu.
\newblock Estimation stability with cross-validation {(ESCV)}.
\newblock \emph{Journal of Computational and Graphical Statistics}, 25\penalty0
  (2):\penalty0 464--492, 2016.

\bibitem[Maillard(1912)]{maillard1912}
L.~C. Maillard.
\newblock Action des acides amines sur les sucres; formation des melanoidines
  par voie methodique.
\newblock \emph{Comptes rendus de l'Acad{\'e}mie des Sciences}, 154:\penalty0
  66--68, 1912.

\bibitem[Meinshausen and B{\"u}hlmann(2010)]{meinshausen2010stability}
N.~Meinshausen and P.~B{\"u}hlmann.
\newblock Stability selection.
\newblock \emph{Journal of the Royal Statistical Society: Series B},
  72\penalty0 (4):\penalty0 417--473, 2010.

\bibitem[Meinshausen et~al.(2016)Meinshausen, Hauser, Mooij, Peters, Versteeg,
  and B\"uhlmann]{Meinshausen2016pnas}
N.~Meinshausen, A.~Hauser, J.~Mooij, J.~Peters, P.~Versteeg, and P.~B\"uhlmann.
\newblock Methods for causal inference from gene perturbation experiments and
  validation.
\newblock \emph{Proceedings of the National Academy of Sciences}, 113\penalty0
  (27):\penalty0 7361--7368, 2016.

\bibitem[Meyer(1966)]{meyer1966}
P.~Meyer.
\newblock \emph{Probability and potentials}.
\newblock Blaisdell Publishing Company, 1966.

\bibitem[Mikkelsen and Hansen(2017)]{mikkelsen2017}
F.~V. Mikkelsen and N.~R. Hansen.
\newblock Learning large scale ordinary differential equation systems.
\newblock \emph{arXiv preprint arXiv:1710.09308}, 2017.

\bibitem[Mooij et~al.(2013)Mooij, Janzing, and Sch{\"o}lkopf]{MooijJS2013}
J.~M. Mooij, D.~Janzing, and B.~Sch{\"o}lkopf.
\newblock From ordinary differential equations to structural causal models: the
  deterministic case.
\newblock In \emph{Proceedings of the 29th Conference Annual Conference on
  Uncertainty in Artificial Intelligence ({UAI})}, pages 440--448, Corvallis,
  Oregon, USA, 2013. AUAI Press.

\bibitem[Murray(2017)]{murray2017mathematical}
R.~Murray.
\newblock \emph{A mathematical introduction to robotic manipulation}.
\newblock CRC press, 2017.

\bibitem[Oates et~al.(2014)Oates, Dondelinger, Bayani, Korkola, Gray, and
  Mukherjee]{Oates2014}
C.~J. Oates, F.~Dondelinger, N.~Bayani, J.~Korkola, J.~W. Gray, and
  S.~Mukherjee.
\newblock Causal network inference using biochemical kinetics.
\newblock \emph{Bioinformatics}, 30\penalty0 (17):\penalty0 i468--i474, 2014.

\bibitem[Ogunnaike and Ray(1994)]{ogunnaike1994process}
B.~Ogunnaike and W.~Ray.
\newblock \emph{Process dynamics, modeling, and control}, volume~1.
\newblock Oxford University Press New York, 1994.

\bibitem[Pearl(2009)]{Pearl2009}
J.~Pearl.
\newblock \emph{Causality: Models, Reasoning, and Inference}.
\newblock Cambridge University Press, New York, USA, 2nd edition, 2009.

\bibitem[Pearl and Mackenzie(2018)]{PearlMackenzie18}
J.~Pearl and D.~Mackenzie.
\newblock \emph{The Book of Why}.
\newblock Basic Books, New York, USA, 2018.

\bibitem[Peters et~al.(2016)Peters, B\"uhlmann, and
  Meinshausen]{Peters2016jrssb}
J.~Peters, P.~B\"uhlmann, and N.~Meinshausen.
\newblock Causal inference using invariant prediction: identification and
  confidence intervals.
\newblock \emph{Journal of the Royal Statistical Society, Series B (with
  discussion)}, 78\penalty0 (5):\penalty0 947--1012, 2016.

\bibitem[Peters et~al.(2017)Peters, Janzing, and Sch\"olkopf]{Peters2017book}
J.~Peters, D.~Janzing, and B.~Sch\"olkopf.
\newblock \emph{Elements of Causal Inference: Foundations and Learning
  Algorithms}.
\newblock MIT Press, Cambridge, MA, USA, 2017.

\bibitem[Ramsay and Silverman(2005)]{RamsayBook}
J.~O. Ramsay and B.~W. Silverman.
\newblock \emph{Functional Data Analysis}.
\newblock Springer, New York, NY, 2005.

\bibitem[Ramsay et~al.(2007)Ramsay, Hooker, Campbell, and
  Cao]{ramsay2007parameter}
J.~O. Ramsay, G.~Hooker, D.~Campbell, and J.~Cao.
\newblock Parameter estimation for differential equations: a generalized
  smoothing approach.
\newblock \emph{Journal of the Royal Statistical Society: Series B (Statistical
  Methodology)}, 69\penalty0 (5):\penalty0 741--796, 2007.

\bibitem[Raue et~al.(2015)Raue, Steiert, Schelker, Kreutz, Maiwald, Hass,
  Vanlier, T{\"o}nsing, Adlung, Engesser, Mader, Heinemann, Hasenauer,
  Schilling, H\"ofer, Klipp, Theis, Klingm\"uller, Sch\"oberl, and
  Timmer]{data2dynamics}
A.~Raue, B.~Steiert, M.~Schelker, C.~Kreutz, T.~Maiwald, H.~Hass, J.~Vanlier,
  C.~T{\"o}nsing, L.~Adlung, R.~Engesser, W.~Mader, T.~Heinemann, J.~Hasenauer,
  M.~Schilling, T.~H\"ofer, E.~Klipp, F.~Theis, U.~Klingm\"uller,
  B.~Sch\"oberl, and J.~Timmer.
\newblock Data2dynamics: a modeling environment tailored to parameter
  estimation in dynamical systems.
\newblock \emph{Bioinformatics}, 31\penalty0 (21):\penalty0 3558--3560, 2015.

\bibitem[Regev et~al.(2017)Regev, Teichmann, Lander, Amit, Benoist, Birney,
  Bodenmiller, Campbell, Carninci, Clatworthy, Clevers, Deplancke, Dunham,
  Eberwine, Eils, Enard, Farmer, Fugger, G{\"{o}}ttgens, Hacohen, Haniffa,
  Hemberg, Kim, Klenerman, Kriegstein, Lein, Linnarsson, Lundberg, Lundberg,
  Majumder, Marioni, Merad, Mhlanga, Nawijn, Netea, Nolan, Pe'er, Phillipakis,
  Ponting, Quake, Reik, Rozenblatt-Rosen, Sanes, Satija, Schumacher, Shalek,
  Shapiro, Sharma, Shin, Stegle, Stratton, Stubbington, Theis, Uhlen, van
  Oudenaarden, Wagner, Watt, Weissman, Wold, Xavier, Yosef, and {H}uman {C}ell
  {A}tlas~{M}eeting participants]{regev2017science}
A.~Regev, S.~Teichmann, E.~Lander, I.~Amit, C.~Benoist, E.~Birney,
  B.~Bodenmiller, P.~Campbell, P.~Carninci, M.~Clatworthy, H.~Clevers,
  B.~Deplancke, I.~Dunham, J.~Eberwine, R.~Eils, W.~Enard, A.~Farmer,
  L.~Fugger, B.~G{\"{o}}ttgens, N.~Hacohen, M.~Haniffa, M.~Hemberg, S.~Kim,
  P.~Klenerman, A.~Kriegstein, E.~Lein, S.~Linnarsson, E.~Lundberg,
  J.~Lundberg, P.~Majumder, J.~Marioni, M.~Merad, M.~Mhlanga, M.~Nawijn,
  M.~Netea, G.~Nolan, D.~Pe'er, A.~Phillipakis, C.~Ponting, S.~Quake, W.~Reik,
  O.~Rozenblatt-Rosen, J.~Sanes, R.~Satija, T.~Schumacher, A.~Shalek,
  E.~Shapiro, P.~Sharma, J.~Shin, O.~Stegle, M.~Stratton, M.~Stubbington,
  F.~Theis, M.~Uhlen, A.~van Oudenaarden, A.~Wagner, F.~Watt, J.~Weissman,
  B.~Wold, R.~Xavier, N.~Yosef, and {H}uman {C}ell {A}tlas~{M}eeting
  participants.
\newblock Science forum: the human cell atlas.
\newblock \emph{Elife}, 6:\penalty0 e27041, 2017.

\bibitem[Ren et~al.(2003)Ren, Fu, Jiang, Zeng, Miao, Xu, Zhang, Xiong, Lu, Lu,
  Jiang, Jia, Tu, Jiang, Gu, Zhang, Cai, Sheng, Yin, Zhang, Zhu, Wan, Huang,
  Qian, Wang, Ma, Yao, Shen, Qiang, Xia, Guo, Danchin, Girons, Somerville, Wen,
  Shi, Chen, Xu, and Zhao]{ren2003unique}
S-X. Ren, G.~Fu, X-G. Jiang, R.~Zeng, Y-G. Miao, H.~Xu, Y-X. Zhang, H.~Xiong,
  G.~Lu, L-F. Lu, H-Q. Jiang, J.~Jia, Y-F. Tu, J-X. Jiang, W-Y. Gu, Y-Q. Zhang,
  Z.~Cai, H-H. Sheng, H-F. Yin, Y.~Zhang, G-F. Zhu, M.~Wan, H-L. Huang,
  Z.~Qian, S-Y. Wang, W.~Ma, Z-J. Yao, Y.~Shen, B-Q. Qiang, Q-C. Xia, X-K. Guo,
  A.~Danchin, S.~Girons, R.~Somerville, Y-M. Wen, M-H. Shi, Z.~Chen, J-G. Xu,
  and G-P. Zhao.
\newblock Unique physiological and pathogenic features of leptospira
  interrogans revealed by whole-genome sequencing.
\newblock \emph{Nature}, 422\penalty0 (6934):\penalty0 888, 2003.

\bibitem[Rozman et~al.(2018)Rozman, Rathkolb, Oestereicher, Sch{\"{u}}tt,
  Ravindranath, Leuchtenberger, Sharma, Kistler, Willersh{\"{a}}user, Brommage,
  Meehan, Mason, Haselimashhadi, Consortium, Hough, Mallon, Wells, Santos,
  Lelliott, White, Sorg, Champy, Bower, Reynolds, Flenniken, Murray, Nutter,
  Svenson, West, Tocchini-Valentini, Beaudet, Bosch, Braun, Dobbie, Gao,
  Herault, Moshiri, Moore, Lloyd, McKerlie, Masuya, Tanaka, Flicek, Parkinson,
  Sedlacek, Seong, Wang, Moore, Brown, Tsch{\"{o}}p, Wurst, Klingenspor, Wolf,
  Beckers, Machicao, Peter, Staiger, H{\"{a}}ring, Grallert, Campillos, Maier,
  Fuchs, Gailus-Durner, Werner, and Hrabe~de Angelis]{rozman2018identification}
J.~Rozman, B.~Rathkolb, M.~Oestereicher, C.~Sch{\"{u}}tt, A.~Ravindranath,
  S.~Leuchtenberger, S.~Sharma, M.~Kistler, M.~Willersh{\"{a}}user,
  R.~Brommage, T.~Meehan, J.~Mason, H.~Haselimashhadi, IMPC Consortium,
  T.~Hough, A-M. Mallon, S.~Wells, L.~Santos, C.~Lelliott, J.~White, T.~Sorg,
  M-F. Champy, L.~Bower, C.~Reynolds, A.~Flenniken, S.~Murray, L.~Nutter,
  K.~Svenson, D.~West, G.~Tocchini-Valentini, A.~Beaudet, F.~Bosch, R.~Braun,
  M.~Dobbie, X.~Gao, Y.~Herault, A.~Moshiri, B.~Moore, K.~Lloyd, C.~McKerlie,
  H.~Masuya, N.~Tanaka, P.~Flicek, H.~Parkinson, R.~Sedlacek, J.~Seong, C-K.
  Wang, M.~Moore, S.~Brown, M.~Tsch{\"{o}}p, W.~Wurst, M.~Klingenspor, E.~Wolf,
  J.~Beckers, F.~Machicao, A.~Peter, H.~Staiger, H-U. H{\"{a}}ring,
  H.~Grallert, M.~Campillos, H.~Maier, H.~Fuchs, V.~Gailus-Durner, T.~Werner,
  and M.~Hrabe~de Angelis.
\newblock Identification of genetic elements in metabolism by high-throughput
  mouse phenotyping.
\newblock \emph{Nature communications}, 9\penalty0 (1):\penalty0 288, 2018.

\bibitem[Rubenstein et~al.(2018)Rubenstein, Bongers, Mooij, and
  Sch{\"o}lkopf]{Rubenstein2016}
P.~Rubenstein, S.~Bongers, J.~M. Mooij, and B.~Sch{\"o}lkopf.
\newblock From deterministic {ODE}s to dynamic structural causal models.
\newblock In \emph{Proceedings of the 34th Conference Annual Conference on
  Uncertainty in Artificial Intelligence ({UAI})}, Corvallis, Oregon, USA,
  2018. AUAI Press.

\bibitem[Rudy et~al.(2017)Rudy, Brunton, Proctor, and Kutz]{rudy2017data}
S.~H. Rudy, S.~L. Brunton, J.~L. Proctor, and J.~N. Kutz.
\newblock Data-driven discovery of partial differential equations.
\newblock \emph{Science Advances}, 3\penalty0 (4):\penalty0 e1602614, 2017.

\bibitem[Schaeffer(2017)]{schaeffer2017learning}
H.~Schaeffer.
\newblock Learning partial differential equations via data discovery and sparse
  optimization.
\newblock \emph{Proceedings of the Royal Society A}, 473\penalty0
  (2197):\penalty0 20160446, 2017.

\bibitem[Sch{\"o}lkopf et~al.(2012)Sch{\"o}lkopf, Janzing, Peters, Sgouritsa,
  Zhang, and Mooij]{ScholkopfJPSZMJ2012}
B.~Sch{\"o}lkopf, D.~Janzing, J.~Peters, E.~Sgouritsa, K.~Zhang, and J.~M.
  Mooij.
\newblock On causal and anticausal learning.
\newblock In \emph{Proceedings of the 29th International Conference on Machine
  Learning ({ICML})}, 2012.

\bibitem[Scutari and Denis(2014)]{bnlearn}
M.~Scutari and J.-B. Denis.
\newblock \emph{Bayesian Networks with Examples in {R}}.
\newblock Chapman and Hall, Boca Raton, 2014.
\newblock ISBN 978-1-4822-2558-7, 978-1-4822-2560-0.

\bibitem[Shampine(2018)]{shampine2018numerical}
L.~F. Shampine.
\newblock \emph{Numerical solution of ordinary differential equations}.
\newblock Routledge, 2018.

\bibitem[Shiffrin(2016)]{Shiffrin2016pnas}
R.~M. Shiffrin.
\newblock Drawing causal inference from big data.
\newblock \emph{Proceedings of the National Academy of Sciences}, 113\penalty0
  (27):\penalty0 7308--7309, 2016.

\bibitem[Spirtes et~al.(2000)Spirtes, Glymour, and Scheines]{Spirtes2000}
P.~Spirtes, C.~Glymour, and R.~Scheines.
\newblock \emph{Causation, Prediction, and Search}.
\newblock MIT Press, 2nd edition, 2000.

\bibitem[Szederk{\'e}nyi et~al.(2011)Szederk{\'e}nyi, Banga, and
  Alonso]{szederkenyi2011inference}
G.~Szederk{\'e}nyi, J.~R. Banga, and A.~A. Alonso.
\newblock Inference of complex biological networks: distinguishability issues
  and optimization-based solutions.
\newblock \emph{BMC systems biology}, 5\penalty0 (1):\penalty0 177, 2011.

\bibitem[Tibshirani(1994)]{Tibshirani94}
R.~Tibshirani.
\newblock Regression shrinkage and selection via the lasso.
\newblock \emph{Journal of the Royal Statistical Society, Series B},
  58:\penalty0 267--288, 1994.

\bibitem[Tibshirani et~al.(2015)Tibshirani, Wainwright, and
  Hastie]{tibshirani2015statistical}
R.~Tibshirani, M.~Wainwright, and T.~Hastie.
\newblock \emph{Statistical learning with sparsity: the lasso and
  generalizations}.
\newblock Chapman and Hall, 2015.

\bibitem[Tran and Ward(2017)]{tran2017exact}
G.~Tran and R.~Ward.
\newblock Exact recovery of chaotic systems from highly corrupted data.
\newblock \emph{Multiscale Modeling \& Simulation}, 15\penalty0 (3):\penalty0
  1108--1129, 2017.

\bibitem[Wu et~al.(2014)Wu, Lu, Xue, and Liang]{Wu2014}
H.~Wu, T.~Lu, H.~Xue, and H.~Liang.
\newblock Sparse additive ordinary differential equations for dynamic gene
  regulatory network modeling.
\newblock \emph{Journal of the American Statistical Association}, 109\penalty0
  (506):\penalty0 700--716, 2014.

\bibitem[Yu(2013)]{yu2013}
B.~Yu.
\newblock Stability.
\newblock \emph{Bernoulli}, 19\penalty0 (4):\penalty0 1484--1500, 2013.

\bibitem[Yu and Kumbier(2019)]{yu2019}
B.~Yu and K.~Kumbier.
\newblock Three principles of data science: predictability, computability, and
  stability (pcs).
\newblock \emph{arXiv preprint arXiv:1901.08152}, 2019.

\bibitem[Zhang(2005)]{zhang2005differential}
W-B. Zhang.
\newblock \emph{Differential equations, bifurcations, and chaos in economics},
  volume~68.
\newblock World Scientific Publishing Company, 2005.

\end{thebibliography}

\pagebreak
\appendix
{\centering\huge\textbf{Supplementary Information}}
\vspace{1cm}

The supporting material is divided into the following 4 parts:
\begin{itemize}
\item Supplementary information~\ref{sec:causal}: Causality
\item Supplementary information~\ref{sec:parametric_models}:
  CausalKinetiX for parametric dynamical models
\item Supplementary information~\ref{sec:consistency_results_and_proof}: Consistency result and proof
\item Supplementary information~\ref{sec:extended_simulations}:
  Extended empirical results
\end{itemize}

\section{Causality}\label{sec:causal}

This supplementary note contains details on causal models and the relationship between CausalKinetiX and causality.
It
accompanies the article \emph{Learning stable and predictive structures in kinetic
  systems}.

\subsection{Structural causal models} \label{sec:SCMs} We first revise
the widely used concept of structural causal models \citep{Pearl2009}
for i.i.d.\ data without any time structure.  A structural causal
model (SCM) over $d$ random variables $X^1, \ldots, X^d$ consists of
$d$ assignments
$$
X^j := f^i(X^{\PA[]{j}}, \varepsilon^j),  \quad j = 1, \ldots, d
$$
together with a noise distribution over the random vector
$(\varepsilon^1, \ldots, \varepsilon^d)$, which we may assume to
factorize, implying that the noise variables are independent.  Here,
$\PA[]{j} \subseteq \{1, \ldots, d\}$ are called the (causal) parents
of $j$.  For each SCM we obtain a corresponding graph over the
vertices\footnote{By slight abuse of notation, we identify
  $(X^1, \ldots, X^d)$ with its indices $(1, \ldots, d)$.}
$(1, \ldots, d)$ by drawing edges from $\PA[]{j}$ to $j$,
$j \in \{1, \ldots, d\}$.  The corresponding directed graph is assumed
to be acyclic.  Under these conditions, the SCM can be shown to entail
an observational distribution over $X^1, \ldots, X^d$.

An intervention now corresponds to replacing one of these assignments,
such that the resulting graph is again acyclic. This ensures that the
new SCM again entails a joint distribution over $X^1, \ldots, X^d$,
the intervention distribution.  If
$X^k := f^k(X^{\PA[]{k}}, \varepsilon^k)$ is replaced by $X^k := 4$,
for example, that distribution is sometimes denoted by $do(X^k := 4)$,
the $do$-notation indicating that a variable has been intervened on
(rather than conditioned on).

The above concept of SCMs can be straightforwardly adapted to
dynamical systems.  A \emph{causal kinetic model} over a process
$\B{X} := (\B{X}_t)_t := (X^1_t, \ldots, X^d_t)_t$ is a collection of
$d$ assignments
\begin{align*}
  \dot{X}_t^1 &:= f^1(X_t^{\PA[]{1}}), \qquad X^1_0 := x^1_0\\
  \dot{X}_t^2 &:= f^2(X_t^{\PA[]{2}}), \qquad X^2_0 := x^2_0\\
              & \; \; \; \vdots \\
  \dot{X}_t^d &:= f^d(X_t^{\PA[]{d}}), \qquad X^d_0 := x^d_0,
\end{align*}   
where $\dot{X}_t^j$ is short-hand notation for $\frac{d}{dt}{X}_t^j$.
As above, $\PA[]{j} \subseteq \{1, \ldots, d\}$ are called the direct
(or causal) parents of $X^j$. We explicitly allow for cycles in the
corresponding graph (in particular, a node might be its own parent),
but require the above system of differential equations to be solvable.
Furthermore, we may assume observations from the above system are
corrupted by observational noise, e.g.,
$\widetilde{\B{X}}_t = \B{X}_t + \boldsymbol{\eps}_t$.

As in the i.i.d.\ case, an intervention replaces one (or more) of the
above assignments. The change may include the initial values, the
differential equation or both at the same time. Formally, an
intervention replaces the $k$th assignment with
\begin{equation*}
  X^k_0 := \xi
  \quad\text{or}\quad
  \dot{X}^k_t := g(X_t^{\PA[]{}}, X^k_t),
\end{equation*}
with $\PA[]{}$ being the new set of parents of $j$.  In either case,
the system of differential equations is still required to be
solvable. In the presence of observational noise, the noise is assumed
to enter after the intervention. Similar frameworks have been
suggested \citep{MooijJS2013,Blom2018,Rubenstein2016}, which usually
focus more on the equilibrium state of the system.  The above
framework allows for a variety of interventions, such as setting a
variable $X^k$ to a constant $c$, for example, or ``pulling'' a
variable $X^k$ to a certain value $c$.  One can change the dependence
of a variable on its parents or change the parent set of $X^k$
completely.  In the case of mass-action kinetics, a change in reaction
rates can be modeled as an intervention, for example.

\subsection{Stability of causal models}
In this work, we assume that we are given a target variable $Y_t$ and
a set $\mathbf{X}_t$ of $d$ predictors.  Suppose that there is an
underlying causal kinetic model (see Section~\ref{sec:SCMs}) over the
process $(\mathbf{X}_t, Y_t)$.  If we consider experimental
conditions that correspond to different interventions on variables
other than $Y$, it follows that $S^* := \PA[]{Y}$ yields a stable
model, i.e.,
$$\dY{i}{t} = f\big({X}^{S^*,(i)}_t\big), \, \text{ for all } i = 1, \ldots, n,
$$
see~[1] in the main article and \ref{sec:parametric_models}
below.  This principle is known as autonomy or modularity, see
\cite{Aldrich1989, Haavelmo1944, Pearl2009,ScholkopfJPSZMJ2012}, for
example.

\subsection{Causality through stability}
The above mentioned stability property of the parents of $Y$ can be
exploited for causal discovery.  Suppose that there is an underlying
causal kinetic model, whose structure is unknown.  Suppose further,
that we are given data from different experimental settings and that
the parents of $Y$ satisfy the above stability.  We can then search
for all sets $S \subseteq \{1, \ldots, d\}$, such that
$\dY{i}{t} = f({X}^{S,(i)}_t)$, for all $i = 1, \ldots, n$.  There can
be more than one invariant (or stable) set, but by assumption the set
of parents is one of these sets. It has been proposed to output the
intersection of all invariant sets, which is guaranteed to be a subset
of the causal parents \citep{Meinshausen2016pnas, Peters2016jrssb}.
The intersection itself, however, does not necessarily lead to a model
with good predictive performance. In CausalKinetiX, we propose to use
a trade-off between stability and predictability.  This yields models
that generalize well to unseen experimental conditions. If there are
sufficiently many and informative experimental conditions, the causal
parents of $Y$ are the unique set satisfying the invariance condition.
(This is, trivially, the case if all variables other than $Y$ are
intervened on, for example.)  Under such conditions, the consistency
results we develop in \ref{sec:consistency_results_and_proof} then
state that CausalKinetiX in the limit of infinite sample size indeed
recovers the set of causal parents of $Y$, see
Theorem~\ref{thm:rank_consistency} below.

\section{CausalKinetiX for parametric dynamical
  models}\label{sec:parametric_models}

This supplementary note contains details on the different types of
model classes that can be used in the CausalKinetiX
framework. Additionally, we give details on how to choose the tuning
parameter $K$ in the variable ranking and propose to explicit
screening procedures. It accompanies the article \emph{Learning stable
  and predictive structures in kinetic systems}.

\subsection{Connection between models and variables}
Given a differential equation $\dot{Y}_{t} = g(\B{X}_t) $ it will be
convenient to speak about the arguments $\B{X}_t$ of the function $g$
that have an influence on the outcome $\dot{Y}_{t}$. For any set
$S\subseteq\{1,\dots,d\}$, we therefore define
\begin{equation*}
  \mathcal{F}(S)\coloneqq\left\{g:\R^d\rightarrow\R \,\big\vert\,
    \exists f:\R^{\abs{S}}\rightarrow\R:
    \forall \B{x}\in\R^d \; g(\B{x}) = f(\B{x}^S) \right\}.
\end{equation*}
A set of functions $M \subseteq \mathcal{F}(S)$ (also referred to as
model), then contains only functions that do not depend on variables
outside $S$. In the class of mass-action kinetics, for example, we
could have
\begin{align*}
M_1 &= \{g\,|\,g(x) = 
\theta_1 x_1 +
\theta_7 x_7,
\text{ where }
\theta_1, \theta_7 \in \mathbb{R}\},\\
M_2 &= \{g\,|\,g(x) = 
\theta_2 x_2 +
\theta_3 x_3,
\text{ where }
\theta_2, \theta_3 \in \mathbb{R}\}, 
\end{align*}
which implies $M_1 \subseteq \mathcal{F}(\{1,7\})$ and
$M_2 \subseteq \mathcal{F}(\{2,3\})$, see
Section~\ref{sec:mass-action} for more details.  In practice, the
underlying structure is unknown and many methods therefore include a
model selection step. Here, we assume that we are given a family of
target models $\mathcal{M}=\{M^1,\dots,M^m\}$, where individual models
can depend on various different subsets of variables
$S\subseteq\{1,\dots,d\}$. We say a model $M\text{ depends on }j$ if
there is no $S \subseteq \{1, \ldots, d\}$ with $j \notin S$ and
$M\subseteq \mathcal{F}(S)$. Furthermore, we refer to $M$ as a
\emph{target model} and to $\mathcal{M}$ as a \emph{collection of target
  models}. Based on these definitions, we can make precise what it
means for the target trajectories $Y$ to be described by invariant
dynamics.
\begin{assumption}[invariance]\label{ass:inv0}
  There exists a set $S^*$ and a function
  $f^*: \mathbb{R}^{|S^*|} \rightarrow \mathbb{R}$ satisfying for all
  $i\in\{1, \ldots, n\}$ and all $t\in\B{t}$ that
  \begin{equation}
    \label{eq:ODEYi}
    \dY{i}{t} = f^*\big(\B{X}^{S^*,(i)}_t\big).
  \end{equation}
  Further, $S^*$ is minimal for $f^*$ in the following sense: there is
  no $S \subsetneq S^*$ such that $f^* \in \mathcal{F}(S)$.
\end{assumption}
For causal kinetic models, the pair $(f^Y, \PA[]{Y})$ satisfies
Assumption~\ref{ass:inv0} whenever the environments consist of
interventional data, which do not contain interventions on the target
$Y$ itself. Even if Assumption~\ref{ass:inv0} is satisfied the pair
$(f^*, S^*)$ is not necessarily unique, i.e., there may be one or
several pairs satisfying~[\ref{eq:ODEYi}]. In general, both the set
$S^*$ as well as the invariant function $f^*$ are of interest in
practice as both are strongly related to the causal mechanisms of the
underlying system.

\subsection{Parametric models for mass-action
  kinetics}\label{sec:mass-action}
Many ODE based systems in biology are described by the law of
mass-action kinetics. The resulting ODE models are linear
combinations of various orders of interactions between the predictor
variables $\B{X}$. Assuming that the underlying ODE model of our
target $Y$ is described by a version of the mass-action kinetic law,
the derivative $\dYshort{t}$ equals
\begin{equation} \label{eq:ydotmassaction}
\dYshort{t} = g_{\theta}(\B{X}_t) = \sum_{k=1}^d\theta_{0,k}X^{k}_t + \sum_{j=1}^d\sum_{k=j}^d\theta_{j,k}X^{j}_tX^{k}_t,
\end{equation}
where
$\theta=(\theta_{0,1},\dots,\theta_{0,d},\theta_{1,1},\theta_{1,2},\dots,\theta_{d,d})\in\R^{{d(d+1)}/{2}
  + d}$ is a parameter vector. Correspondingly, the function on the
right-hand side of [\ref{eq:ODEYi}] in Assumption~\ref{ass:inv0} has
such a parametric form, too.  The assumption that the model only
depends on the variables in $S^*$ can be expressed by a sparsity on
the parameter $\theta$, i.e., $\theta_{j,k}=0$ for all $j,k$ with
$j \notin S^*$ or $k \notin S^*$.  A target model $M$
can be constructed by a sparsity pattern.  Formally, such a
sparsity pattern is described by a vector
$v\in\{0,1\}^{{d(d+1)}/{2}+d}$ which specifies the zero entries in
$\theta$. For every such $v$, we define
\begin{equation*}
  M^v\coloneqq\Big\{g_{\theta}:\R^d\rightarrow\R \,\Big\vert\,
    \forall\B{x}\in\R^d:
    g_{\theta}(\B{x})=\sum_{k,j}\theta_{k,j}x^kx^j \text{ and } v*\theta=\theta\Big\},
\end{equation*}
where $*$ denotes the element-wise product.  In principle, one could
now search over all sparsity patterns of $\theta$, i.e., define
$\mathcal{M} = \{M^v, v\in\{0,1\}^{{d(d+1)}/{2}+d}\}$, but this
becomes computationally infeasible already for small values of $d$.
In this work, we suggest two different collections of target models.
Other choices, in particular those motivated by prior knowledge, are
possible, too, and can easily be included in our framework.

\paragraph{Exhaustive models.}
Using only the constraint on the number
of terms $p$ leads to the following collection of models
\begin{equation*}
  \mathcal{M}^{\operatorname{Exhaustive}}_p=\left\{M^v\,\big\vert\,
    v \text{ has at most } p \text{ non-zeros}\right\}.
\end{equation*}
Every model in $\mathcal{M}^{\operatorname{Exhaustive}}_p$ consists of
a linear combination of a fixed number of at most $p$ terms of the
form $X^1,\dots,X^d, X^1X^1,X^1X^2, \ldots, X^{d-1}X^d$ or $X^dX^d$.

\paragraph{Main effect models.} 
Alternatively, one can also add the restriction that the models including interaction terms for variables, include the corresponding main effects, too.
\begin{align*}
  \mathcal{M}^{\operatorname{MainEffect}}_p =\big\{M^v\,\big\vert\,
  &v \text{ has at most } p \text{ non-zeros and }\\
  &v_{0,j} 
    \neq 0 \text{ implies }
    v_{k,j} \neq 0
    \, \forall k < j 
    \text{ and } 
    v_{j,k} \neq 0
    \, \forall k \geq j
    \big\}.
\end{align*}
While the number of main effect models is much smaller it generally
requires to fit larger models, which can lead to overfitting. For
example, if the true invariant model only depends on the two terms
$X^1$ and $X^4X^5$ there exists a exhaustive model with two parameters
that is invariant, while the smallest main effect model has nine
parameters.

\subsection{Screening procedures and choosing parameter $K$ in
  variable selection}

\paragraph{Screening of terms (M2)}
For large scale applications, e.g., when $d$ is larger than $30$, the
computational complexity of the method can be significantly reduced by
including a prior screening step.  The collections of models from
mass-action kinetics scale as
\begin{equation*}
  \abs{\mathcal{M}^{\operatorname{Exhaustive}}_p}=\sum_{k=1}^p\binom{\tbinom{d}{2}}{k}=\mathcal{O}(d^{2p}).
\end{equation*}
Even though computation of the stability score for a single model is
fast, this shows that an exhaustive search is infeasible for settings
with large $d$.  We propose to reduce the model size by a screening
step that screens terms that are useful for prediction in the model
estimation in \ref{it:model2} then continue our procedure with the
reduced model class. This procedure allows the method to be applied in
high-dimensional settings.

Usually, the screening step will focus on predictability, and any
screening or variable selection method based on the model fit (see
[\ref{eq:model_fit}]) can be used. We provide two explicit options
based on $\ell^1$-penalized least squares, also known as Lasso
\cite{Tibshirani94}. The first method performs the regularization on
the level of the derivatives (gradient matching GM) and the second on
the integrated problem (difference matching DM), see SI~4,
Section~\ref{sec:competing_methods}. These two options can be used as
screening steps, even in combination with other methods, such as
nonlinear least squares (NONLSQ), or as parameter estimation methods
in themselves.  Even though we are not aware of any work that proposes
this precise implementation, the idea of using $\ell^1$-penalized
procedures for model inference has been used extensively
\cite{boninsegna2018sparse, brunton2016discovering, mikkelsen2017,
  rudy2017data, schaeffer2017learning, szederkenyi2011inference,
  tran2017exact, Wu2014}.

\paragraph{Choosing parameter $K$ in variable ranking (V2)}
Ideally, as mentioned in~\ref{it:var1}, $K$ should equal the number of
invariant models, see also the consistency results in SI~3.  In our
empirical studies, we found that the method's results are robust to
the choice of $K$.  In general, we propose to choose a small $K$ to
ensure that it is smaller than the number of invariant models.
Depending on the collection $\mathcal{M}$ of target models it is
sometimes possible to give a heuristic number of invariant models.
For example in the case of mass-action kinetics, we have the following
reasoning for the exhaustive models
$\mathcal{M}^{\operatorname{Exhaustive}}_{p+1}$. If the smallest
invariant model only has $p$ terms (i.e., it corresponds to a
$v^*\in V_{p+1}$ with $\sum_jv^*_j=p$), it follows that any
super-model (i.e., any $v\in V_{p+1}$ with $\sum_j\abs{v_j-v^*_j}=1$)
is also an invariant model.  The number of super-models, however, is
given by $2d+\binom{d}{2} - p$. Hence, if we use the model collection
$\mathcal{M}^{\operatorname{Exhaustive}}_{p+1}$, where $p$ is assumed
to be the expected number of terms contained in the smallest invariant
model, a reasonable choice is to set $K=2d+\binom{d}{2} - p$.

\section{Consistency result and proof}\label{sec:consistency_results_and_proof}

This supplementary note contains the precise formulation of the
consistency result mentioned in the article \emph{Learning stable and
  predictive structures in kinetic systems} as well as a detailed
proof. For completeness and to ensure better understanding, we first
recall all details required for a precise formulation of the result
(Section~\ref{sec:procedure} and Section~\ref{sec:consistency}) and
then present the detailed proof (Section~\ref{sec:proofs}).

\subsection{Detailed procedure}\label{sec:procedure}

In this section, we recall the step-wise procedures for model scoring
and variable ranking. Note, that the version of the model ranking
presented in Section~\ref{sec:stability_ranking} is slightly different
from the one in the main article. More specifically, step
\ref{it:model2} does not leave out experiments but rather pools across
all experiments. This allows for a simpler theoretical analysis but
neglects the additional stability. However, we believe that the theory
also holds for the more complicated leave-one-out procedure and
actually expect it to have better finite sample properties.

\subsubsection{Model scoring}\label{sec:stability_ranking}
For each model $M$ the score $T = T(M)$ is computed as follows. 
\begin{enumerate}[label=(M\arabic*)]
\item\label{it:input} \textbf{Input:} Data as described above and a collection
  $\mathcal{M} = \{M^1, M^2, \ldots,
  M^m\}$ of models over $d$ variables that is assumed to be
  rich enough to describe the desired kinetics.  In the case of
  mass-action kinetics, examples include
  $\mathcal{M} = \mathcal{M}^{\operatorname{MainEffect}}_p$ or
  $\mathcal{M} = \mathcal{M}^{\operatorname{Exhaustive}}_p$.
\item\label{it:modelscreening} \textbf{Screening of predictor terms
    (optional):} For large systems, reduce the search space to less
  predictor terms. This replaces the collection of models by a smaller
  collection, which we again denote by $\mathcal{M}$.
\item\label{it:model1} \textbf{Smooth target trajectories:} For each repetition
  $i\in\{1, \ldots, n\}$, smooth the (noisy) data
  $\widetilde{Y}^{(i)}_{t_1},\dots,\widetilde{Y}^{(i)}_{t_L}$ using a
  smoothing spline
  \begin{equation} \label{eq:smooth_trajectory}
    \hat{y}^{(i)}_{a} := \argmin_{y\in\mathcal{H}_{C}}\,
    \sum_{\ell = 1}^L \big(\widetilde{Y}^{(i)}_{t_\ell} - y(t_\ell)\big)^2 + \lambda \int \ddot{y}(s)^2\, ds,
  \end{equation}
  where $\lambda$ is a regularization parameter, which in practice is
  chosen using cross-validation; $\mathcal{H}_{C}$ contains all smooth
  functions $[0,T] \rightarrow \mathbb{R}$, for which values and first
  two derivatives are bounded in absolute value by $C$.  We denote the
  resulting functions by $\hat{y}^{(i)}_{a}:[0,T]\rightarrow\R$,
  $i \in \{1, \ldots, n\}$. For each of the $m$ candidate target
  models $M\in\mathcal{M}$ perform the steps
  \ref{it:model2}--\ref{it:model4}.
\item\label{it:model2} \textbf{Fit candidate target model:}  Fit the target model $M$, i.e., find the best fitting function
  $g\in M$ such that
  \begin{equation}
    \label{eq:model_fit}
    \dY{i}{t} = g\big(\B{X}^{(i)}_t\big),
  \end{equation}
  holds for all $i\in\{1,\dots,n\}$ and $t\in\B{t}$. Below, we
  describe two procedures for this estimation step resulting in an
  estimate $\hat{g}$. For each repetition $i\in\{1, \ldots, n\}$, this
  yields $L$ fitted values
  $\hat{g}(\widetilde{\B{X}}^{(i)}_{t_1}),\dots,\hat{g}(\widetilde{\B{X}}^{(i)}_{t_L})$.
\item\label{it:model3} \textbf{Smooth target trajectories with derivative constraint:} Refit the target trajectories for each repetition
  $i\in\{1, \ldots, n\}$ by constraining the smoother to these
  derivatives, i.e., find the functions
  $\hat{y}^{(i)}_{b}:[0,T]\rightarrow\R$ which minimize
  \begin{align}
    \label{eq:smooth_trajectory2}
    \begin{split}
      \hat{y}^{(i)}_{b}\coloneqq 
      &\argmin_{y\in\mathcal{H}_C}
      \sum_{\ell = 1}^L \big(\widetilde{Y}^{(i)}_{t_\ell} - y(t_\ell)\big)^2 + \lambda \int \ddot{y}(s)^2\, ds,\\
      &\text{such that}\quad
      \dot{y}(t_{\ell}) = \hat{g}(\widetilde{\B{X}}^{(i)}_{t_{\ell}}) \text{ for all } {\ell} = 1, \ldots, L.
    \end{split}
  \end{align} 
\item\label{it:model4} \textbf{Compute score:} If the candidate model $M$ allows for an
  invariant fit, the fitted values
  $\hat{g}(\widetilde{\B{X}}^{(i)}_1),\dots,\hat{g}(\widetilde{\B{X}}^{(i)}_L)$
 computed in \ref{it:model2} will be reasonable estimates of the
  derivatives $\dY{i}{t_1},\dots,\dY{i}{t_L}$. This, in particular,
  means that the constrained fit in \ref{it:model3} will be good, too. If, conversely, 
  the candidate model $M$
  does not allow for an invariant fit, the estimates
  produced in \ref{it:model2} will be poor. We thus score the models by
  comparing the fitted trajectories $\hat{y}^{(i)}_{a}$ and
  $\hat{y}^{(i)}_{b}$ across repetitions as follows
  \begin{equation}
    \label{eq:scoreTS2}
    T(M)\coloneqq\frac{1}{n}\sum_{i=1}^n\left[\abs{\operatorname{RSS}_{b}^{(i)}-\operatorname{RSS}_{a}^{(i)}}\right]
    /\left[\operatorname{RSS}_{a}^{(i)}\right],
  \end{equation}
  where $\operatorname{RSS}_{*}^{(i)}\coloneqq \frac{1}{L}\sum_{\ell=1}^L\big(\hat{y}^{(i)}_{*}(t_{\ell})-\widetilde{Y}^{(i)}_{t_{\ell}}\big)^2$.
\end{enumerate}
The scores $T(M)$ induce a ranking on the models in
$\mathcal{M}$, where models with a smaller score have more stable fits
than models with larger scores. Below, we show consistency of the
model ranking.

\subsubsection{Variable ranking}\label{sec:ranking_variables}
The following idea ranks individual variables according to their
importance in stabilizing models.  We score all models in the
collection $\mathcal{M}$ based on their stability as described above
and then rank the variables according to how many of the top ranked
models depend on them. This can be summarized in the following steps.
\begin{enumerate}[label=(V\arabic*)]
\item \textbf{Input:} same as in~\ref{it:input}.
\item\label{it:var1} \textbf{Compute stabilities:} For each model $M\in\mathcal{M}$ compute the stability
  score $T(M)$ as described in
  [\ref{eq:scoreTS2}]. Denote by $M_{(1)},\dots,M_{(K)}$ the
  $K$ top ranked models, where $K\in\N$ is chosen to be the number of
  expected invariant models in $\mathcal{M}$. See
 below 
 for how to choose $K$ in practice.
\item\label{it:var2} \textbf{Score variables:} For each variable $j\in\{1,\dots,d\}$ compute the following score
  \begin{equation}
    \label{eq:variable_score}
    s_j\coloneqq\frac{|\{k\in\{1,\dots,K\}\,\vert\,
      M_{(k)}\text{ depends on }j \}|}{K}.
  \end{equation}
  Here, ``$M_{(k)}\text{ depends on }j$'' means that there
  is no $S \subseteq \{1, \ldots, d\}$ with $j \notin S$ and
  $M_{(k)} \subseteq \mathcal{F}(S)$. If there are exactly
  $K$ invariant models, the above score then represents the fraction
  of invariant models that depend on variable $j$. In particular, it
  equals $1$ for a variable $j$ if and only if every invariant model
  depends on that variable.
\end{enumerate}

\subsection{Consistency of ranking procedure}\label{sec:consistency}
First, we recall the key concept of invariance, which will be an
important condition in order for our consistency result to hold.
\begin{assumption}[invariance]\label{ass:inv}
  There exists a set $S^*$ and a function
  $f^*: \mathbb{R}^{|S^*|} \rightarrow \mathbb{R}$ satisfying for all
  $i\in\{1, \ldots, n\}$ and all $t\in\B{t}$ that
  \begin{equation*}
    \dY{i}{t} = f^*\big(\B{X}^{S^*,(i)}_t\big).
  \end{equation*}
  Further, $S^*$ is minimal for $f^*$ in the following sense: there is
  no $S \subsetneq S^*$ such that $f^* \in \mathcal{F}(S)$.
\end{assumption}
Next, we provide conditions under which the proposed procedure is
consistent. To this end, we fix the number of environments (or
experiments) to $m$ and assume that there exist $R$ repetitions for
each experiment observed on $L$ time points. In total this means we
observe $n=m\cdot R$ trajectories, each on a grid of $L$ time points.
As asymptotics, consider a growing number of repetitions $R_n$ and
simultaneously a growing number of time points~$L_n$.  Here,
increasing repetitions $R$ and time points $L$ corresponds to
collecting more data and obtaining a finer time resolution,
respectively. Both of these effects are achieved by novel data
collection procedures. To make this more precise, assume that for each
$n\in\N$, we are given a time grid
\begin{equation*}
  \B{t}_n = (t_{n,1}, \ldots, t_{n,L_n})
\end{equation*}
on which the data are observed and such that $L_n \rightarrow \infty$
for $n \rightarrow \infty$. For simplicity we will only analyze the
case of a uniform grid, i.e., we assume that
$\Delta t\coloneqq t_{n,k+1}-t_{n,k}=\frac{1}{L_n}$ for all
$k\in\{1,\dots,L_n\}$. We denote the $m$ environments by
$e_1,\dots,e_m\subseteq\{1,\dots,n\}$ and assume for all
$k\in\{1,\dots,m\}$ that $\abs{e_k}=R_n$ which grows as $n$ increases.

To achieve consistency of our ranking procedures (both for models and
variables) we require the following three conditions (C1)--(C3) below.
These three conditions should be understood as high-level conditions
or guidelines. There may be, of course, other sufficient assumptions
that yield the desired result and that might cover other settings and
models.
\begin{enumerate}[label=(C\arabic*)]
\item\label{it:cond1} \textbf{Consistency of target smoothing:} The
  smoothing procedure in \ref{it:model1}, see
  Section~\ref{sec:stability_ranking}, satisfies the following
  consistency: For all $k\in\{1,\dots,m\}$ it holds that
  \begin{equation*}
    \lim_{n\rightarrow\infty}\E\left(\sup_{t\in[0,T]}\left(\hat{y}^{(e_k)}_{a}(t)-Y^{(e_k)}_{t}\right)^2\right)=0,
  \end{equation*}
  where, by slight abuse of notation, the superscript $(e_k)$ denotes
  a fixed repetition from the environment $e_k$.
\item\label{it:cond2} \textbf{Consistency of model estimation:} For
  every invariant model $M\in\mathcal{M}$, let $\hat{g}_n$ be the
  estimate from \ref{it:model2}. Then, for all $k\in\{1,\dots,m\}$ it
  holds that
  \begin{equation*}
    L_n\max_{\ell\in\{1,\dots,L_n\}}\abs{\hat{g}_n(\widetilde{\B{X}}^{(e_k)}_{t_{n,\ell}})-\dot{Y}^{(e_k)}_{t_{n,\ell}}}\overset{\prob}{\longrightarrow}0
  \end{equation*}
  as $n\rightarrow\infty$, i.e., the estimation procedure $\hat{g}_n$
  in \ref{it:model2} is consistent. Furthermore, for all non-invariant
  models $M\in\mathcal{M}$ there exists a smooth function
  $g\in M$ such that $g$ and its first derivative are bounded
  and it holds for all $t\in[0,T]$ and for all $k\in\{1,\dots,m\}$
  that
  \begin{equation*}
    L_n\max_{\ell\in\{1,\dots,L_n\}}\abs{\hat{g}_n(\widetilde{\B{X}}^{(e_k)}_{t_{\ell}})-g(\B{X}^{(e_k)}_{t_{\ell}})}\overset{\prob}{\longrightarrow}0
  \end{equation*}
  as $n\rightarrow\infty$, i.e., the estimation convergences to a
  fixed function.
\item\label{it:cond3} \textbf{Uniqueness of invariant model:} There
  exists a unique function
  $g^*\in\cup_{M\in\mathcal{M}}M$ and a unique set
  $S^*\subseteq\{1,\dots,d\}$ such that for all $n\in\{m,m+1,\dots\}$
  the pair $f^*(\B{x})\coloneqq g^*(\B{x}^{S^*})$ and $S^*$ satisfy
  Assumption~\ref{ass:inv}. This condition is fulfilled if the
  experiments are sufficiently heterogeneous, e.g., because there are
  sufficiently many and strong interventions.
\end{enumerate}
Note that (C2) relates to the problem of error-in-variables. Relying
on the conditions \ref{it:cond1}--\ref{it:cond3}, we are now able to
prove consistency results for both the model ranking from
Section~\ref{sec:stability_ranking} and the variable ranking from
Section~\ref{sec:ranking_variables}. Recalling the definition of
$T_n(M)$ given in [\ref{eq:scoreTS2}] (small values of
$T_n(M)$ indicate invariance), we define
\begin{equation}\label{eq:rank_accuracy}
  \operatorname{RankAccuracy}_n
  \coloneqq 1-\frac{\big\vert\{M\in\mathcal{M}\,|\,
    T_{n}(M)<\max_{\{\tilde{M}\in\mathcal{M}\mid
    \tilde{M}\text{ invariant}\}}T_n(\tilde{M})\text{ and }
    M\text{ not invariant}\}\big\vert}{\big\vert\{M\in\mathcal{M}\,|\,
    M\text{ not invariant}\}\big\vert}
\end{equation}
as performance measure of our model ranking. RankAccuracy is thus
equal to $1$ minus ``proportion of non-invariant models that are
ranked better than the worst invariant model''. In particular, it
equals $1$ if and only if all invariant models are ranked better than
all other models. If the collection $\mathcal{M}$ contains no
invariant models, we define the RankAccuracy to be $1$. Given the
above conditions the following consistency holds.
\begin{theorem}[rank consistency]
  \label{thm:rank_consistency}
  Let Assumption~\ref{ass:inv} and conditions \ref{it:cond1} and
  \ref{it:cond2} be satisfied. Additionally, assume that for all
  $k\in\{1,\dots,m\}$ it holds for all $i\in e_k$ and
  $\ell\in\{1,\dots,L_n\}$ that the noise variables
  $\eps_{t_{\ell}}^{(i)}$ are i.i.d., symmetric, sub-Gaussian and
  satisfy $\E(\eps_{t_{\ell}}^{(i)})=0$ and
  $\var(\eps_{t_{\ell}}^{(i)})=\sigma^2_k$. Let $Y_t$ and its first
  and second derivative be bounded and assume that for all
  non-invariant sets $M\in\mathcal{M}$ the sets
  $\{t\mapsto g(X_t)\,\vert\, g\in M\}$ are closed with
  respect to the supremum norm. Then, it holds that
  \begin{equation*}
    \lim_{n\rightarrow\infty} \E\left(\operatorname{RankAccuracy}_n\right)=1.
  \end{equation*}
  If, in addition, condition~\ref{it:cond3} holds, we have the
  following guarantee for the variable scores $s_j^n = s_j$, defined
  in [\ref{eq:variable_score}]:
  \begin{itemize}
  \item for all $j\in S^*$ it holds that $\lim_{n\rightarrow\infty}
    \E(s^n_j)=1$ and
  \item for all $j\not\in S^*$ it holds that $\lim_{n\rightarrow\infty} \E(s^n_j)\leq \frac{K-1}{K}$,
  \end{itemize}
where $K:=\abs{\{M\in\mathcal{M}\,\vert\, M \text{ is
      invariant}\}}$.
\end{theorem}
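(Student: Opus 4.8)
The plan is to show that the score $T_n(M)$ asymptotically separates the two types of models: it tends to $0$ for every invariant model and to a strictly positive constant for every non-invariant one. Since $\mathcal{M}$ is finite, this separation forces $\operatorname{RankAccuracy}_n = 1$ with probability tending to one, and the variable-score claims then reduce to a combinatorial argument about the top-$K$ models. The first step is to control the denominator: writing $\widetilde{Y}^{(i)}_{t_{n,\ell}} = Y^{(i)}_{t_{n,\ell}} + \eps^{(i)}_{t_{n,\ell}}$ and combining the uniform smoothing consistency \ref{it:cond1} with a law of large numbers for the sub-Gaussian noise yields $\operatorname{RSS}_a^{(i)} \to \sigma^2_k$ in probability for a repetition in environment $e_k$. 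In particular the denominators of $T_n(M)$ are bounded away from $0$ with probability tending to one, and the sub-Gaussian tails give the uniform integrability needed to pass from these in-probability statements to statements about expectations. A preliminary observation used throughout is that the two-sided bound on $\ddot y$ built into $\mathcal{H}_C$ ties the grid increments of any feasible constrained smoother $\hat y^{(i)}_b$ to its constrained derivatives via the trapezoidal rule, so that its grid values are pinned, up to a single additive constant and an $o(1)$ error, by $\sum$ of the prescribed derivatives.

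For an invariant model I would sandwich $\operatorname{RSS}_b^{(i)}$ around $\sigma^2_k$. The lower bound is immediate from the increment observation: since \ref{it:cond2} makes the constrained derivatives equal to $\dY{i}{t_{n,\ell}} + o(1/L_n)$, the grid values of $\hat y^{(i)}_b$ are forced to equal $Y^{(i)}_{t_{n,\ell}}$ up to a constant, whence $\operatorname{RSS}_b^{(i)} \ge \sigma^2_k - o(1)$ after optimizing over that constant. For the upper bound I would exhibit a feasible competitor obtained by integrating a smooth interpolant $p$ of the prescribed values $\hat g_n(\widetilde{\B X}^{(i)}_{t_{n,\ell}})$. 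Here the factor $L_n$ in \ref{it:cond2} is essential: because the fluctuations of these values are $o(1/L_n)$ on a grid of spacing $\Delta t = 1/L_n$, the interpolant satisfies $\dot p = \dot Y + o(1)$, so its roughness penalty stays bounded while its residual sum of squares tends to $\sigma^2_k$. Optimality of $\hat y^{(i)}_b$ then gives $\operatorname{RSS}_b^{(i)} \le \sigma^2_k + o(1)$, and together with the lower bound and $\operatorname{RSS}_a^{(i)} \to \sigma^2_k$ this yields $T_n(M) \to 0$.

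For a non-invariant model, \ref{it:cond2} supplies a fixed bounded $g \in M$ with $\hat g_n(\widetilde{\B X}^{(e_k)}_{t_{n,\ell}}) \to g(\B X^{(e_k)}_{t_{n,\ell}})$ uniformly in $\ell$, and closedness of $\{t \mapsto g(\B X_t) \mid g \in M\}$ in the supremum norm guarantees that on some environment $e_k$ the limiting constrained derivative differs from the truth by a fixed function $h(t) := g(\B X^{(e_k)}_t) - \dY{e_k}{t} \not\equiv 0$. Applying the increment observation again, the grid values of $\hat y^{(i)}_b$ are now pinned close to $Y^{(i)}_{t_{n,\ell}} + \int_0^{t_{n,\ell}} h$ up to a constant; since $\int_0^{\cdot} h$ is a fixed non-constant function, no choice of the constant makes all residuals small, so $\operatorname{RSS}_b^{(i)}$ exceeds $\operatorname{RSS}_a^{(i)} \approx \sigma^2_k$ by a positive amount and $T_n(M)$ converges to a strictly positive constant. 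Over the finite collection $\mathcal{M}$ the invariant models thus have strictly smaller scores than the non-invariant ones with probability tending to one, so $\operatorname{RankAccuracy}_n = 1$ on that event and bounded convergence gives $\E(\operatorname{RankAccuracy}_n) \to 1$.

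For the variable scores, condition \ref{it:cond3} is added to make the top $K$ models exactly the invariant ones on the same high-probability event. Minimality of $S^*$ in Assumption~\ref{ass:inv} together with the uniqueness in \ref{it:cond3} forces every invariant model to depend on each $j \in S^*$ — otherwise $f^* \in \mathcal{F}(S)$ for some $S \not\ni j$, contradicting minimality — so $s^n_j = 1$ there; and the minimal invariant model with support $S^*$ depends on no $j \notin S^*$, so at most $K-1$ of the $K$ invariant models can depend on such a $j$, giving $s^n_j \le (K-1)/K$. Bounded convergence then delivers the stated limits of $\E(s^n_j)$. The hard part will be making the two middle steps rigorous: quantifying how derivative constraints imposed at a growing number $L_n$ of grid points propagate to both the residual sum of squares and the roughness penalty. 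This is precisely where the calibrated $L_n$-rate in \ref{it:cond2}, the two-sided boundedness in $\mathcal{H}_C$, and the closedness assumption for non-invariant models all enter, and where the informal trapezoidal-rule and interpolation estimates must be turned into uniform bounds.
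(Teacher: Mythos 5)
Your proposal is correct and takes essentially the same route as the paper's proof: the paper likewise splits the argument into $\E(T_n(M))\to 0$ for invariant models and $\liminf_n\E(T_n(M))>0$ for non-invariant ones (its Claims 1 and 2), establishes both via the same increment-pinning argument for the constrained smoother (Lemmas~\ref{thm:consistency_yb} and~\ref{thm:limitfunction}; your $Y+\int_0^{\cdot}h+c$ is exactly the paper's limit function $y_{\lim}$), and closes the variable-score part with the identical minimality/uniqueness argument. The only minor organizational differences are that you sandwich $\operatorname{RSS}_b$ directly rather than bounding $\abs{\operatorname{RSS}_b-\operatorname{RSS}_a}$ term by term, and you obtain the non-invariant lower bound from non-constancy of the limit function, where the paper instead invokes a separate lemma (Lemma~\ref{thm:minimumdiff}) converting a pointwise derivative gap into a gap of the trajectories on an interval.
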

The result is proved in the following section, which also contains the
choice of $C$ for \ref{it:model1}.
Note that if screening is used, 
condition~\ref{it:cond3} 
can only be satisfied if the screening procedure 
did not remove all invariant models.

\subsection{Proof of
  Theorem~\ref{thm:rank_consistency}}\label{sec:proofs}

To simplify notation, we will whenever it is clear from the context
drop the $n$ in the grid time points $t_{n,\ell}$ and simply write
$t_{\ell}$.

\subsubsection{Intermediate results}
In order to prove Theorem~\ref{thm:rank_consistency} we require the
following two auxiliary results.
\begin{lemma}
  \label{thm:minimumdiff}
  Let $y_1,y_2:[0,T]\rightarrow\R$ be two smooth functions satisfying that
  there exists $c_1>0$  such that
  \begin{equation}
    \label{eq:tstar_bound}
   \exists t^*\in[0,T]\text{ with }\abs{\dot{y}_1(t^*)-\dot{y}_2(t^*)}\geq c_1. 
  \end{equation}
  Moreover, assume
  $c_2\coloneqq\sup_{t\in[0,T]}(\abs{\ddot{y}_1(t)}+\abs{\ddot{y}_2(t)})
  < \infty$.  Then, there exists an interval
  $[l_1,l_2]\subseteq [t^*-\frac{c_1}{4c_2}, t^*+\frac{c_1}{4c_2}]$
  satisfying that $l_2-l_1=\frac{c_1}{8c_2}$ and
  \begin{equation*}
    \inf_{t\in[l_1,l_2]}\abs{y_1(t)-y_2(t)}\geq
    \frac{c_1^2}{16c_2}.
  \end{equation*}
\end{lemma}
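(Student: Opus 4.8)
The plan is to reduce everything to the single function $h \coloneqq y_1 - y_2$. By hypothesis $\abs{\dot h(t^*)} \geq c_1$, and since $\abs{\ddot h(t)} \leq \abs{\ddot y_1(t)} + \abs{\ddot y_2(t)} \leq c_2$ for every $t$, the second derivative of $h$ is uniformly bounded. The intuition is that a bounded second derivative prevents $\dot h$ from dropping quickly, so $\dot h$ stays large on a whole neighbourhood of $t^*$; this makes $h$ strictly monotone there, and a strictly monotone function whose slope is bounded away from zero grows at least linearly away from its (unique) zero, which is exactly what forces $\abs h$ to be large on a subinterval.

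First I would make the slope bound quantitative. Writing $\dot h(t) = \dot h(t^*) + \int_{t^*}^t \ddot h(s)\,ds$ gives $\abs{\dot h(t) - \dot h(t^*)} \leq c_2 \abs{t - t^*}$, so on $J \coloneqq [t^* - \tfrac{c_1}{2c_2}, t^* + \tfrac{c_1}{2c_2}]$ we have $\abs{\dot h(t)} \geq c_1 - c_2 \cdot \tfrac{c_1}{2c_2} = \tfrac{c_1}{2} > 0$. Since $\dot h$ is continuous and nowhere zero on $J$, it keeps a constant sign there; without loss of generality (replacing $h$ by $-h$) assume $\dot h \geq \tfrac{c_1}{2}$ on $J$, so $h$ is strictly increasing on $J$. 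I then restrict attention to the smaller centred interval $I \coloneqq [t^* - \tfrac{c_1}{4c_2}, t^* + \tfrac{c_1}{4c_2}] \subseteq J$, whose length is $\tfrac{c_1}{2c_2}$.

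Next I would control the set where $\abs h$ is small. Let $B \coloneqq \{t \in I : \abs{h(t)} < \tfrac{c_1^2}{16 c_2}\}$. Because $h$ is strictly increasing on $I$, the set $B$ is the preimage of the open interval $(-\tfrac{c_1^2}{16 c_2}, \tfrac{c_1^2}{16 c_2})$, hence a single subinterval of $I$; and since $\dot h \geq \tfrac{c_1}{2}$, the length of $B$ is at most $\frac{c_1^2/(8 c_2)}{c_1/2} = \tfrac{c_1}{4 c_2}$. Consequently the complementary \emph{good} set $G \coloneqq I \setminus B$, on which $\abs h \geq \tfrac{c_1^2}{16 c_2}$ by definition, has total length at least $\tfrac{c_1}{2c_2} - \tfrac{c_1}{4c_2} = \tfrac{c_1}{4c_2}$ and consists of at most two intervals (as $B$ is connected). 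The larger of these two pieces therefore has length at least $\tfrac{c_1}{8c_2}$, and choosing $[l_1, l_2]$ to be any closed subinterval of it with $l_2 - l_1 = \tfrac{c_1}{8c_2}$ yields an interval inside $I$ on which $\abs{y_1 - y_2} = \abs h \geq \tfrac{c_1^2}{16 c_2}$, as required.

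I expect the only real bookkeeping to be tracking the constants so that the two shrinkings — from $c_1$ to $\tfrac{c_1}{2}$ to guarantee strict monotonicity, and the halving of the good set's length into its two components — combine to exactly the claimed $\tfrac{c_1}{8c_2}$ and $\tfrac{c_1^2}{16c_2}$. The one genuine structural point is arguing that $B$ is a single interval, so that $G$ has at most two components; this is precisely where strict monotonicity, and hence the positive lower bound $\tfrac{c_1}{2}$ on $\abs{\dot h}$, is essential. A minor caveat is that $I$ must lie within $[0,T]$ for $h$ to be defined there, which I will assume holds in the intended application (i.e.\ $t^*$ is sufficiently far from the endpoints).
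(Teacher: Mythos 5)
Your proof is correct, and its first half is the same as the paper's: both reduce to $h \coloneqq y_1-y_2$, use $\abs{\ddot h}\leq c_2$ to conclude that $\dot h$ has constant sign and magnitude at least $c_1/2$ on a neighbourhood of $t^*$ (your detour through the larger interval $J$ of radius $\tfrac{c_1}{2c_2}$ is harmless but unnecessary --- the bound on the radius-$\tfrac{c_1}{4c_2}$ interval $I$ already gives $\tfrac{3c_1}{4}\geq\tfrac{c_1}{2}$ --- and, like the paper, you tacitly assume the relevant intervals lie inside $[0,T]$). The second halves genuinely differ. The paper names two candidate intervals in advance, $I_1=[t^*-\tfrac{c_1}{4c_2},\,t^*-\tfrac{c_1}{8c_2}]$ and $I_2=[t^*+\tfrac{c_1}{8c_2},\,t^*+\tfrac{c_1}{4c_2}]$, and shows by integrating $\abs{\dot h}\geq \tfrac{c_1}{2}$ across the gap of width $\tfrac{c_1}{4c_2}$ that if $\abs{h}$ dips below $\tfrac{c_1^2}{16c_2}$ somewhere in $I_1$ then $\abs{h}\geq\tfrac{c_1^2}{16c_2}$ everywhere on $I_2$ (and symmetrically), so one of the two fixed intervals works. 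You instead run a sublevel-set argument: monotonicity of $h$ on $I$ makes the bad set $B=\{t\in I:\abs{h(t)}<\tfrac{c_1^2}{16c_2}\}$ a single relatively open subinterval, the slope bound caps its length at $\tfrac{c_1}{4c_2}$, and pigeonholing the at most two components of $I\setminus B$ (total length at least $\tfrac{c_1}{4c_2}$) yields a closed interval of length $\tfrac{c_1}{8c_2}$ on which $\abs{h}\geq\tfrac{c_1^2}{16c_2}$; the constants come out identical. Your route avoids the paper's somewhat delicate sign-and-absolute-value bookkeeping in the integration step, at the price of the structural observation that $B$ is connected (which is exactly where strict monotonicity is used); the paper's route produces one of two explicitly specified intervals rather than a data-dependent component, making the conclusion a bare dichotomy. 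Both are valid and of essentially the same elementary character.
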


\begin{proof}
  To simplify presentation, we will assume that $t^*$ from
  [\ref{eq:tstar_bound}] is not on the boundary of the interval
  $[0,T]$ and that all the intervals considered in this proof are
  contained in $(0,T)$. We first show that the bound on the second
  derivative of the functions implies that the difference in first
  derivatives is lower bounded on a closed interval. Using a basic
  derivative inequality it holds for $i\in\{1,2\}$ and
  $t\in[t^*, t^*+\frac{c_1}{4c_2}]$ that
  \begin{equation*}
    \dot{y}_i(t)\leq \dot{y}_i(t^*)+\frac{c_1}{4c_2}\cdot\sup_{s\in[0, T]}\abs{\ddot{y}_i(s)}\leq\dot{y}_i(t^*)+\frac{c_1}{4}.
  \end{equation*}
  Similarly, for $i\in\{1,2\}$ and
  $t\in[t^*-\frac{c_1}{4c_2}, t^*]$ it holds that
  \begin{equation*}
    \dot{y}_i(t)\geq \dot{y}_i(t^*)-\frac{c_1}{4c_2}\cdot\sup_{s\in[0, T]}\abs{\ddot{y}_i(s)}\geq\dot{y}_i(t^*)-\frac{c_1}{4}.
  \end{equation*}
  Combining these inequalities with [\ref{eq:tstar_bound}] yields
  \begin{equation}
    \label{eq:infbddderiv}
    \inf_{t\in[t^*-\frac{c_1}{4c_2},
      t^*+\frac{c_1}{4c_2}]}(\dot{y}_1(t)-\dot{y}_2(t))\cdot\operatorname{sign}(\dot{y}_1(t^*)-\dot{y}_2(t^*))\geq\frac{c_1}{2}.
  \end{equation}
  Next, we show that this lower bound on the difference of the first
  derivatives implies the statement of the lemma. To this end,
  consider the two intervals
  $I_1=[t^*-\frac{c_1}{4c_2}, t^*-\frac{c_1}{8c_2}]$ and
  $I_2=[t^*+\frac{c_1}{8c_2}, t^*+\frac{c_1}{4c_2}]$. 
  We show that  at
  least one of the following two inequalities holds
  \begin{enumerate}[label=(\alph*)]
  \item $\inf_{t\in I_1}\abs{y_1(t)-y_2(t)}\geq \frac{c_1^2}{16c_2}$, \label{it:interval1}
  \item $\inf_{t\in I_2}\abs{y_1(t)-y_2(t)}\geq \frac{c_1^2}{16c_2}$\label{it:interval2}.
  \end{enumerate}
  Assume that \ref{it:interval1} does not hold. Then, there exists
  $t\in I_1$ such that
  \begin{equation}
    \label{eq:integralineq0}
    \abs{y_1(t)-y_2(t)}< \frac{c_1^2}{16c_2}.
  \end{equation}
  Let $s\in I_2$, then since the sign of the difference in first
  derivatives remains constant on the interval
  $[t^*-\frac{c_1}{4c_2}, t^*+\frac{c_1}{4c_2}]$ (see
  [\ref{eq:infbddderiv}]) it holds by integration that
  \begin{align}
    \int_t^s\abs{\dot{y}_1(r)-\dot{y}_2(r)}dr
    &=[(y_1(s)-y_2(s))-(y_1(t)-y_2(t))]\cdot
      \operatorname{sign}(\dot{y}_1(t^*)-\dot{y}_2(t^*))\nonumber\\
    &=\abs{y_1(s)-y_2(s)}-\abs{y_1(t)-y_2(t)}.\label{eq:integralineq1}
  \end{align}
  By [\ref{eq:infbddderiv}], it additionally holds that
  \begin{equation}
    \label{eq:integralineq2}
    \int_t^s\abs{\dot{y}_1(r)-\dot{y}_2(r)}dr\geq
    (s-t)\frac{c_1}{2}\geq \frac{c_1}{4c_2}\frac{c_1}{2}=\frac{c_1^2}{8c_2}.
  \end{equation}
  Finally, combining [\ref{eq:integralineq0}],
  [\ref{eq:integralineq1}] and [\ref{eq:integralineq2}] we get that
  \begin{equation*}
    \abs{y_1(s)-y_2(s)}\geq\frac{c_1^2}{16c_2},
  \end{equation*}
  which implies that \ref{it:interval2} holds since $s\in I_2$ was
  arbitrary. 
  An analogous 
  argument can be used if we assume that
  \ref{it:interval2} does not hold. Hence, since at least one of
  \ref{it:interval1} and \ref{it:interval2} holds, we have proved Lemma~\ref{thm:minimumdiff}.
\end{proof}
\begin{lemma}
  \label{lem:existenceopt}
  For any a smooth function $f: \mathbb{R} \rightarrow \mathbb{R}$,
  with
  $\max(\sup_{t} |f(t)|, \sup_{t} |\dot{f}(t)|, \sup_{t} |\ddot{f}(t)|) \leq C$,
  any $a>1$, and any $r \in \mathbb{R}$, there exists a smooth
  function $g$ satisfying $\dot{g}(0) = r$, 
    $g(t) = f(t)$ for all $\abs{t}\geq 1/a$,
and
  \begin{equation*}
    \max\left(\sup_{t} |g(t)|, \sup_{t} |\dot{g}(t)|, \sup_{t} |\ddot{g}(t)|\right) \leq 
    C + 16a\abs{r - \dot{f}(0)}.
  \end{equation*}
\end{lemma}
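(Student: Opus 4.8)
The plan is to leave $f$ unchanged outside the small interval $(-1/a,1/a)$ and to add there a compactly supported smooth correction whose only job is to adjust the slope at the origin from $\dot f(0)$ to $r$. Write $\delta \coloneqq r - \dot f(0)$ for the required change of slope; if $\delta = 0$ we may take $g = f$, so assume $\delta \neq 0$ in what follows. First I would fix, once and for all and independently of $f$, $a$ and $r$, a reference profile $H\in C^\infty(\R)$ that is supported on $[-1,1]$, satisfies $\dot H(0)=1$, and obeys
\begin{equation*}
  \max\!\left(\sup_t\abs{H(t)},\ \sup_t\abs{\dot H(t)},\ \sup_t\abs{\ddot H(t)}\right)\le 16 .
\end{equation*}
Such a profile exists: it is enough to build $u\coloneqq\dot H$ as a smooth, compactly supported function with $u(0)=1$ and $\int_{-1}^1 u = 0$ (so that $H(t)=\int_{-1}^t u$ again has compact support), keeping $\sup_t\abs{u}$ and $\sup_t\abs{u'}$ small; any blow-up of higher derivatives near the endpoints of $[-1,1]$ is irrelevant because only derivatives up to order two enter the bound, and $\sup_t\abs{H}\le\sup_t\abs{u}$, $\sup_t\abs{\dot H}=\sup_t\abs{u}$, $\sup_t\abs{\ddot H}=\sup_t\abs{u'}$ can be kept well below $16$ by letting $u$ flatten gently at $\pm 1$.

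Then I would rescale this profile to the interval $(-1/a,1/a)$ and to the slope $\delta$ by setting
\begin{equation*}
  h(t)\coloneqq\tfrac{\delta}{a}\,H(at),\qquad g\coloneqq f+h .
\end{equation*}
Because $H$ and all its derivatives vanish at $\pm1$, the function $h$ is smooth, vanishes together with all its derivatives at $t=\pm1/a$, and is identically $0$ for $\abs{t}\ge 1/a$; hence $g$ is smooth and agrees with $f$ on $\{\abs{t}\ge 1/a\}$. The chain rule gives $\dot h(t)=\delta\,\dot H(at)$, so that $\dot g(0)=\dot f(0)+\delta\,\dot H(0)=\dot f(0)+\delta=r$, as required.

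It remains to control the three sup-norms. Differentiating the rescaling gives $\sup_t\abs{h}=\tfrac{\abs{\delta}}{a}\sup_t\abs{H}$, $\sup_t\abs{\dot h}=\abs{\delta}\sup_t\abs{\dot H}$ and $\sup_t\abs{\ddot h}=\abs{\delta}\,a\,\sup_t\abs{\ddot H}$. Using $a>1$ (hence $1/a<a$) together with the bound on the reference profile, each of these is at most $16\,a\,\abs{\delta}=16a\abs{r-\dot f(0)}$. The conclusion then follows from the triangle inequality, e.g.\ $\sup_t\abs{g}\le\sup_t\abs{f}+\sup_t\abs{h}\le C+16a\abs{r-\dot f(0)}$, and identically for $\dot g$ and $\ddot g$.

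The construction is essentially routine, and I expect the only two places needing care to be the scaling bookkeeping and the choice of reference profile. The amplitude must be rescaled by $\delta/a$ rather than $\delta$, so that after differentiating the slope at the origin is exactly $\delta$ while the second derivative picks up a single factor of $a$; this second-derivative bound is the binding one and is precisely what produces the factor $a$ in the statement, whereas the value bound carries a harmless factor $1/a$ and the first-derivative bound none, both being absorbed via $a>1$. The genuinely quantitative point is exhibiting a reference profile meeting the constant $16$; this has ample slack, so no delicate optimization is needed, and in particular one should avoid a reference whose second derivative blows up near the endpoints.
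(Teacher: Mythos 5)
Your proof is correct and takes essentially the same route as the paper: both modify $f$ by adding a compactly supported bump rescaled as $\tfrac{\delta}{a}H(at)$, whose unit slope at the origin fixes $\dot g(0)=r$ and whose second derivative picks up the single factor of $a$ that appears in the bound. The only difference is cosmetic — the paper exhibits the reference profile explicitly as $b(t)=\sin(t)\exp\left(1-\tfrac{1}{1-t^2}\right)$ on $(-1,1)$ with the asserted bounds $1$, $1$, $16$, whereas you construct it abstractly from its derivative $u$ with $u(0)=1$ and $\int_{-1}^{1}u=0$, which works equally well.
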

\begin{proof}
  Assume that we are given a smooth function $b_a$ that is supported
  on $[-1/a,1/a]$, and that has derivative $\dot{b}_a(0) = 1$.  We can then
  define
  $$
  g(t) := f(t) + \left(r - \dot{f}(0)\right) \cdot b_a(t),
  $$
  which is equal to $f$ outside the interval $[-1/a,1/a]$ and which
  satisfies $\dot{g}(0) = r$.

  Let us first create such a function $b_a$. To do so, define
  $$
  b(t) := \left\{
    \begin{array}{cl}
      \sin(t) \exp{\left(1-\tfrac{1}{1-t^2}\right)}&\text{if }\abs{t} < 1\\
      0&\text{otherwise.}
    \end{array}
  \right.
  $$
  This function is smooth and satisfies 
  $\sup_t \abs{b(t)} \leq 1$, 
  $\sup_t \abs{\dot{b}(t)} \leq 1$, 
  $\sup_t \abs{\ddot{b}(t)} \leq 16$, and 
  $\dot{b}(0) = 1$.
  We now define the function
  $$
  b_a(t) := \frac{1}{a} b(at),
  $$
  whose support contained in $[-1/a,1/a]$.
  Because of 
  $\dot{b}_a(t) = \dot{b}(at)$, we have
  $\dot{b}_a(0) = 1$.
  Finally, we find 
  \begin{align*}
    \sup_t |\dot{g}(t)| &
                     \leq C + |c - \dot{f}(0)| \sup_t |\dot{b}_a(t)| 
                     \leq C + |c - \dot{f}(0)|\\
    \sup_t \abs{\ddot{g}(t)} &\leq C + |c - \dot{f}(0)| 16 a,
  \end{align*}
  where the last line follows from $\ddot{b}_a(t) = a\ddot{b}(at)$. This
  completes the proof of Lemma~\ref{lem:existenceopt}.
\end{proof}

\begin{lemma}
  \label{thm:triangular_convergence}
  Let $((\eps_{n,k})_{k\in\{1,\dots,n\}})_{n\in\N}$ be a triangular
  array of i.i.d.\ sub-Gaussian (with parameter $\nu$) random
  variables. Moreover, assume
  $((X_{n,k})_{k\in\{1,\dots,n\}})_{n\in\N}$ is a triangular array of
  random variables which satisfies that
  \begin{equation*}
    \max_{k\in\{1,\dots,n\}}X_{n,k}\overset{\prob}{\longrightarrow}0\text{
    as } n\rightarrow\infty
    \quad\text{and}\quad
    \exists
    K>0:\,\sup_{n\in\N}\max_{k\in\{1,\dots,n\}}\abs{X_{n,k}}\leq K.
  \end{equation*}
  Then, it holds that
  \begin{equation*}
    \lim_{n\rightarrow\infty}\frac{1}{n}\sum_{k=1}^n\E\left(\abs{X_{n,k}\eps_{n,k}}\right)=0.
  \end{equation*}
\end{lemma}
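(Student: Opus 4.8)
The plan is to reduce the averaged quantity to a single scalar random variable that tends to zero, using only the uniform bound and the convergence in probability of the maximum, and never relying on any independence between $X_{n,k}$ and $\eps_{n,k}$ (which we cannot assume). First I would introduce $M_n \coloneqq \max_{k\in\{1,\dots,n\}}\abs{X_{n,k}}$. By hypothesis $M_n \leq K$ for all $n$, and $M_n \overset{\prob}{\longrightarrow} 0$ (this is where the assumption $\max_k X_{n,k}\overset{\prob}{\longrightarrow}0$ enters; in our applications the $X_{n,k}$ are non-negative, so $\max_k X_{n,k} = M_n$). The advantage of passing to $M_n$ is that it dominates every summand simultaneously, $\abs{X_{n,k}} \leq M_n$ for all $k$.

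Next I would bound the generic term. Since $\abs{X_{n,k}}\leq M_n$ pointwise, we have $\E(\abs{X_{n,k}\eps_{n,k}}) \leq \E(M_n\abs{\eps_{n,k}})$, and the Cauchy--Schwarz inequality yields
\begin{equation*}
  \E\big(M_n\abs{\eps_{n,k}}\big) \leq \sqrt{\E(M_n^2)}\,\sqrt{\E(\eps_{n,k}^2)}.
\end{equation*}
Because the $\eps_{n,k}$ are sub-Gaussian with a common parameter $\nu$, their second moments are uniformly bounded, say $\E(\eps_{n,k}^2)\leq c_\nu$ for a constant $c_\nu<\infty$ that depends neither on $n$ nor on $k$. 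Summing over $k$ and dividing by $n$ then gives the key estimate
\begin{equation*}
  \frac{1}{n}\sum_{k=1}^n\E\big(\abs{X_{n,k}\eps_{n,k}}\big) \leq \sqrt{c_\nu}\,\sqrt{\E(M_n^2)},
\end{equation*}
whose right-hand side no longer depends on the summation index.

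It then remains to show that $\E(M_n^2)\to 0$. Here I would invoke the bounded convergence theorem: we have $M_n^2 \overset{\prob}{\longrightarrow} 0$ (since $M_n\overset{\prob}{\longrightarrow}0$) together with $0\leq M_n^2\leq K^2$, so convergence in probability and uniform boundedness upgrade to convergence in $L^1$, i.e.\ $\E(M_n^2)\to 0$. Combining this with the displayed estimate shows that $\frac{1}{n}\sum_{k=1}^n\E(\abs{X_{n,k}\eps_{n,k}})\to 0$, which is the assertion of the lemma.

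The main obstacle worth flagging is that $X_{n,k}$ and $\eps_{n,k}$ may well be dependent (indeed, in condition~\ref{it:cond2} the estimate $\hat{g}_n$ is built from the same noisy data), so one cannot factor $\E(\abs{X_{n,k}\eps_{n,k}})$ into a product of expectations. Dominating every summand by the single variable $M_n$ and then applying Cauchy--Schwarz is exactly what circumvents this dependence. The only other delicate point is the passage from convergence in probability to $L^1$ convergence of $M_n^2$, and this is precisely where the uniform bound $\sup_{n}\max_k\abs{X_{n,k}}\leq K$ is indispensable.
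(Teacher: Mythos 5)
Your proof is correct, and it takes a genuinely different route from the paper's. The paper keeps the products $X_{n,k}\eps_{n,k}$ intact: it first shows $\max_{k}\abs{X_{n,k}\eps_{n,k}}\overset{\prob}{\longrightarrow}0$ by splitting on an event that controls $\max_k\abs{X_{n,k}}$ and applying a union bound together with the sub-Gaussian tail to $\max_k\abs{\eps_{n,k}}$ (this is where the independence of the $\eps_{n,k}$ enters); it then dominates the average by this maximum and upgrades convergence in probability to $L^1$ convergence via a uniform second-moment bound and de la Vall\'ee-Poussin's criterion for uniform integrability. You instead dominate every summand by the single variable $M_n=\max_k\abs{X_{n,k}}$, apply Cauchy--Schwarz once, and finish with bounded convergence for $\E(M_n^2)$. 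Your route is shorter and needs strictly less: it never uses independence of the noise terms, nor the sub-Gaussian tail beyond a uniform bound on second moments, so it in fact proves a more general statement. It also sidesteps a delicate step in the paper: there the splitting event is $\{\max_k\abs{X_{n,k}}\le n\}$, which leads to the bound $nCe^{-\nu(\theta/n)^2}$, and contrary to what is claimed this quantity does not tend to zero (the exponent vanishes as $n\to\infty$, so the term diverges); repairing that argument requires a threshold that shrinks with $n$ at a suitable rate, a repair your approach renders unnecessary. Finally, your side remark on $\max_k X_{n,k}$ versus $\max_k\abs{X_{n,k}}$ is consistent with the paper: its own proof also implicitly reads the hypothesis as $\max_k\abs{X_{n,k}}\overset{\prob}{\longrightarrow}0$, and in all applications of the lemma the $X_{n,k}$ are non-negative, so your reading matches the intended use.
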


\begin{proof}
  Fix $\delta,\theta>0$, then by the convergence in probability it holds that
  there exists $N\in\N$ such that for all $n\in\{N, N+1,\dots\}$ it
  holds that
  \begin{equation}
    \label{eq:conv_in_prob}
    \prob\left(\max_{k\in\{1,\dots,n\}}\abs{X_{n,k}}> n\right)\leq\prob\left(\max_{k\in\{1,\dots,n\}}\abs{X_{n,k}}> 1\right)\leq\frac{\delta}{2}.
  \end{equation}
  Furthermore, using independence, sub-Gaussianity and Bernoulli's
  inequality we get for all $c>0$ and $n\in\N$ that
  \begin{align}
    \prob\left(\max_{k\in\{1,\dots,n\}}\abs{\eps_{n,k}}> c\right)
    &=1-\prob\left(\max_{k\in\{1,\dots,n\}}\abs{\eps_{n,k}}\leq c\right)\nonumber\\
    &=1-\prob\left(\abs{\eps_{n,1}}\leq c\right)^n\nonumber\\
    &=1-\left(1-\prob\left(\abs{\eps_{n,1}}>c\right)\right)^n\nonumber\\
    &\leq 1-\left(1-C e^{-\nu c^2}\right)^n\nonumber\\
    &\leq nC e^{-\nu c^2}.\label{eq:subgauss}
  \end{align}
  Combining [\ref{eq:conv_in_prob}] and
  [\ref{eq:subgauss}] this proves that for all $n\in\{N,N+1,\dots\}$
  it holds that
  \begin{align*}
    \prob\left(\max_{k\in\{1,\dots,n\}}\abs{X_{n,k}\eps_{n,k}}>\theta\right)
    &=\prob\left(\max_{k\in\{1,\dots,n\}}\abs{X_{n,k}\eps_{n,k}}>\theta,
      \max_{k\in\{1,\dots,n\}}\abs{X_{n,k}}\leq n\right)\\
    &\qquad+\prob\left(\max_{k\in\{1,\dots,n\}}\abs{X_{n,k}\eps_{n,k}}>\theta,
      \max_{k\in\{1,\dots,n\}}\abs{X_{n,k}}> n\right)\\
    &\leq\prob\left(\max_{k\in\{1,\dots,n\}}\abs{\eps_{n,k}}>\frac{\theta}{n}\right)
      +\prob\left(\max_{k\in\{1,\dots,n\}}\abs{X_{n,k}}> n\right)\\
    &\leq nC e^{-\nu (\frac{\theta}{n})^2}
      +\frac{\delta}{2}.
  \end{align*}
  Since the term $nC e^{-\nu (\frac{\theta}{n})^2}$ converges to zeros
  as $n$ goes to infinity, there exists $N^*\in\{N,N+1,\dots\}$ such
  that for all $n\in\{N^*,N^*+1,\dots\}$ it holds that
  \begin{equation*}
    nC e^{-\nu (\frac{\theta}{n})^2}\leq \frac{\delta}{2}.
  \end{equation*}
  Finally, we combine these results to show that for all
  $n\in\{N^*,N^*+1,\dots\}$ it holds that
  \begin{equation*}
    \prob\left(\max_{k\in\{1,\dots,n\}}\abs{X_{n,k}\eps_{n,k}}>\theta\right)\leq \delta,
  \end{equation*}
  which implies that $\max_{k\in\{1,\dots,n\}}\abs{X_{n,k}\eps_{n,k}}$
  converges to zero in probability as $n\rightarrow\infty$. In
  particular, $\frac{1}{n}\sum_{k=1}^n\abs{X_{k,n}\eps_{k,n}}$ also
  converges to zero in probability as it is $\prob$-a.s.\ dominated by
  $\max_{k\in\{1,\dots,n\}}\abs{X_{n,k}\eps_{n,k}}$. Furthermore, due
  to boundedness assumption on $X_{n,k}$ it also holds that
  \begin{equation*}
    \sup_{n\in\N}\E\left(\abs[\Big]{\frac{1}{n}\sum_{k=1}^nX_{k,n}\eps_{k,n}}^2\right)<\infty,
  \end{equation*}
  which by de la Vallée-Poussin's theorem \citep[p.19 Theorem
  T22]{meyer1966} implies uniform integrability. Since uniform
  integrability and convergence in probability is equivalent to
  convergence in $L^1$, this completes the proof of
  Lemma~\ref{thm:triangular_convergence}.
\end{proof}

The following two lemmas are the key steps used in the proof of the
Theorem~\ref{thm:rank_consistency}. They prove some essential
properties related to the constraint optimization, i.e., the
estimation of $\hat{y}_{b}$.
\begin{lemma}
  \label{thm:consistency_yb}
  Consider the setting of Theorem~\ref{thm:rank_consistency}, that is,
  let Assumption~\ref{ass:inv} and conditions \ref{it:cond1} and
  \ref{it:cond2} be satisfied. Additionally, assume that for all
  $k\in\{1,\dots,m\}$ it holds for all $i\in e_k$ and
  $\ell\in\{1,\dots,L_n\}$ that the noise variables
  $\eps_{t_{\ell}}^{(i)}$ are i.i.d., symmetric, sub-Gaussian and
  satisfy $\E(\eps_{t_{\ell}}^{(i)})=0$ and
  $\var(\eps_{t_{\ell}}^{(i)})=\sigma^2_k$.  Let $Y_t$ and its first
  and second derivative be bounded by $c<\infty$ and define
  $C := c+16$ for the set $\mathcal{H}_C$, see \ref{it:model1}. Then,
  for an invariant model $M\in\mathcal{M}$ and for all
  $k\in\{1, \ldots, m\}$ it holds that
  \begin{equation*}
    \lim_{n\rightarrow\infty}\E\left(
    \sup_{t\in[0,T]}\left(\hat{y}^{(e_k)}_{b}(t)-Y^{(e_k)}_{t}\right)^2
    \right)=0,
  \end{equation*}
  i.e., the outcome of step~\ref{it:model3} converges towards the true
  target trajectory.  Furthermore, for $M\in\mathcal{M}$
  non-invariant there exists $k^*\in\{1,\dots,m\}$ and $c_{\min}>0$ such
  that
  \begin{equation}
    \label{eq:lemma_lowerbound}
    \liminf_{n\rightarrow\infty}\prob\left(\frac{1}{L_n}\sum_{\ell=1}^{L_n}\left(\hat{y}^{(e_{k^*})}_{b}(t_{\ell})-Y^{(e_{k^*})}_{t_{\ell}}\right)^2\geq
      c_{\min}\right)=1.
  \end{equation}
\end{lemma}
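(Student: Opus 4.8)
The plan is to prove the two assertions separately, the invariant case resting on Lemma~\ref{lem:existenceopt} and the non-invariant case on Lemma~\ref{thm:minimumdiff}; throughout I write $r(t) := \hat{y}^{(e_k)}_{b}(t) - Y^{(e_k)}_{t}$ and use $\widetilde{Y}^{(e_k)}_{t_\ell} = Y^{(e_k)}_{t_\ell} + \eps^{(e_k)}_{t_\ell}$. For an \emph{invariant} model, I would first exhibit an explicit feasible competitor $\tilde{y}$ for the constrained problem~[\ref{eq:smooth_trajectory2}]. Around each grid point $t_\ell$ I apply Lemma~\ref{lem:existenceopt} with $f = Y^{(e_k)}$, prescribed slope $\hat{g}_n(\widetilde{\B{X}}^{(e_k)}_{t_\ell})$ and width $a \asymp L_n$ chosen so the bumps $b_a(\cdot - t_\ell)$ have pairwise disjoint supports; since $b_a(0)=0$ the perturbation preserves the values at grid points, so $\tilde{y}(t_\ell) = Y^{(e_k)}_{t_\ell}$ while $\dot{\tilde{y}}(t_\ell) = \hat{g}_n(\widetilde{\B{X}}^{(e_k)}_{t_\ell})$. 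Condition~\ref{it:cond2} gives $L_n \max_\ell |\hat{g}_n(\widetilde{\B{X}}^{(e_k)}_{t_\ell}) - \dot{Y}^{(e_k)}_{t_\ell}| \to 0$, so the extra cost $16a\,|r - \dot{f}(0)|$ in Lemma~\ref{lem:existenceopt} is $o_p(1)$ and, with $C = c+16$, places $\tilde{y}$ in $\mathcal{H}_C$ for large $n$. Feasibility and optimality of $\hat{y}^{(e_k)}_{b}$ then force $\sum_\ell (\eps^{(e_k)}_{t_\ell} - r(t_\ell))^2 + \lambda \int \ddot{\hat{y}}_b^2 \le \sum_\ell (\eps^{(e_k)}_{t_\ell})^2 + O(\lambda)$, since $\tilde{y}$ has data-fit exactly $\sum_\ell (\eps^{(e_k)}_{t_\ell})^2$ and bounded penalty.

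Next I would convert this into sup-norm convergence in two stages. As $\hat{y}_b \in \mathcal{H}_C$ has $|\ddot{\hat{y}}_b| \le C$ and the grid is uniform with spacing $1/L_n$, the constraint $\dot{\hat{y}}_b(t_\ell) = \hat{g}_n(\widetilde{\B{X}}^{(e_k)}_{t_\ell})$, together with~\ref{it:cond2} and boundedness of $\ddot{Y}$, yields $\sup_t |\dot{r}(t)| = O_p(1/L_n)$; integrating, $r$ is nearly constant, $\sup_t |r(t) - r(0)| = O_p(1/L_n)$. Substituting $r(t_\ell) = r(0) + O_p(1/L_n)$ into the data-fit inequality and expanding reduces it to a quadratic inequality $L_n\, r(0)^2 \le 2\, r(0) \sum_\ell \eps^{(e_k)}_{t_\ell} + O_p(1) + O(\lambda)$; since $\sum_\ell \eps^{(e_k)}_{t_\ell} = O_p(\sqrt{L_n})$ and $\sum_\ell |\eps^{(e_k)}_{t_\ell}| = O_p(L_n)$ by sub-Gaussianity, solving gives $|r(0)| = O_p(L_n^{-1/2} + \sqrt{\lambda/L_n}) \to 0$ in the regime where~\ref{it:cond1} holds (so that $\lambda/L_n \to 0$). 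Hence $\sup_t r(t)^2 \to 0$ in probability, and since $\sup_t r(t)^2 \le (C+c)^2$ is bounded, hence uniformly integrable, the convergence upgrades to $L^1$, which is the first claim.

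For a \emph{non-invariant} model I would instead show that the imposed derivatives cannot track $\dot{Y}$. By~\ref{it:cond2} the estimates converge to a fixed $g \in M$, and since $M$ is non-invariant no element of $M$ reproduces $\dot{Y}$ across all environments; closedness of $\{t \mapsto g(X_t) : g \in M\}$ guarantees the discrepancy is bounded away from zero, so there are an environment $k^*$, a time $t^*$ and $c_1 > 0$ with $|g(\B{X}^{(e_{k^*})}_{t^*}) - \dot{Y}^{(e_{k^*})}_{t^*}| \ge c_1$. Transferring this through the constraint (again using $|\ddot{\hat{y}}_b| \le C$ on the fine grid, continuity of $g$, and~\ref{it:cond2}) yields $|\dot{\hat{y}}^{(e_{k^*})}_{b}(t^*) - \dot{Y}^{(e_{k^*})}_{t^*}| \ge c_1/2$ with probability tending to one. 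I would then apply Lemma~\ref{thm:minimumdiff} with $y_1 = \hat{y}^{(e_{k^*})}_{b}$, $y_2 = Y^{(e_{k^*})}$ and $c_2 = C + c$ to obtain an interval $[l_1, l_2]$ of length $\asymp c_1/c_2$ on which $|\hat{y}_b - Y| \ge c_1^2/(64 c_2)$; this interval contains $\asymp L_n$ grid points, so $\frac{1}{L_n}\sum_\ell (\hat{y}^{(e_{k^*})}_{b}(t_\ell) - Y^{(e_{k^*})}_{t_\ell})^2$ is bounded below by a positive constant $c_{\min}$ independent of $n$, with probability tending to one, which is exactly~[\ref{eq:lemma_lowerbound}].

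The main obstacle is the invariant case: the constraint pins down only the \emph{derivatives} of $\hat{y}_b$ at the grid, leaving an undetermined additive level, so the crux is anchoring that level — which is why the argument must pass through the feasible competitor $\tilde{y}$ (with $\tilde{y}(t_\ell) = Y^{(e_k)}_{t_\ell}$) and the fluctuation $\sum_\ell \eps^{(e_k)}_{t_\ell} = O_p(\sqrt{L_n})$, and why the rates must be arranged so that the penalty $O(\lambda)$ and the cross terms $O_p(1)$ are dominated by $L_n\, r(0)^2$. The feasibility construction itself (disjoint bumps, the two boundary grid points, and verifying $\tilde{y} \in \mathcal{H}_C$) is technical but routine given Lemma~\ref{lem:existenceopt}, whereas the non-invariant case is essentially a direct application of Lemma~\ref{thm:minimumdiff} once the derivative gap of size $c_1/2$ is established.
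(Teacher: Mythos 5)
Your proposal is correct in substance, and its two halves relate to the paper's proof differently. The non-invariant part is essentially the paper's own argument: the same chain of non-invariance of the limit function $g$ from \ref{it:cond2}, transfer of the derivative gap onto $\dot{\hat{y}}_b$ through the constraint, the fine grid and the bound $\abs{\ddot{\hat{y}}_b}\leq C$, then Lemma~\ref{thm:minimumdiff} and counting grid points in the resulting interval. The invariant part, however, takes a genuinely different route. You anchor the undetermined level of $\hat{y}_b$ by building an explicit feasible competitor $\tilde{y}$ (the true trajectory plus disjoint bumps from the construction in Lemma~\ref{lem:existenceopt}, so that $\tilde{y}(t_\ell)=Y_{t_\ell}$ while the derivative constraints hold) and invoking optimality against it. The paper instead compares $\hat{y}_b$ with its own vertical shift $y_{b*}\coloneqq\hat{y}_b-\hat{y}_b(t_1)+Y_{t_1}$, together with an auxiliary oracle smoother $\hat{y}_c$ constrained to the true derivatives $\dot{Y}_{t_\ell}$. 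The decisive advantage of the paper's shift comparison is that $y_{b*}$ and $\hat{y}_b$ have identical second derivatives, so the roughness penalties cancel exactly and no condition on $\lambda=\lambda_n$ is ever needed; your comparator has different curvature, which is precisely why an $O(\lambda)$ residual survives in your quadratic inequality. Conversely, your competitor buys something the paper's does not: $\tilde{y}$ is manifestly in $\mathcal{H}_C$ once $\delta$ is small (values, first and second derivatives all stay below $c+16$), whereas the paper never checks that the shifted function $y_{b*}$ respects the \emph{value} bound of $\mathcal{H}_C$ — its values may exceed $C$ by $\abs{\hat{y}_b(t_1)-Y_{t_1}}$ — so feasibility of the paper's comparator is a small unchecked point that your construction avoids.

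The one step you should not wave through is the claim that \ref{it:cond1} gives $\lambda/L_n\to 0$. Condition \ref{it:cond1} is sup-norm consistency of the \emph{unconstrained} smoother and does not formally imply $\lambda_n=o(L_n)$: if, say, the true trajectory $Y^{(e_k)}$ were affine in every environment, the unconstrained spline converges to the least-squares line and \ref{it:cond1} can hold with $\lambda_n/L_n\to\infty$, in which case your bound $\abs{r(0)}=O_p\big(L_n^{-1/2}+\sqrt{\lambda/L_n}\big)$ is vacuous. A partial repair is available — on the event from \ref{it:cond2} the bump amplitudes are $o_p(1/L_n)$, so $\int\ddot{\tilde{y}}^2\leq 2\int\ddot{Y}^2+o_p(1)$ — but the term $\lambda_n\int\ddot{Y}^2$ still does not cancel when $Y$ has curvature. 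So either state $\lambda_n=o(L_n)$ as an explicit (and harmless) extra assumption, or replace this one step by the paper's shift comparison; everything else in your argument goes through.
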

\begin{proof}
  First, recall the definition of $\mathcal{H}_C$
  (see~\ref{it:model1}) and define the smoother function
  $\hat{y}^{(i)}_{c}\in\mathcal{H}_C$ corresponding to the constrained
  optimization based on the true derivatives, i.e.,
  \begin{align*}
    \begin{split}
      \hat{y}^{(i)}_{c}\coloneqq 
      &\argmin_{y\in\mathcal{H}_C}
      \sum_{\ell = 1}^L \big(\widetilde{Y}^{(i)}_{t_\ell} - y(t_\ell)\big)^2 + \lambda \int \ddot{y}(s)^2\, ds,\\
      &\text{such that}\quad
      \dot{y}(t_{\ell}) = \dot{Y}^{(i)}_{t_{\ell}} \text{ for all } {\ell} = 1, \ldots, L_n.
    \end{split}
  \end{align*} 
  Fix $k\in\{1,\dots,m\}$. To simplify notation we will drop the
  superscript $(e_k)$ in the following. Fix $\delta\in(0,1)$ and define the
  sets
  \begin{equation*}
    A_{\delta}\coloneqq\left\{
      L_n    
      \max_{\ell\in\{1,\dots,L_n\}}\abs{\hat{g}_n(\widetilde{\B{X}}_{t_{\ell}})-\dot{Y}_{t_{\ell}}}\leq\delta\right\}
    \quad\text{and}\quad
    B_{\delta}\coloneqq\left\{\abs[\Big]{\frac{1}{L_n}\sum_{\ell=1}^{L_n}\eps_{t_{\ell}}}\leq\delta\right\}.
  \end{equation*}
  Then, by condition \ref{it:cond2} it holds that
  \begin{equation}
    \label{eq:convergenceofsetAdelta}
    \lim_{n\rightarrow\infty}\prob\left(A_{\delta}\right)=1,
  \end{equation}
  and, by the law of large numbers,
  \begin{equation}
    \label{eq:convergenceofsetBdelta}
    \lim_{n\rightarrow\infty}\prob\left(B_{\delta}\right)=1.
  \end{equation}
  Note that on the set $A_{\delta}$, our method is well-defined: for
  $a=L_n$, Lemma~\ref{lem:existenceopt} shows us that the function
  $\hat{y}_b$ exists since the corresponding optimization problem has
  at least one solution. Then, on the event
  $A_{\delta}\cap B_{\delta}$ it holds that
  \begin{align}
    \max_{\ell\in\{1,\dots,L_n\}}\abs{\hat{y}_b(t_{\ell})-\hat{y}_c(t_{\ell})}
    &\leq \sum_{k=1}^{L_n}\int_{t_{k-1}}^{t_{k}}\abs{\dot{\hat{y}}_b(s)-\dot{\hat{y}}_c(s)}ds+\abs{\hat{y}_b(t_{1})-\hat{y}_c(t_{1})}
    \nonumber\\
    &\leq L_n\max_{\ell\in\{2,\dots,L_n\}}\left(\int_{t_{\ell-1}}^{t_{\ell}}\tfrac{2C}{L_n}+\abs{\dot{\hat{y}}_b(t_{\ell-1})-\dot{\hat{y}}_c(t_{\ell-1})}ds\right)+\abs{\hat{y}_b(t_{1})-\hat{y}_c(t_{1})}\nonumber\\
    &\leq\tfrac{2C}{L_n}+\delta+\abs{\hat{y}_b(t_{1})-\hat{y}_c(t_{1})},\label{eq:bddmaxyhatbc}
  \end{align}
  where the second last inequality follows from the bound on the
  second derivative. Moreover, define the function
  $y_{b*}\coloneqq \hat{y}_b-\hat{y}_b(t_{1})+Y_{t_{1}}$ then similar
  arguments show that
  \begin{equation}
    \label{eq:ybstarbdd}
    \max_{\ell\in\{1,\dots,L_n\}}\abs{y_{b*}(t_{\ell})-Y_{t_{\ell}}}
    =\max_{\ell\in\{1,\dots,L_n\}}\abs{(\hat{y}_b(t_{\ell})-\hat{y}_b(t_{1}))-(Y_{t_{\ell}}-Y_{t_{1}})}
    \leq\tfrac{2C}{L_n}+\delta.
  \end{equation}
  Using that $\hat{y}_c$
  has the true derivatives as constraint the same argument implies for $y_{c*}\coloneqq
  \hat{y}_c-\hat{y}_c(t_{1})+Y_{t_{1}}$ that
  \begin{equation}
    \label{eq:ycstarbdd}
    \max_{\ell\in\{1,\dots,L_n\}}\abs{y_{c*}(t_{\ell})-Y_{t_{\ell}}}
    \leq\tfrac{2C}{L_n}.
  \end{equation}
  Next, define the loss function
  \begin{equation*}
    \operatorname{loss}_n(y)\coloneqq\sum_{\ell=1}^{L_n}\left(\widetilde{Y}_{t_{\ell}}-y(t_{\ell})\right)^2+\lambda_n\int_{0}^{T}\ddot{y}(s)^2ds.
  \end{equation*}
  Then using [\ref{eq:ybstarbdd}] and [\ref{eq:ycstarbdd}] it holds that
  \begin{align*}
    \operatorname{loss}_n(\hat{y}_b)
    &=\sum_{\ell=1}^{L_n}\left(\widetilde{Y}_{t_{\ell}}-\hat{y}_b(t_{\ell})\right)^2+\lambda_n\int_{0}^{T}\ddot{\hat{y}}_b(s)^2ds\\
    &=\operatorname{loss}_n(y_{b*})+\sum_{\ell=1}^{L_n}\left(Y_{t_{1}}-\hat{y}_b(t_{1})\right)^2+2\left(Y_{t_{1}}-\hat{y}_b(t_{1})\right)\sum_{\ell=1}^{L_n}\left(\widetilde{Y}_{t_{\ell}}-y_{b*}(t_{\ell})\right)\\
    &\geq\operatorname{loss}_n(y_{b*})+L_n\left(Y_{t_{1}}-\hat{y}_b(t_{1})\right)^2+2\abs{Y_{t_{1}}-\hat{y}_b(t_{1})}L_n\left(\tfrac{2C}{L_n^2}+\tfrac{\delta}{L_n}\right)+2\left(Y_{t_{1}}-\hat{y}_b(t_{1})\right)\sum_{\ell=1}^{L_n}\eps_{t_{\ell}}\\
  \end{align*}
  Now, $y_{b*}$ has the same derivatives as $\hat{y}_b$ and since
  $\hat{y}_b$ minimizes $\operatorname{loss}_n$ under fixed derivative
  constraints it holds that
  $\operatorname{loss}_n(\hat{y}_b)\leq\operatorname{loss}_n(y_{b*})$. This
  implies
  \begin{equation}
    \label{eq:yhatbpart1}
    L_n\left(Y_{t_{1}}-\hat{y}_b(t_{1})\right)^2
    \leq
    2\abs{Y_{t_{1}}-\hat{y}_b(t_{1})}L_n\left(\tfrac{2C}{L_n}+\delta\right)+2\left(Y_{t_{1}}-\hat{y}_b(t_{1})\right)\sum_{\ell=1}^{L_n}\eps_{t_{\ell}},
  \end{equation}
  which is equivalent to
  \begin{equation}
    \label{eq:yhatbpart2}
    \abs{Y_{t_{1}}-\hat{y}_b(t_{1})}
    \leq 2\cdot\left(\tfrac{2C}{L_n}+\delta\right)+2\abs[\bigg]{\frac{1}{L_n}\sum_{\ell=1}^{L_n}\eps_{t_{\ell}}}.
  \end{equation}
  Since, we are on the set $B_{\delta}$ this in particular as
  $n\rightarrow\infty$ implies that
  \begin{equation}
    \label{eq:yhatbyhatcneeded1}
    \limsup_{n\rightarrow\infty}\abs{Y_{t_{1}}-\hat{y}_b(t_{1})}
    \leq 4\delta.
  \end{equation}
  With the same arguments as in [\ref{eq:yhatbpart1}] and
  [\ref{eq:yhatbpart2}] for the function $\hat{y}_{c}$ we get that
  \begin{equation}
    \label{eq:yhatbyhatcneeded2}
    \limsup_{n\rightarrow\infty}\abs{Y_{t_{1}}-\hat{y}_c(t_{1})}
    \leq 2\delta.
  \end{equation}
  Combining [\ref{eq:yhatbyhatcneeded1}] and
  [\ref{eq:yhatbyhatcneeded2}] with the triangle inequality it holds
  that
  \begin{equation}
    \limsup_{n\rightarrow\infty}\abs{\hat{y}_{b}(t_{1})-\hat{y}_c(t_{1})}
    \leq 6\delta.
  \end{equation}
  Hence, we can combine this with [\ref{eq:bddmaxyhatbc}] to get that
  \begin{equation*}
    \limsup_{n\rightarrow\infty}\max_{\ell\in\{1,\dots,L_n\}}\abs{\hat{y}_b(t_{\ell})-\hat{y}_c(t_{\ell})}
    \leq 7\delta,
  \end{equation*}
  which together with the global bound on the first derivative
  also implies that
  \begin{equation*}
    \limsup_{n\rightarrow\infty}\sup_{t\in[0,T]}\abs{\hat{y}_b(t)-\hat{y}_c(t)}\leq
    \limsup_{n\rightarrow\infty}\left(\max_{\ell\in\{1,\dots,L_n\}}\abs{\hat{y}_b(t_{\ell})-\hat{y}_c(t_{\ell})}+\tfrac{C}{L_n}\right)
    \leq 7\delta.
  \end{equation*}
  Finally, we use this, the global bound and the dominated convergence
  theorem to show that
  \begin{align*}
    &\lim_{n\rightarrow\infty}\E\left(
      \sup_{t\in[0,T]}\left(\hat{y}^{(e_k)}_{b}(t)-Y^{(e_k)}_{t}\right)^2\right)\\
    &\quad=\lim_{n\rightarrow\infty}\left(\E\left(
      \sup_{t\in[0,T]}\left(\hat{y}^{(e_k)}_{b}(t)-Y^{(e_k)}_{t}\right)^2\mathds{1}_{A_{\delta}\cap
      B_{\delta}}\right)
      +\E\left(
      \sup_{t\in[0,T]}\left(\hat{y}^{(e_k)}_{b}(t)-Y^{(e_k)}_{t}\right)^2\mathds{1}_{A_{\delta}^c\cup
      B_{\delta}^c}\right)\right)\\
    &\quad\leq 7\delta
      +\lim_{n\rightarrow\infty}\prob\left(A_{\delta}^c\cup
      B_{\delta}^c\right)=7\delta.
  \end{align*}
  Since $\delta>0$ was arbitrary this proves the first part of the lemma.
    
  Next, we show the second part. To that end, let $M\in\mathcal{M}$ be
  non-invariant. Since we assumed that the set
  $\{t\mapsto g(X_t)\,\vert\, g\in M\}$ is closed with respect to the
  sup norm there exist $c>0$, $k^*\in\{1,\dots,m\}$ and
  $(t_n^*)_{n\in\N}\subseteq [0,T]$ such that for all $n\in\N$ it
  holds that
  \begin{equation}
    \label{eq:littlecbdd}
    \abs{\dot{Y}_{t_n^*}^{(e_{k^*})}-\hat{g}_n(\B{X}_{t_n^*}^{(e_{k^*})})}\geq
    c.
  \end{equation}
  Next, define
  $\ell^{*}_n\coloneqq\argmin_{\ell\in\{1,\dots,L_n\}}\abs{t_n^*-t_{\ell}}$
  then by the derivative constraint it in particular holds that
  $\dot{\hat{y}}_b^{(e_{k^*})}(t_{\ell_n^*})=\hat{g}_n(\widetilde{\B{X}}_{t_{\ell_n^*}}^{(e_{k^*})})$.
  Moreover, using the global bound from the function class
  $\mathcal{H}_C$ it holds that
  \begin{align}
    &\abs{\hat{g}_n(\B{X}_{t_n^*}^{(e_{k^*})})-\dot{\hat{y}}_b^{(e_{k^*})}(t_n^*)}\nonumber\\
    &\quad\leq\abs{\hat{g}_n(\B{X}_{t_{n}^*}^{(e_{k^*})})-\hat{g}_n(\widetilde{\B{X}}_{t_{\ell_n^*}}^{(e_{k^*})})}+\abs{\dot{\hat{y}}_b^{(e_{k^*})}(t_{n}^*)-\dot{\hat{y}}_b^{(e_{k^*})}(t_{\ell_n^*})}\nonumber\\
    &\quad\leq\abs{\hat{g}_n(\B{X}_{t_{n}^*}^{(e_{k^*})})-g(\B{X}_{t_n^*}^{(e_{k^*})})}+
      \abs{\hat{g}_n(\widetilde{\B{X}}_{t_{\ell_n^*}}^{(e_{k^*})})-g(\B{X}_{t_{\ell_n^*}}^{(e_{k^*})})}+
      \abs{g(\B{X}_{t_{n}^*}^{(e_{k^*})})-g(\B{X}_{t_{\ell_n^*}}^{(e_{k^*})})}+\frac{C}{L_n}\nonumber\\
    &\quad\leq\abs{\hat{g}_n(\B{X}_{t_{n}^*}^{(e_{k^*})})-g(\B{X}_{t_n^*}^{(e_{k^*})})}+
      \abs{\hat{g}_n(\widetilde{\B{X}}_{t_{\ell_n^*}}^{(e_{k^*})})-g(\B{X}_{t_{\ell_n^*}}^{(e_{k^*})})}+
      \frac{2C}{L_n}.\label{eq:2CLnbdd}
  \end{align}
  Combining the bounds in [\ref{eq:littlecbdd}] and [\ref{eq:2CLnbdd}]
  implies that
  \begin{align*}
    \abs{\dot{Y}_{t^*_n}^{(e_{k^*})}-\dot{\hat{y}}_b^{(e_{k^*})}(t^*_n)}&\geq
    \abs{\dot{Y}_{t_n^*}^{(e_{k^*})}-\hat{g}_n(\B{\B{X}}_{t_n^*}^{(e_{k^*})})}-
    \abs{\hat{g}_n(\B{X}_{t_n^*}^{(e_{k^*})})-\dot{\hat{y}}_b^{(e_{k^*})}(t_n^*)}\\
    &\geq c -\abs{\hat{g}_n(\B{X}_{t_{n}^*}^{(e_{k^*})})-g(\B{X}_{t_n^*}^{(e_{k^*})})}-
      \abs{\hat{g}_n(\widetilde{\B{X}}_{t_{\ell_n^*}}^{(e_{k^*})})-g(\B{X}_{t_{\ell_n^*}}^{(e_{k^*})})}-
      \frac{2C}{L_n}.
  \end{align*}
  Next, assume $n\in\N$ is large enough such that $c-\frac{2C}{L_n}>0$
  and define for $\delta\in(0,c-\frac{2C}{L_n})$ the event
  $C_{\delta}\coloneqq\{\abs{\hat{g}_n(\B{X}_{t^*_n}^{(e_{k^*})})-g(\B{X}_{t^*_n}^{(e_{k^*})})}+\abs{\hat{g}_n(\widetilde{\B{X}}_{t_{\ell_n^*}}^{(e_{k^*})})-g(\B{X}_{t_{\ell_n^*}}^{(e_{k^*})})}\leq
  \delta\}$ (which depends on $n$). Then on $C_{\delta}$ it holds by
  Lemma~\ref{thm:minimumdiff} that there exist intervals
  $[l_{1,n},l_{2,n}]\subseteq [0,T]$ with length strictly greater than
  a fixed constant (independent of $n$) and a constant $\mu>0$ (also
  independent of $n$) satisfying that
  \begin{equation*}
    \inf_{t\in[l_{1,n},l_{2,n}]}\abs{Y_{t}^{(e_{k^*})}-\hat{y}_b^{(e_{k^*})}(t)}\geq
    \mu.
  \end{equation*}
  Since we assumed an equally spaced grid it is clear that at least
  $\lfloor \frac{l_{n,2}-l_{1,n}}{T}n\rfloor$ grid points are contained in the
  interval $[l_{1,n},l_{2,n}]$. Hence, defining $c_{\min}\coloneqq\mu^2$ we
  get
  \begin{align*}
    &\liminf_{n\rightarrow\infty}\prob\left(\frac{1}{L_n}\sum_{\ell=1}^{L_n}\left(Y^{(e_{k^*})}_{t_{\ell}}-\hat{y}^{(e_{k^*})}_{b}(t_{\ell})\right)^2\geq
      c_{\min}\right)\\
    &\quad\geq\liminf_{n\rightarrow\infty}\prob\left(\left\lfloor \tfrac{l_{2,n}-l_{1,n}}{T}n\right\rfloor \inf_{t\in[l_{1,n},l_{2,n}]}\abs{Y_{t}^{(e_{k^*})}-\hat{y}_b^{(e_{k^*})}(t)}^2\geq
      c_{\min}\right)\\
    &\quad\geq\liminf_{n\rightarrow\infty}\prob\left(\left\{\left\lfloor \tfrac{l_{2,n}-l_{1,n}}{T}n\right\rfloor \inf_{t\in[l_{1,n},l_{2,n}]}\abs{Y_{t}^{(e_{k^*})}-\hat{y}_b^{(e_{k^*})}(t)}^2\geq
      c_{\min}\right\}\cap C_{\delta}\right)\\
    &\quad\geq\liminf_{n\rightarrow\infty}\prob\left(\left\{\left\lfloor \tfrac{l_{2,n}-l_{1,n}}{T}n\right\rfloor \mu^2\geq
      c_{\min}\right\}\cap C_{\delta}\right)\\
    &\quad=\liminf_{n\rightarrow\infty}\prob\left(C_{\delta}\right)=1,
  \end{align*}
  where in the last step we used the second part of condition
  \ref{it:cond2}. This completes the proof of
  Lemma~\ref{thm:consistency_yb}.
\end{proof}

Simply stated the following lemma proves that under
condition~\ref{it:cond2} it holds that for non-invariant
$M\in\mathcal{M}$ the estimates $\hat{y}_{b}$ corresponding
to the constraint optimization converge to a fixed function
$y_{\lim}$. The function $y_{\lim}(\cdot)$ can be explicitly constructed as
the integral of the derivative function $g(X_{\cdot})$ shifted by a fixed
constant that is chosen to minimize the area between $y_{\lim}(\cdot)$
and the true function $Y_{\cdot}$.

\begin{lemma}
  \label{thm:limitfunction}
  Let condition \ref{it:cond2} be satisfied. Additionally, assume that
  for all $k\in\{1,\dots,m\}$ it holds for all $i\in e_k$ and
  $\ell\in\{1,\dots,L_n\}$ that the noise variables
  $\eps_{t_{\ell}}^{(i)}$ are i.i.d., symmetric, sub-Gaussian and
  satisfy $\E(\eps_{t_{\ell}}^{(i)})=0$ and
  $\var(\eps_{t_{\ell}}^{(i)})=\sigma^2_k$.  Let $Y_t$ and its first
  and second derivative be bounded by $c<\infty$ and define
  $C := c+16$ for the set $\mathcal{H}_C$, see \ref{it:model1}. Then,
  for any non-invariant $M\in\mathcal{M}$ with
  $g\in M$ the limit function from condition \ref{it:cond2}
  it holds that for all $k\in\{1,\dots,m\}$ the functions
  $y_{\lim}^{(e_k)}:[0,T]\rightarrow\R$ defined for all $t\in[0,T]$ by
  \begin{equation*}
    y_{\lim}^{(e_k)}(t)\coloneqq\int_{0}^{t}g(X_s^{(e_k)})ds + \frac{1}{T}\int_{0}^{T}\left(Y_s^{(e_k)}-\int_{0}^{s}g(X_r^{(e_k)})dr\right)ds,
  \end{equation*}
  satisfy that
  \begin{equation*}
    \sup_{t\in[0,T]}\abs{\hat{y}_b^{(e_k)}(t)-y_{\lim}^{(e_k)}(t)}\overset{\prob}{\longrightarrow}0,
  \end{equation*}
  as $n\rightarrow\infty$.
\end{lemma}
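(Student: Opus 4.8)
The plan is to show that the constrained smoother $\hat{y}_b^{(e_k)}$ splits, asymptotically, into a \emph{shape} part governed by the limiting derivative $g(X_\cdot^{(e_k)})$ and a single \emph{vertical shift} pinned down by the residual-sum-of-squares term, and that each piece converges to the corresponding part of $y_{\lim}^{(e_k)}$. Throughout I fix $k$ and drop the superscript $(e_k)$, and I abbreviate $G(t) := \int_0^t g(X_s)\,ds$, so that $y_{\lim}(t) = G(t) + c^*$ with $c^* := \tfrac{1}{T}\int_0^T (Y_s - G(s))\,ds$.

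First I would establish uniform convergence of the derivative, $\sup_{t\in[0,T]}\abs{\dot{\hat{y}}_b(t) - g(X_t)} \overset{\prob}{\longrightarrow} 0$. At each grid point the constraint in [\ref{eq:smooth_trajectory2}] forces $\dot{\hat{y}}_b(t_\ell) = \hat{g}_n(\widetilde{\B{X}}_{t_\ell})$, which by condition \ref{it:cond2} lies within $\delta/L_n$ of $g(\B{X}_{t_\ell})$ on a set $A_\delta$ of probability tending to one. Between grid points the bound $C$ on the second derivative of elements of $\mathcal{H}_C$ gives $\abs{\dot{\hat{y}}_b(t) - \dot{\hat{y}}_b(t_\ell)} \le C\,\Delta t = O(1/L_n)$, and since $g$ has bounded first derivative and $X$ is smooth, $t\mapsto g(X_t)$ is Lipschitz, so $\abs{g(X_t) - g(X_{t_\ell})} = O(1/L_n)$ as well. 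Combining these three estimates over the nearest grid point yields the claimed uniform bound.

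Integrating from $t_1$ then gives the shape statement $\sup_{t}\abs{(\hat{y}_b(t) - \hat{y}_b(t_1)) - (G(t) - G(t_1))} = \sup_t\abs{\int_{t_1}^t(\dot{\hat{y}}_b(s) - g(X_s))\,ds} \overset{\prob}{\longrightarrow} 0$, so it remains only to control the scalar $\hat{y}_b(t_1)$. Here I would exploit optimality of $\hat{y}_b$ exactly as in the proof of Lemma~\ref{thm:consistency_yb}: adding a small constant to $\hat{y}_b$ leaves the penalty and every derivative constraint unchanged and, thanks to the margin built into $C = c+16$, keeps the function within $\mathcal{H}_C$ for $n$ large, so the RSS cannot be improved by a vertical shift; this forces the balance condition $\tfrac{1}{L_n}\sum_\ell(\widetilde{Y}_{t_\ell} - \hat{y}_b(t_\ell)) \overset{\prob}{\longrightarrow} 0$. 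Substituting $\hat{y}_b(t_\ell) = \hat{y}_b(t_1) + (G(t_\ell) - G(t_1)) + o(1)$ from the shape step and $\widetilde{Y}_{t_\ell} = Y_{t_\ell} + \eps_{t_\ell}$, the noise average $\tfrac{1}{L_n}\sum_\ell \eps_{t_\ell}$ vanishes by the law of large numbers (the event $B_\delta$ of Lemma~\ref{thm:consistency_yb}, using the i.i.d.\ mean-zero sub-Gaussian assumption), while $\tfrac{1}{L_n}\sum_\ell Y_{t_\ell} \to \tfrac{1}{T}\int_0^T Y_s\,ds$ and $\tfrac{1}{L_n}\sum_\ell G(t_\ell) \to \tfrac{1}{T}\int_0^T G(s)\,ds$ are Riemann sums of continuous functions. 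This identifies $\hat{y}_b(t_1) - G(t_1) \overset{\prob}{\longrightarrow} c^*$.

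Finally I would combine the two pieces: writing $\hat{y}_b(t) - y_{\lim}(t) = \big((\hat{y}_b(t)-\hat{y}_b(t_1)) - (G(t)-G(t_1))\big) + \big((\hat{y}_b(t_1) - G(t_1)) - c^*\big)$, the first bracket tends to zero uniformly by the shape step and the second is a constant tending to zero, giving $\sup_t\abs{\hat{y}_b(t) - y_{\lim}(t)} \overset{\prob}{\longrightarrow} 0$. I expect the main obstacle to be the constant-pinning step: one must argue carefully that the single vertical degree of freedom is genuinely determined by RSS minimization, which requires verifying that the shifted comparison functions remain feasible within the value bound of $\mathcal{H}_C$ (this is where the slack in $C = c+16$ is used), and then assembling the law-of-large-numbers and Riemann-sum limits into precisely the constant $\tfrac{1}{T}\int_0^T(Y_s - G(s))\,ds$ appearing in $y_{\lim}$.
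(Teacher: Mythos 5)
Your proposal is correct in substance and its architecture matches the paper's own proof: both decompose $\hat{y}_b^{(e_k)}$ into a shape part forced by the derivative constraints (which, via condition (C2) and the second-derivative bound on $\mathcal{H}_C$, tracks $G(t)=\int_0^t g(X_s^{(e_k)})\,ds$ up to a constant) and a vertical level pinned by RSS optimality, and both identify that level as $c^*=\tfrac{1}{T}\int_0^T(Y_s-G(s))\,ds$ via the law of large numbers for the noise plus Riemann sums --- precisely the mean-matching property that defines $y_{\lim}^{(e_k)}$. The genuine difference is the pinning mechanism, and it matters. You invoke first-order stationarity under small \emph{two-sided} vertical shifts of $\hat{y}_b$, which presupposes that $\hat{y}_b$ itself lies strictly inside the value bound of $\mathcal{H}_C$; as stated this is circular, since the vertical location of $\hat{y}_b$ is exactly what is being determined, and the slack in $C=c+16$ bounds $Y$, not $\hat{y}_b$ or $G$. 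The paper instead compares the loss of $\hat{y}_b$ with that of the \emph{single} shifted copy $y_{b*}:=\hat{y}_b-\hat{y}_b(t_1)+y_{\lim}(t_1)$, expands the quadratic, and uses $\operatorname{loss}_n(\hat{y}_b)\le\operatorname{loss}_n(y_{b*})$ to obtain $\abs{y_{\lim}(t_1)-\hat{y}_b(t_1)}\le 2\abs[\big]{\tfrac{1}{L_n}\sum_{\ell}\eps_{t_\ell}+\tfrac{1}{L_n}\sum_{\ell}(Y_{t_\ell}-y_{b*}(t_\ell))}$; this one-sided comparison only needs feasibility of one function that is uniformly close to the \emph{fixed} limit $y_{\lim}$, not interiority of the unknown minimizer. (Even that feasibility implicitly requires the oscillation of $G$ to fit within the slack $16$ --- a condition left unstated by both your argument and the paper's.) Your circularity is repairable, e.g., by noting that the RSS is a parabola in the vertical level, so the constrained minimizer over the feasible interval of levels coincides with the unconstrained one as soon as the latter is interior, or by a boundary-contradiction case analysis; but the paper's explicit comparison is the cleaner route to the same conclusion.
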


\begin{proof}
  The proof is very similar in spirit to the proof of the second part
  of Lemma~\ref{thm:consistency_yb}. Let $M\in\mathcal{M}$
  be non-invariant, fix $k\in\{1,\dots,m\}$ and let $g\in M$
  be the function from the second part of condition~\ref{it:cond2}. To
  simplify notation we will drop the superscript $(e_k)$ in the
  remainder of this proof. Next, let $\delta\in(0,1)$ and define the
  sets
  \begin{equation*}
    A_{\delta}\coloneqq\left\{
      L_n    
      \max_{\ell\in\{1,\dots,L_n\}}\abs{\hat{g}_n(\widetilde{\B{X}}_{t_{\ell}})-g(\B{X}_{t_{\ell}})}\leq\delta\right\}
    \quad\text{and}\quad
    B_{\delta}\coloneqq\left\{\abs[\Big]{\frac{1}{L_n}\sum_{\ell=1}^{L_n}\eps_{t_{\ell}}}\leq\delta\right\}.
  \end{equation*}
  Then, by condition \ref{it:cond2} it holds that
  \begin{equation}
    \label{eq:convergenceofsetAdelta2}
    \lim_{n\rightarrow\infty}\prob\left(A_{\delta}\right)=1,
  \end{equation}
  and, by the law of large numbers,
  \begin{equation}
    \label{eq:convergenceofsetBdelta2}
    \lim_{n\rightarrow\infty}\prob\left(B_{\delta}\right)=1.
  \end{equation}  
  Note that on the set $A_{\delta}$, our method is well-defined: for
  $a=L_n$, Lemma~\ref{lem:existenceopt} shows us that the function
  $\hat{y}_b$ exists since the corresponding optimization problem has
  at least one solution. Then, on the event
  $A_{\delta}\cap B_{\delta}$ it holds that
  \begin{align}
    \max_{\ell\in\{1,\dots,L_n\}}\abs{\hat{y}_b(t_{\ell})-y_{\lim}(t_{\ell})}
    &\leq\sum_{k=1}^{L_n}\int_{t_{k-1}}^{t_{k}}\abs{\dot{\hat{y}}_b(s)-\dot{y}_{\lim}(s)}ds+\abs{\hat{y}_b(t_{1})-y_{\lim}(t_{1})}
    \nonumber\\
    &\leq L_n\max_{\ell\in\{2,\dots,L_n\}}\left(\int_{t_{\ell-1}}^{t_{\ell}}\tfrac{2C}{L_n}+\abs{\dot{\hat{y}}_b(t_{\ell-1})-\dot{y}_{\lim}(t_{\ell-1})}ds\right)+\abs{\hat{y}_b(t_{1})-y_{\lim}(t_{1})}\nonumber\\
    &\leq\tfrac{2C}{L_n}+\delta+\abs{\hat{y}_b(t_{1})-y_{\lim}(t_{1})},\label{eq:bddmaxyhatbc2}
  \end{align}
  where we used that $\dot{y}_{\lim}(t)=g(X_t)$. Moreover, define the
  function
  $y_{b*}\coloneqq \hat{y}_b-\hat{y}_b(t_{1})+y_{\lim}(t_{1})$ then
  similar arguments show that
  \begin{equation}
    \label{eq:ybstarbdd2}
    \max_{\ell\in\{1,\dots,L_n\}}\abs{y_{b*}(t_{\ell})-y_{\lim}(t_{\ell})}
    =\max_{\ell\in\{1,\dots,L_n\}}\abs{(\hat{y}_b(t_{\ell})-\hat{y}_b(t_{1}))-(y_{\lim}(t_{\ell})-y_{\lim}(t_{1}))}
    \leq\tfrac{2C}{L_n}+\delta.
  \end{equation}
  Next, define the loss function
  \begin{equation*}
    \operatorname{loss}_n(y)\coloneqq\sum_{\ell=1}^{L_n}\left(\widetilde{Y}_{t_{\ell}}-y(t_{\ell})\right)^2+\lambda_n\int_{0}^{T}\ddot{y}(s)^2ds.
  \end{equation*}
  Moreover, it holds that
  \begin{align*}
    \operatorname{loss}_n(\hat{y}_b)
    &=\sum_{\ell=1}^{L_n}\left(\widetilde{Y}_{t_{\ell}}-\hat{y}_b(t_{\ell})\right)^2+\lambda_n\int_{0}^{T}\ddot{\hat{y}}_b(s)^2ds\\
    &=\operatorname{loss}_n(y_{b*})+\sum_{\ell=1}^{L_n}\left(y_{\lim}(t_{1})-\hat{y}_b(t_{1})\right)^2+2\left(y_{\lim}(t_{1})-\hat{y}_b(t_{1})\right)\sum_{\ell=1}^{L_n}\left(\widetilde{Y}_{t_{\ell}}-y_{b*}(t_{\ell})\right)\\
    &=\operatorname{loss}_n(y_{b*})+L_n\left(y_{\lim}(t_{1})-\hat{y}_b(t_{1})\right)^2+2\left(y_{\lim}(t_{1})-\hat{y}_b(t_{1})\right)\left[\sum_{\ell=1}^{L_n}\eps_{t_{\ell}}+\sum_{\ell=1}^{L_n}\left(Y_{t_{\ell}}-y_{b*}(t_{\ell})\right)\right]\\
  \end{align*}
  Now, $y_{b*}$ has the same derivatives as $\hat{y}_b$ and since
  $\hat{y}_b$ minimizes $\operatorname{loss}_n$ under fixed derivative
  constraints it holds that
  $\operatorname{loss}_n(\hat{y}_b)\leq\operatorname{loss}_n(y_{b*})$. This
  implies
  \begin{equation}
    \label{eq:yhatbpart12}
    L_n\left(y_{\lim}(t_{1})-\hat{y}_b(t_{1})\right)^2
    \leq
    2\left(y_{\lim}(t_{1})-\hat{y}_b(t_{1})\right)\left[\sum_{\ell=1}^{L_n}\eps_{t_{\ell}}+\sum_{\ell=1}^{L_n}\left(Y_{t_{\ell}}-y_{b*}(t_{\ell})\right)\right],
  \end{equation}
  which further implies
  \begin{equation}
    \label{eq:yhatbpart22}
    \abs{y_{\lim}(t_{1})-\hat{y}_b(t_{1})}
    \leq 2\cdot\abs[\bigg]{\frac{1}{L_n}\sum_{\ell=1}^{L_n}\eps_{t_{\ell}}+\frac{1}{L_n}\sum_{\ell=1}^{L_n}\left(Y_{t_{\ell}}-y_{b*}(t_{\ell})\right)}.
  \end{equation}
  Firstly, since we are on the set $B_{\delta}$ we get that
  \begin{equation}
    \label{eq:eps_part}
    \abs[\bigg]{\frac{1}{L_n}\sum_{\ell=1}^{L_n}\eps_{t_{\ell}}}\leq \delta.
  \end{equation}
  Secondly, using [\ref{eq:ybstarbdd2}] and the definition of the
  Riemann integral we get that
  \begin{align}
    &\limsup_{n\rightarrow\infty}\abs[\bigg]{\frac{1}{L_n}\sum_{\ell=1}^{L_n}\left(Y_{t_{\ell}}-y_{b*}(t_{\ell})\right)}\nonumber\\
    &\quad\leq
      \limsup_{n\rightarrow\infty}\abs[\bigg]{\frac{1}{L_n}\sum_{\ell=1}^{L_n}\left(Y_{t_{\ell}}-y_{\lim}(t_{\ell})\right)}+\limsup_{n\rightarrow\infty}\abs[\bigg]{\frac{1}{L_n}\sum_{\ell=1}^{L_n}\left(y_{b*}(t_{\ell})-y_{\lim}(t_{\ell})\right)}\nonumber\\
    &\quad\leq
      \abs[\bigg]{\int_{0}^T\left(Y_{s}-y_{\lim}(s)\right)ds}+\delta\nonumber\\
    &\quad=\delta,\label{eq:Yt_part}
  \end{align}
  where in the last step we used the definition of the function
  $y_{\lim}$. Hence, combining [\ref{eq:yhatbpart22}] with
  [\ref{eq:eps_part}] and [\ref{eq:Yt_part}] we get that
  \begin{equation}
    \label{eq:yhatbyhatcneeded12}
    \limsup_{n\rightarrow\infty}\abs{y_{\lim}(t_{1})-\hat{y}_b(t_{1})}
    \leq 4\delta.
  \end{equation}
  Furthermore, we can combine this with [\ref{eq:bddmaxyhatbc2}] to
  get that
  \begin{equation*}
    \limsup_{n\rightarrow\infty}\max_{\ell\in\{1,\dots,L_n\}}\abs{\hat{y}_b(t_{\ell})-y_{\lim}(t_{\ell})}
    \leq 5\delta,
  \end{equation*}
  which together with the global bound on the first derivative
  also implies that
  \begin{equation*}
    \limsup_{n\rightarrow\infty}\sup_{t\in[0,T]}\abs{\hat{y}_b(t)-y_{\lim}(t)}\leq
    \limsup_{n\rightarrow\infty}\left(\max_{\ell\in\{1,\dots,L_n\}}\abs{\hat{y}_b(t_{\ell})-y_{\lim}(t_{\ell})}+\tfrac{C}{L_n}\right)
    \leq 5\delta.
  \end{equation*}
  Since $\delta\in(0,1)$ was arbitrary this proves that
  $\sup_{t\in[0,T]}\abs{\hat{y}_b(t)-y_{\lim}(t)}$ converges in
  probability to zero, which completes the proof of Lemma~\ref{thm:limitfunction}.
\end{proof}

\subsubsection{Proof of theorem} In this section we prove
Theorem~\ref{thm:rank_consistency}.

\begin{proof}
  Assume that $Y_t$ and its first and second derivative be bounded by
  $c<\infty$ and define $C\coloneqq c+16$ for the set $\mathcal{H}_C$,
  see \ref{it:model1}.  The proof of
  Theorem~\ref{thm:rank_consistency} consists of two parts. First we
  assume that the following two claims are true and show that they
  suffice in proving the result.  Afterwards, we prove both claims.
  
  \textbf{Claim 1:} For all invariant
  $M\in\mathcal{M}$ it holds that
  \begin{equation*}
    \lim_{n\rightarrow\infty}\E\left(T_n(M)\right)=0
  \end{equation*}
  
  \textbf{Claim 2:} There exists a $c>0$ such that for all non-invariant $M\in\mathcal{M}$
  it holds that
  \begin{equation*}
    \liminf_{n\rightarrow\infty}\E\left(T_n(M)\right)\geq c.
  \end{equation*}

  Combining both claims and using Markov's inequality we get that
  \begin{align*}
    &\lim_{n\rightarrow\infty}\E\left(\big\vert\{M\in\mathcal{M}\,|\,
      T_n(M)<\max_{\{\tilde{M}\in\mathcal{M}\mid
      \tilde{M}\text{ invariant}\}}T_n(\tilde{M})\text{ and }
      M\text{ not invariant}\}\big\vert\right)\\
    &\qquad= \lim_{n\rightarrow\infty}\sum_{\substack{M\in\mathcal{M}:\\
    M\text{ not invariant}}}\E\left(\mathds{1}_{\{T_n(M)<\max_{\{\tilde{M}\in\mathcal{M}\mid
    \tilde{M}\text{
    invariant}\}}T_n(\tilde{M})\}}\right)\\
    &\qquad= \sum_{\substack{M\in\mathcal{M}:\\
    M\text{ not invariant}}}\lim_{n\rightarrow\infty}\prob\left(T_n(M)<\max_{\{\tilde{M}\in\mathcal{M}\mid
    \tilde{M}\text{
    invariant}\}}T_n(\tilde{M})\right)\\
    &\qquad= \sum_{\substack{M\in\mathcal{M}:\\
    M\text{ not invariant}}}\lim_{n\rightarrow\infty}\prob\left(\E\left(T_n(M)\right)<\max_{\{\tilde{M}\in\mathcal{M}\mid
    \tilde{M}\text{
    invariant}\}}T_n(\tilde{M})-T_n(M)+\E\left(T_n(M)\right)\right)\\
    &\qquad\leq \sum_{\substack{M\in\mathcal{M}:\\
    M\text{ not invariant}}}\lim_{n\rightarrow\infty}\prob\left(\E\left(T_n(M)\right)<\abs[\Big]{\max_{\{\tilde{M}\in\mathcal{M}\mid
    \tilde{M}\text{
    invariant}\}}T_n(\tilde{M})-T_n(M)+\E\left(T_n(M)\right)}\right)\\
    &\quad\overset{\text{Markov}}{\leq} \sum_{\substack{M\in\mathcal{M}:\\
    M\text{ not invariant}}}\lim_{n\rightarrow\infty}\frac{\E\left(\abs[\big]{\max_{\{\tilde{M}\in\mathcal{M}\mid
    \tilde{M}\text{
    invariant}\}}T_n(\tilde{M})-T_n(M)+\E\left(T_n(M)\right)}\right)}{\E\left(T_n(M)\right)}\\
    &\qquad\leq \sum_{\substack{M\in\mathcal{M}:\\
    M\text{ not invariant}}}\lim_{n\rightarrow\infty}\frac{\E\left(\abs[\big]{\max_{\{\tilde{M}\in\mathcal{M}\mid
    \tilde{M}\text{
    invariant}\}}T_n(\tilde{M})}\right)+\E\left(\abs{T_n(M)-\E\left(T_n(M)\right)}\right)}{\E\left(T_n(M)\right)}\\
    &\quad\overset{\text{claim 2}}{\leq} \sum_{\substack{M\in\mathcal{M}:\\
    M\text{ not invariant}}}\lim_{n\rightarrow\infty}\frac{\E\left(\abs[\big]{\max_{\{\tilde{M}\in\mathcal{M}\mid
    \tilde{M}\text{
    invariant}\}}T_n(\tilde{M})}\right)+\E\left(\abs{T_n(M)-\E\left(T_n(M)\right)}\right)}{c}\\
    &\quad\overset{\text{claim 1}}{=}0,
  \end{align*}
  which proves that
  $\lim_{n\rightarrow\infty}
  \E\left(\operatorname{RankAccuracy}_n\right)=1$. This result also
  proves the second part of Theorem~\ref{thm:rank_consistency}.  In
  the limit of infinitely many data points, any invariant model
  depends on all variables in $S^*$ (otherwise the set $S^*$ would not
  be unique, see \ref{it:cond3}).  Each variable $j \in S^*$ therefore
  receives a score of one.  On the other hand, any variable
  $j \notin S^*$ receives a score less or equal to $(K-1)/K$ since
  there exists at least one invariant model, namely the pair
  $S^*, g^*(\B{x}^{S^*})$ that does not depend on variable $j$.
  
  It therefore remains to prove claim~1 and claim~2.

  \textbf{Proof of claim 1:} Let $M\in\mathcal{M}$ be
  invariant and fix $k\in\{1,\dots,m\}$. 
In the remainder of this proof, the
  residual sum of square terms $\operatorname{RSS}_{a}^{(e_k)}$ and
  $\operatorname{RSS}_{b}^{(e_k)}$ depend on $n$, which will not be reflected in our notation. 
  First, observe that the triangle inequality
  implies that
  \begin{align}
    &\E\left(\abs{\operatorname{RSS}_{b}^{(e_k)}-\operatorname{RSS}_{a}^{(e_k)}}\right)
      \nonumber\\
    &\quad\leq\frac{1}{L_n}\sum_{\ell=1}^{L_n}\E\left(\abs{(\hat{y}_b^{(e_k)}(t_{\ell})-\widetilde{Y}^{(e_k)}_{t_{\ell}})^2-(\hat{y}_a^{(e_k)}(t_{\ell})-\widetilde{Y}^{(e_k)}_{t_{\ell}})^2}\right)\nonumber\\
    &\quad=\frac{1}{L_n}\sum_{\ell=1}^{L_n}\E\left(\abs{(\hat{y}_b^{(e_k)}(t_{\ell})-\hat{y}_a^{(e_k)}(t_{\ell}))(\hat{y}_b^{(e_k)}(t_{\ell})+\hat{y}_a^{(e_k)}(t_{\ell})-2\widetilde{Y}^{(e_k)}_{t_{\ell}})}\right)\nonumber\\
    &\quad=\frac{1}{L_n}\sum_{\ell=1}^{L_n}\E\left(\abs{[(\hat{y}_b^{(e_k)}(t_{\ell})-Y_{t_{\ell}}^{(e_k)})-(\hat{y}_a^{(e_k)}(t_{\ell})-Y_{t_{\ell}}^{(e_k)})][(\hat{y}_b^{(e_k)}(t_{\ell})-Y_{t_{\ell}}^{(e_k)})+(\hat{y}_a^{(e_k)}(t_{\ell})-Y_{t_{\ell}}^{(e_k)})-2\eps^{(e_k)}_{t_{\ell}})]}\right)\nonumber\\
    &\quad\leq\frac{1}{L_n}\sum_{\ell=1}^{L_n}\left[A(t_{\ell},k)+B(t_{\ell},k)+C(t_{\ell},k)+D(t_{\ell},k)+E(t_{\ell},k)\right],\label{eq:proof_RSSARSSB1}
  \end{align}
  where we used the following definitions
  \begin{align*}
    A(t_{\ell},k)&\coloneqq\E\left((\hat{y}_b^{(e_k)}(t_{\ell})-Y_{t_{\ell}}^{(e_k)})^2\right)\\
    B(t_{\ell},k)&\coloneqq\E\left((\hat{y}_a^{(e_k)}(t_{\ell})-Y_{t_{\ell}}^{(e_k)})^2\right)\\
    C(t_{\ell},k)&\coloneqq2\E\left(\abs{(\hat{y}_b^{(e_k)}(t_{\ell})-Y_{t_{\ell}}^{(e_k)})(\hat{y}_a^{(e_k)}(t_{\ell})-Y_{t_{\ell}}^{(e_k)})}\right)\\
    D(t_{\ell},k)&\coloneqq2\E\left(\abs{(\hat{y}_b^{(e_k)}(t_{\ell})-Y_{t_{\ell}}^{(e_k)})\eps_{t_{\ell}}^{(e_k)}}\right)\\
    E(t_{\ell},k)&\coloneqq2\E\left(\abs{(\hat{y}_a^{(e_k)}(t_{\ell})-Y_{t_{\ell}}^{(e_k)})\eps_{t_{\ell}}^{(e_k)}}\right).
  \end{align*}
  First, it holds that
  \begin{equation}
    \label{eq:convergenceAandB}
    \lim_{n\rightarrow\infty}\frac{1}{L_n}\sum_{\ell=1}^{L_n}A(t_{\ell},k)=0
    \quad\text{and}\quad
    \lim_{n\rightarrow\infty}\frac{1}{L_n}\sum_{\ell=1}^{L_n}B(t_{\ell},k)=0,
  \end{equation}
  where the first statement holds by the first part of
  Lemma~\ref{thm:consistency_yb} and the second by
  condition~\ref{it:cond1}. Together with the fact that the functions
  $\hat{y}_a^{(e_k)}\in\mathcal{H}_{C}$,
  $\hat{y}_b^{(e_k)}\in\mathcal{H}_{C}$ and
  $Y_{\cdot}^{(e_k)}\in\mathcal{H}_{C}$ it holds $\prob$-a.s. that
  \begin{equation}
    \label{eq:uniformbdd}
    \sup_{n\in\N}\sup_{t\in[0,T]}\abs{\hat{y}_a^{(e_k)}(t)-Y_{t}^{(e_k)}}\leq
    2C
    \quad\text{and}\quad
    \sup_{n\in\N}\sup_{t\in[0,T]}\abs{\hat{y}_b^{(e_k)}(t)-Y_{t}^{(e_k)}}\leq
    2C.
  \end{equation}
  Using the second statement together with condition \ref{it:cond1} we
  get that
  \begin{align*}
    \lim_{n\rightarrow\infty}\frac{1}{L_n}\sum_{\ell=1}^{L_n}C(t_{\ell},k)
    &=\lim_{n\rightarrow\infty}\frac{1}{L_n}\sum_{\ell=1}^{L_n}2\E\left(\abs{(\hat{y}_b^{(e_k)}(t_{\ell})-Y_{t_{\ell}}^{(e_k)})(\hat{y}_a^{(e_k)}(t_{\ell})-Y_{t_{\ell}}^{(e_k)})}\right)\\
    &\leq 4C\cdot\lim_{n\rightarrow\infty}\frac{1}{L_n}\sum_{\ell=1}^{L_n}\E\left(\abs{\hat{y}_a^{(e_k)}(t_{\ell})-Y_{t_{\ell}}^{(e_k)}}\right)\\
    &=0.
  \end{align*}
  Using both bounds in [\ref{eq:uniformbdd}], condition~\ref{it:cond1}
  and Lemma~\ref{thm:consistency_yb} we can apply
  Lemma~\ref{thm:triangular_convergence} to get that
  \begin{equation}
    \label{eq:convergenceCandD}
    \lim_{n\rightarrow\infty}\frac{1}{L_n}\sum_{\ell=1}^{L_n}D(t_{\ell},k)=0
    \quad\text{and}\quad
    \lim_{n\rightarrow\infty}\frac{1}{L_n}\sum_{\ell=1}^{L_n}E(t_{\ell},k)=0.    
  \end{equation}
  Hence, by taking the limit of [\ref{eq:proof_RSSARSSB1}], we have
  shown that
  \begin{equation}
    \label{eq:numerator}
    \lim_{n\rightarrow\infty}\E\left(\abs{\operatorname{RSS}_{b}^{(e_k)}-\operatorname{RSS}_{a}^{(e_k)}}\right)=0.
  \end{equation}
  Moreover, we can make the following decomposition.
  \begin{align}
    \E\left(\operatorname{RSS}_{a}^{(e_k)}\right)
    &=\frac{1}{L_n}\sum_{\ell=1}^{L_n}\E\left(\left(\hat{y}_a^{(e_k)}(t_{\ell})-\widetilde{Y}_{t_{\ell}}^{(e_k)}\right)^2\right)\nonumber\\
    &=\frac{1}{L_n}\sum_{\ell=1}^{L_n}\E\left(\left(\hat{y}_a^{(e_k)}(t_{\ell})-Y_{t_{\ell}}^{(e_k)}+Y_{t_{\ell}}^{(e_k)}-\widetilde{Y}_{t_{\ell}}^{(e_k)}\right)^2\right)\nonumber\\
    &=\frac{1}{L_n}\sum_{\ell=1}^{L_n}B(t_{\ell},k)+\frac{1}{L_n}\sum_{\ell=1}^{L_n}\E\left((\eps^{(e_k)}_{t_{\ell}})^2\right)+\frac{2}{L_n}\sum_{\ell=1}^{L_n}\E\left((\hat{y}_a^{(e_k)}(t_{\ell})-Y_{t_{\ell}}^{(e_k)})\eps^{(e_k)}_{t_{\ell}}\right).\label{eq:proof_RSSA1}
  \end{align}
  Using [\ref{eq:convergenceAandB}] and [\ref{eq:convergenceCandD}]
  and taking the limit of [\ref{eq:proof_RSSA1}] it holds that
  \begin{equation}
    \label{eq:denominator}
    \lim_{n\rightarrow\infty}\E\left(\operatorname{RSS}_{a}^{(e_k)}\right)=\sigma^2_k.
  \end{equation}
  Combining [\ref{eq:numerator}] and [\ref{eq:denominator}] with
  Slutsky's theorem this shows that
  $\frac{\abs{\operatorname{RSS}_{b}^{(e_k)}-\operatorname{RSS}_{a}^{(e_k)}}}{\operatorname{RSS}_{a}^{(e_k)}}\overset{\prob}{\longrightarrow}0$
  as $n\rightarrow\infty$. By [\ref{eq:uniformbdd}] and
  [\ref{eq:denominator}] it also holds that
  \begin{equation*}
    \sup_{n\in\N}\E\left(\left(\frac{\abs{\operatorname{RSS}_{b}^{(e_k)}-\operatorname{RSS}_{a}^{(e_k)}}}{\operatorname{RSS}_{a}^{(e_k)}}\right)^2\right)<\infty,
  \end{equation*}
  which together with de la Vallée-Poussin's theorem
  \cite[p.19 Theorem T22]{meyer1966} implies uniform
  integrability and thus $L^1$ convergence, i.e.,
  \begin{equation*}
    \lim_{n\rightarrow\infty}\E\left(\frac{\abs{\operatorname{RSS}_{b}^{(e_k)}-\operatorname{RSS}_{a}^{(e_k)}}}{\operatorname{RSS}_{a}^{(e_k)}}\right)=0.
  \end{equation*}
  Finally, since the number of environments $m$ is fixed and it holds
  for all $i\in e_k$ that
  \begin{equation}
    \label{eq:dist_equal}
    \frac{\abs{\operatorname{RSS}_{b}^{(i)}-\operatorname{RSS}_{a}^{(i)}}}{\operatorname{RSS}_{a}^{(i)}}
    \overset{d}{=}\frac{\abs{\operatorname{RSS}_{b}^{(e_k)}-\operatorname{RSS}_{a}^{(e_k)}}}{\operatorname{RSS}_{a}^{(e_k)}},
  \end{equation}
  it holds that
  \begin{align*}
    \lim_{n\rightarrow\infty}\E\left(T(M)_n\right)
    &=\lim_{n\rightarrow\infty}\frac{1}{n}\sum_{i=1}^n\E\left(\frac{\abs{\operatorname{RSS}_{b}^{(i)}-\operatorname{RSS}_{a}^{(i)}}}{\operatorname{RSS}_{a}^{(i)}}\right)\\
    &=\lim_{n\rightarrow\infty}\frac{1}{m}\sum_{k=1}^m\E\left(\frac{\abs{\operatorname{RSS}_{b}^{(e_k)}-\operatorname{RSS}_{a}^{(e_k)}}}{\operatorname{RSS}_{a}^{(e_k)}}\right)=0.
  \end{align*}
  This completes the proof of claim~1.

    \textbf{Proof of claim 2:} Let $M\in\mathcal{M}$ be
  non-invariant. Let $k^*\in\{1,\dots,m\}$ be the index 
  and $c_{\min}$ the constant 
  for which
  [\ref{eq:lemma_lowerbound}] in Lemma~\ref{thm:consistency_yb} is
  satisfied.
For every $\delta>0$ define the following sets
  \begin{align*}
    A_{\delta}&\coloneqq\left\{\abs[\Big]{\frac{1}{L_n}\sum_{\ell=1}^{L_n}(\hat{y}_a^{(e_{k^*})}(t_{\ell})-Y_{t_{\ell}}^{(e_{k^*})})^2}+\abs[\Big]{\frac{2}{L_n}\sum_{\ell=1}^{L_n}(\hat{y}_a^{(e_{k^*})}(t_{\ell})-Y_{t_{\ell}}^{(e_{k^*})})\eps_{t_{\ell}}^{(e_{k^*})}}\leq
                \delta\right\}\\
    B_{\delta}&\coloneqq\left\{\abs[\Big]{\frac{1}{L_n}\sum_{\ell=1}^{L_n}\left(\eps_{t_{\ell}}^{(e_{k^*})}\right)^2-\sigma_{k^*}}\leq
                \delta\right\}\\
    C_{\delta}&\coloneqq\left\{\abs[\Big]{\frac{1}{L_n}\sum_{\ell=1}^{L_n}(\hat{y}_b^{(e_{k^*})}(t_{\ell})-Y_{t_{\ell}}^{(e_{k^*})})\eps_{t_{\ell}}^{(e_{k^*})}}\leq
               \delta\right\}\\
    D_{\delta}&\coloneqq\left\{\frac{1}{L_n}\sum_{\ell=1}^{L_n}(\hat{y}_b^{(e_{k^*})}(t_{\ell})-Y_{t_{\ell}}^{(e_{k^*})})^2\geq
               c_{\min} - \delta\right\}.
  \end{align*}
  Using that both summands in the definition of $A_{\delta}$ converge
  in $L^1$ (this follows in exactly the same way, we obtained
  [\ref{eq:convergenceAandB}] and [\ref{eq:convergenceCandD}]) it
  holds that the sum convergences in probability. This in particular
  implies that there exists $N_A\in\N$ such that for all
  $n\in\{N_A,N_A+1,\dots\}$ it holds that
  \begin{equation}
    \label{eq:termAset}
    \prob\left(A_{\delta}\right)\geq 1-\delta.
  \end{equation}
  Next, by the law of large numbers it holds that
  $\frac{1}{L_n}\sum_{\ell=1}^{L_n}\left(\eps_{t_{\ell}}^{(e_{k^*})}\right)^2$
  converges to $\sigma_{k^*}^2$ in probability. This implies that
  there exists $N_B\in\N$ such that for all
  $n\in\{N_B,N_B+1,\dots\}$ it holds that
  \begin{equation}
    \label{eq:termBset}
    \prob\left(B_{\delta}\right)\geq 1-\delta.
  \end{equation}
  Finally, observe that since $\eps_{t_{\ell}}^{(e_{k^*})}$ has mean zero it
  holds that
  \begin{equation*}
    \E\left((\hat{y}_b^{(e_{k^*})}(t_{\ell})-Y_{t_{\ell}}^{(e_{k^*})})\eps^{(e_{k^*})}_{t_{\ell}}\right)=\E\left((\hat{y}_b^{(e_{k^*})}(t_{\ell})-y_{\lim}^{(e_{k^*})}(t_{\ell}))\eps^{(e_{k^*})}_{t_{\ell}}\right),
  \end{equation*}
  where $y_{\lim}^{(e_{k^*})}$ is the limit function given in
  Lemma~\ref{thm:limitfunction}. The statement of
  Lemma~\ref{thm:limitfunction} together with the boundedness of the
  functions allows us to apply Lemma~\ref{thm:triangular_convergence}
  to get that
  \begin{equation*}
    \lim_{n\rightarrow\infty}\frac{2}{L_n}\sum_{\ell=1}^{L_n}\E\left|\left(\hat{y}_b^{(e_{k^*})}(t_{\ell})-Y_{t_{\ell}}^{(e_{k^*})}\right)\eps^{(e_{k^*})}_{t_{\ell}}\right|=0.
  \end{equation*}
  Hence, this term also converges in probability and thus there exists
  $N_C\in\N$ such that for all $n\in\{N_C,N_C+1,\dots\}$ it holds that
  \begin{equation}
    \label{eq:termCset}
    \prob\left(C_{\delta}\right)\geq 1-\delta.
  \end{equation}
  Finally, applying Lemma~\ref{thm:consistency_yb} there exists
  $N_D\in\N$ such that for all $n\in\{N_D,N_D+1,\dots\}$ it holds that
  \begin{equation}
    \label{eq:termDset}
    \prob\left(D_{\delta}\right)\geq 1-\delta.
  \end{equation}
  Combining [\ref{eq:termAset}], [\ref{eq:termBset}],
  [\ref{eq:termCset}] and [\ref{eq:termDset}] we get for all
  $n\in\{N^{\max},N^{\max}+1,\dots\}$ with $N^{\max}\coloneqq\max\{N_A,N_B,N_C,N_D\}$ that
  \begin{align*}
    \E\left(\frac{\abs{\operatorname{RSS}_b^{(e_{k^*})}-\operatorname{RSS}_a^{(e_{k^*})}}}{\operatorname{RSS}_a^{(e_{k^*})}}\right)
    &\geq\E\left(\frac{\operatorname{RSS}_b^{(e_{k^*})}}{\operatorname{RSS}_a^{(e_{k^*})}}\right)-1\\
    &\geq\E\left(\frac{\operatorname{RSS}_b^{(e_{k^*})}}{\operatorname{RSS}_a^{(e_{k^*})}}\mathds{1}_{A_{\delta}}\mathds{1}_{B_{\delta}}\mathds{1}_{C_{\delta}}\mathds{1}_{D_{\delta}}\right)-1\\
    &
    \geq\E\left(\frac{c_{\min}-\delta-\delta+\sigma^2_{k^*}-\delta}{2\delta+\sigma^2_{k^*}+\delta}\mathds{1}_{A_{\delta}}\mathds{1}_{B_{\delta}}\mathds{1}_{C_{\delta}}\mathds{1}_{D_{\delta}}\right)-1\\
    &=\frac{c_{\min}-3\delta+\sigma^2_{k^*}}{3\delta+\sigma^2_{k^*}}\prob\left(A_{\delta} \cap B_{\delta} \cap C_{\delta} \cap D_{\delta}\right)-1\\
    &\geq\frac{c_{\min}-3\delta+\sigma^2_{k^*}}{3\delta+\sigma^2_{k^*}}(1-4\delta)-1,
  \end{align*}
  where for the third inequality we used the expansion
  \begin{equation*}
    \operatorname{RSS}_{*}^{(e_{k^*})}=\frac{1}{L_n}\sum_{\ell=1}^{L_n}(\hat{y}_{*}^{(e_{k^*})}-Y^{(e_{k^*})}_{t_{\ell}})^2-\frac{2}{L_n}\sum_{\ell=1}^{L_n}(\hat{y}_{*}^{(e_{k^*})}-Y^{(e_{k^*})}_{t_{\ell}})\eps_{t_{\ell}}^{(e_{k^*})}+\frac{1}{L_n}\sum_{\ell=1}^{L_n}(\eps_{t_{\ell}}^{(e_{k^*})})^2
  \end{equation*}
  together with the normal and reverse triangle inequality and the
  definitions of the sets $A_{\delta}$, $B_{\delta}$, $C_{\delta}$ and
  $D_{\delta}$. Since $\delta$ was arbitrary we can let $\delta$ tend
  to zero which implies that
  \begin{equation*}
    \liminf_{n\rightarrow\infty}\E\left(\frac{\abs{\operatorname{RSS}_b^{(e_{k^*})}-\operatorname{RSS}_a^{(e_{k^*})}}}{\operatorname{RSS}_a^{(e_{k^*})}}\right)\geq\frac{c_{\min}}{\sigma^2_{k^*}}.
  \end{equation*}
  Finally, using this together with [\ref{eq:dist_equal}] we get that
  \begin{align*}
    \liminf_{n\rightarrow\infty}\E\left(T_n(M)\right)
    &=
      \liminf_{n\rightarrow\infty}\frac{1}{n}\sum_{i=1}^n\E\left(\frac{\abs{\operatorname{RSS}_b^{(i)}-\operatorname{RSS}_a^{(i)}}}{\operatorname{RSS}_a^{(i)}}\right)\\
    &\geq
      \liminf_{n\rightarrow\infty}\frac{1}{m}\sum_{k=1}^m\E\left(\frac{\abs{\operatorname{RSS}_b^{(e_{k^*})}-\operatorname{RSS}_a^{(e_{k^*})}}}{\operatorname{RSS}_a^{(e_{k^*})}}\right)\\
    &\geq \frac{c_{\min}}{\sigma^2_{k^*}}>0,
  \end{align*}
  which completes the proof of claim~2 and also completes the proof of
  Theorem~\ref{thm:rank_consistency}.
\end{proof}

\section{Extended empirical results}\label{sec:extended_simulations}

This supplementary note contains detailed further empirical results
intended to complement the article \emph{Learning stable and
  predictive structures in kinetic systems}.  In particular, it
includes experiments on identification of causal predictors, large
sample experiments supporting the theoretical results on consistency,
scalability experiments with a large number of variables and
experiments on robustness against model
misspecifications. Additionally, it includes some auxiliary results
for the metabolic pathway analysis.

\subsection{Competing methods}\label{sec:competing_methods}
A large number of methods have been proposed to perform model
selection for ODEs. Essentially, these can be group into two major
categories: (i) methods that model the underlying ODE and (ii)
approximate methods that assume a simplified underlying structure,
e.g., a dependence graph.

CausalKinetiX belongs to group (i). In essence, these types of methods
combine parameter inference with classical model selection techniques
such as information criteria, e.g., AIC or BIC, or $\ell^1$-penalized
approaches. Apart from CausalKinetiX, they optimize predictive
performance of the resulting model and do not make use of any
heterogeneity in the data. In this paper, we compare with two basic
$\ell^1$-penalized approaches (see below) that we regard as
representative for this group of methods. The first method performs
the regularization on the level of the derivatives (gradient matching
GM) and the second on the integrated problem (difference matching
DM). Both are common in literature \citep[e.g.][]{Wu2014,
  brunton2016discovering} and can also be used as screening procedures
in our method. Furthermore, we also compare with a more involved
method called adaptive integral matching (AIM) introduced by
\cite{mikkelsen2017}. Rather than only fitting the target equation it
fits an entire system of ODEs on all variables, hence utilizing
information shared across different variables.

Category (ii) neglects the underlying ODE structure and assumes it can
be approximated by a dependency model between the
variables. Essentially, any type of graphical model procedure can be
applied. While these methods are often a lot faster than methods that
take into account the underlying ODE structure, they often lead to
rather poor results, due to the model misspecification. We consider
several different Bayesian network based methods: BN-PC, BN-GES,
DBN-CondInd and DBN-Greedy. Here, BN and DBN stand for Bayesian
network and dynamic Bayesian network \cite{Koller09},
respectively. BN-PC learns the graph structure using the well-known
PC-algorithm \cite{Spirtes2000} based on conditional independence and
BN-GES uses the greedy equivalent search algorithm
\cite{Chickering2002}. Both dynamic Bayesian network methods are
implemented in the R-package \texttt{bnlearn} \citep{bnlearn}, where
DBN-CondInd uses a conditional independence based algorithm and
DBN-Greedy a greedy score based algorithm. Each of these methods is
applied to the measurements directly by pooling across different
experimental settings. To get a more complete picture of the
performance of these algorithms we additionally consider modified
versions where we apply them to $(Y_t-Y_{t-1}, X_{t-1})$ instead of
$(Y_t, X_t)$ in order to make the linear assumption more likely to be
satisfied. We denote these versions by adding ``(diffY)'' to the end
of the name. Moreover, both DBN methods are restricted to feed-forward
edges and in particular do not involve instantaneous
edges. Additionally, we included an in-degree constraint for both
DBN-Greedy and BN-GES by constraining the number of parents for each
node to be at most $4$ (this is also a constraint used in
CausalKinetiX).

\paragraph{Penalized gradient matching (GM)}
This method can be used whenever the considered model is linear in its
parameters, e.g., models of mass-action type as in
[\ref{eq:ydotmassaction}]. It fits a smoother to each trajectory of
the target variable $Y$ and computes the corresponding
derivatives. Then, one fits an $\ell^1$-penalized sparse linear model
\cite{tibshirani2015statistical, donoho2006, candes2006} on these
estimated derivatives
\begin{equation*}
  \dYest{i}{t_{\ell}}
  =\sum_{j=1}^d\sum_{k=j}^d \theta_{k,l} X^{k,(i)}_{t_{\ell}}X^{j,(i)}_{t_{\ell}}+\eps^{(i)}_{t_{\ell}},
\end{equation*}
where $\eps^{(i)}_{t_{\ell}}$ are assumed independent and
identically distributed Gaussian noise variables and the regression
coefficient $\theta$ is assumed to be sparse, i.e., the loss
function has the form
\begin{equation*}
  l(\theta, X,
  Y)=\sum_{\ell=1}^{L}\norm{\dY{i}{t_{\ell}}-\dYest{i}{t_{\ell}}}^2_2+\lambda\norm{\theta}_1.
\end{equation*}
This results in a ranking of terms $X^{k,(i)}X^{j,(i)}$ by when they
enter the model (i.e. non-zero $\theta$ coefficient) for the first
time.

\paragraph{Penalized difference matching (DM)}
The GM method can be adapted by integrating the linear model to avoid
estimating the often numerically unstable derivatives. In this case,
one fits a $\ell^1$-penalized sparse linear model on the difference of
the form
\begin{align*}
  \widehat{Y}_{t_{\ell}}^{(i)}-\widehat{Y}_{t_{\ell-1}}^{(i)}
  &=\sum_{{j=1, k=j}}^d\theta_{k,l}\int_{t_{\ell-1}}^{t_{\ell}}X_s^{k,(i)}X_s^{j,(i)}ds+\eps^{(i)}_{t_{\ell}}\\
  \approx\sum_{{j=1,k=j}}^d &\theta_{k,l}\tfrac{X_{t_{\ell}}^{k,(i)}X_{t_{\ell}}^{j,(i)}+X_{t_{\ell-1}}^{k,(i)}X_{t_{\ell-1}}^{j,(i)}}{2}(t_{\ell}-t_{\ell-1})+\eps^{(i)}_{t_{\ell}}
\end{align*}
where, similarly, $\eps^{(i)}_{t_{\ell}}$ are assumed independent and
identically distributed Gaussian noise and the regression coefficient
$\theta$ is assumed to be sparse. Again, one obtains a ranking of the
term $X^{k,(i)}X^{j,(i)}$ depending on when they first enter the
model.  In our numerical simulation study in
Section~\ref{sec:screen_sim}, DM performs better than GM.
Intuitively, this is the case whenever the dynamics are hard to detect
due to noise as the estimated derivatives will then have strongly
time-dependent biases. Similar observations have been made before
\cite{chen2017network}.

\subsection{Simulation experiments}\label{sec:simulation_exps}
We perform experiments on three different ODE models.  The first is a
biological model of the Maillard reaction \citep{maillard1912}, the
second and third models are artificially constructed ODE models.  The
relatively small sizes of these systems ($d \leq 12$) allow for fast
data simulation which enables us to compare the performance under
various settings and conditions.

\subsubsection{Finding causal predictors in the Maillard reaction}\label{sec:maillard_reaction}
The first simulation study is based on a biological ODE system from
the \emph{BioModels Database} due to \cite{li2010biomodels}. More
specifically, we use the model BIOMD0000000052 due to
\cite{brands2002} which describes reactions in heated
monosaccharide-casein systems.  This system is relatively small (11
variables), it consists entirely of mass-action type equations, and it
remains stable under various random interventions (that we can use to
simulate different experimental conditions). The simulation setup is
described in \hyperlink{dat:maillard}{Data Set 1}, additional details
can be found in Section~\ref{sec:additionbiomod52}.

\begin{mdframed}[roundcorner=5pt,
  frametitle={\hypertarget{dat:maillard}{Data Set 1}: Maillard reaction}]
  The ODE structure is given in Section~\ref{sec:biomodel52}. For the
  simulations, we randomly select one of the $d=11$ variables to be
  the target and generate data from $5$ experimental settings and
  sample $3$ repetitions for each experiment.  The experimental
  conditions are as follows.
  \begin{compactitem}
  \item \textbf{Experimental condition $1$ (observational data):}\\
    Trajectories are simulated using the parameters given in
    BIOMD0000000052, see Section~\ref{sec:biomodel52}.
  \item \textbf{Experimental conditions $2$ to $5$ (interventional data):}\\
    Trajectories are simulated based on the following two types of interventions
    \begin{compactitem}
    \item \textbf{initial value intervention:} Initial values are
      sampled for $[\text{Glu}]$ and
    $[\text{Fru}]$ uniform between $0$ and $5\cdot160$ and for $[\text{lys R}]$
    uniform between $0$ and $5\cdot15$, the remainder of the quantities are
    kept at zero initially as they are all products of the reactions.
    \item \textbf{blocking reactions:} Random reactions that do not
      belong to the target equations are set to zero by fixing the
      corresponding reaction constant $k_i\equiv 0$. The expected
      number of reactions set to zero is $3$.
    \end{compactitem}
  \end{compactitem}
  Based on these experimental conditions each of the true model
  trajectories are computed using numerical integration. Finally, the
  observations are given as noisy versions of the values of these
  trajectories on a quadratic time grid with $L$ time points between
  $0$ and $100$. For most experiments we use $L=11$ time points. The
  noise at each observation is independently normal distributed with
  mean $0$ and standard deviation proportional to the total variation
  norm of the trajectory plus a small positive constant (in case the
  trajectory is constant), i.e.,
  $\sigma=c\cdot\norm{y}_{\operatorname{TV}}+10^{-7}$, where $y(t)$ are the
  true trajectories. For all experiments apart from the one shown in
  Figure~3 \textbf{b} in the main article, we take
  $c\sim\operatorname{Unif}(0.01,0.1)$. Sample trajectories are given
  in Figure~\ref{fig:sample_trajectories_bio52}.
\end{mdframed}

As a first assessment of our method, we analyze its ability to recover
causal structure. To do so, we sample $B=500$ realizations of the
system described in \hyperlink{dat:maillard}{Data Set~1} and apply our
method as well as the competing methods to rank the variables
according to which variable is most likely to be a parent variable of
the target. To remove any effect resulting from ordering of the
variables we relabel them in each repetition by randomly permuting the
labels. Each ranking is then assessed by computing the area under the
operator receiver curve (AUROC) based on the known ground truth (i.e.,
parents of the target $\PA[]{Y}$). The results are given in
Figure~\ref{fig:simulation_biomodel}. Here, we applied CausalKinetiX
using the exhaustive model class (see \ref{sec:parametric_models}
Section~\ref{sec:mass-action}) and considered all possible models
consisting of individual variables and interactions ($66$ potential
predictor terms). We restricted the search to models with at most $4$
such terms after reducing the number of terms by a prior screening
step to $33$. The results show that our method can improve on all
competing methods. In particular, we are able to get a median AUROC of
$1$ implying that in more than half of all repetitions our method
ranks the correct models first. Moreover, by comparing with DM one can
see that utilizing the heterogeneity (via the stability score) does
indeed improve on plain prediction based methods. The results in
Figure~3 in the main article are based on the same experiment (for the
simulation shown in Figure~3 \textbf{b} we screened to $20$ terms to
decrease the computational burden).

\begin{figure}[h]
  \centering
  \includegraphics{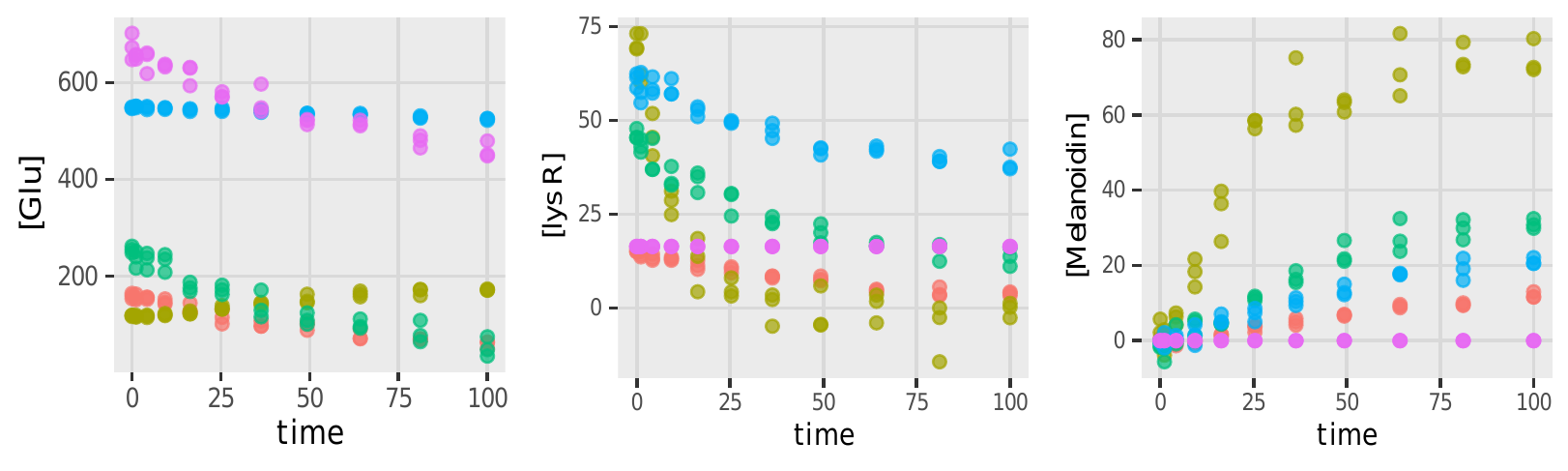}
  \caption{Sample observations (the method's input) for the variables
    Glu, lys R and Melanoidin from
    \protect\hyperlink{dat:maillard}{Data Set~1}. Points represent
    noisy observations with different colors for the $5$ different
    experimental conditions, e.g., red corresponds to experimental
    condition~1.}
  \label{fig:sample_trajectories_bio52}
\end{figure}

\begin{figure}[h!]
  \includegraphics[width = 0.99\textwidth]{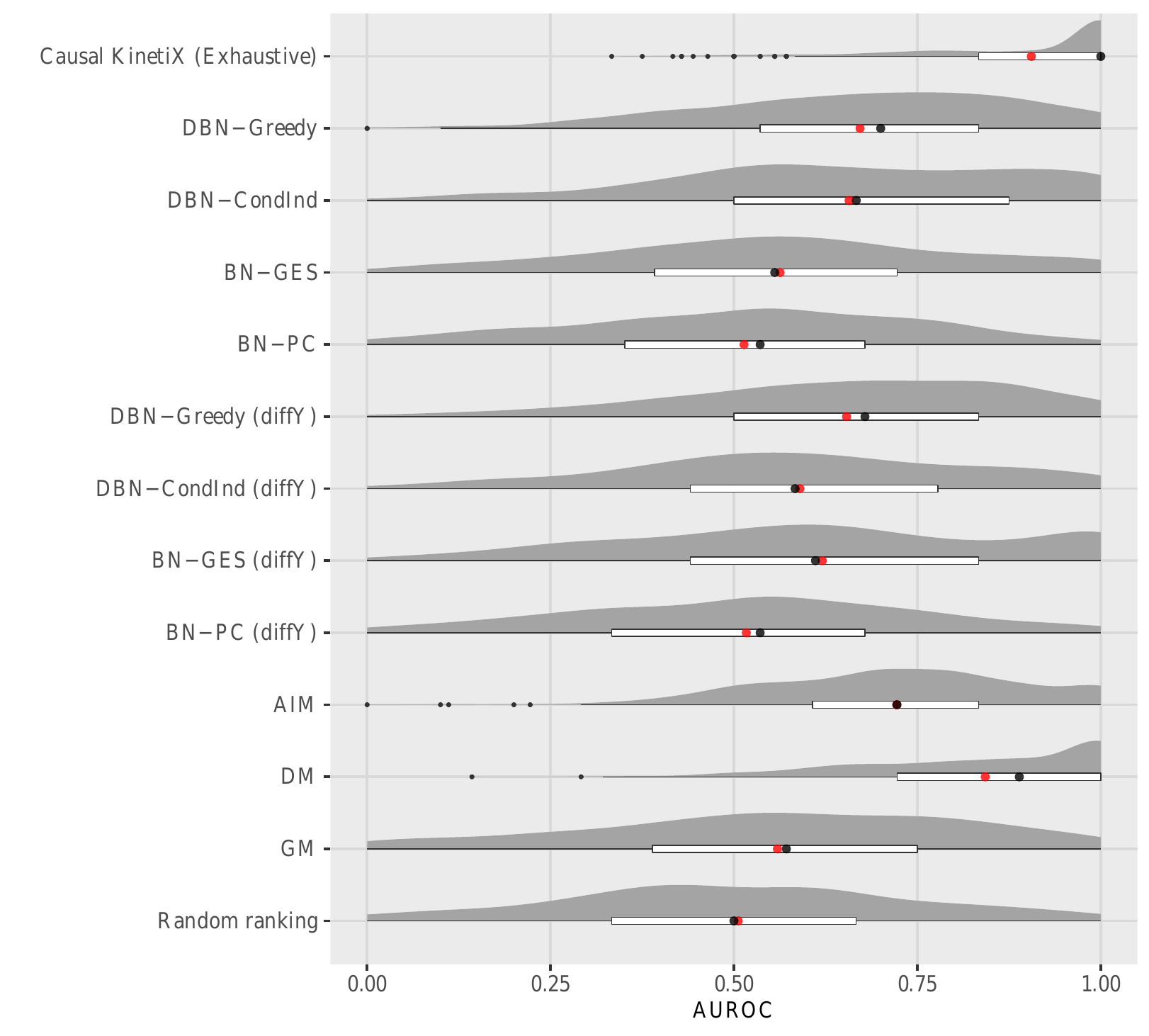}
  \caption{Results for simulation in
    Section~\ref{sec:maillard_reaction}.  In each of the $500$
    simulations, the methods rank predictors for a randomly chosen
    target. If the AUROC equals one, the correct variables are ranked
    highest. Red points correspond to mean AUROC, black points to
    median AUROC.}
  \label{fig:simulation_biomodel}
\end{figure}

\FloatBarrier

\subsubsection{Overfitting of the variable ranking}\label{sec:overfitting_vars}
We now demonstrate that incorporating stability as a learning
principle also helps in terms of overfitting. We again use data that
were simulated according to \hyperlink{dat:maillard}{Data Set~1}. We
compare CausalKinetiX with a modified version that does not hold out
experiments in step (M4) of the procedure. This version therefore
focuses on prediction and mostly neglects stability.  We now compare
the performance of the two procedures while increasing the number of
terms in the model classes we search over. As can be seen from
Figure~\ref{fig:simulation_overfitting_vars} the decrease in AUROC is
stronger for the method that focuses less on stability. We regard this
as evidence that stability as an inference principle indeed helps
against overfitting in these types of models (see also
Section~\ref{sec:overfitting_traj}).

\begin{figure}[h]
  \centering
  \includegraphics{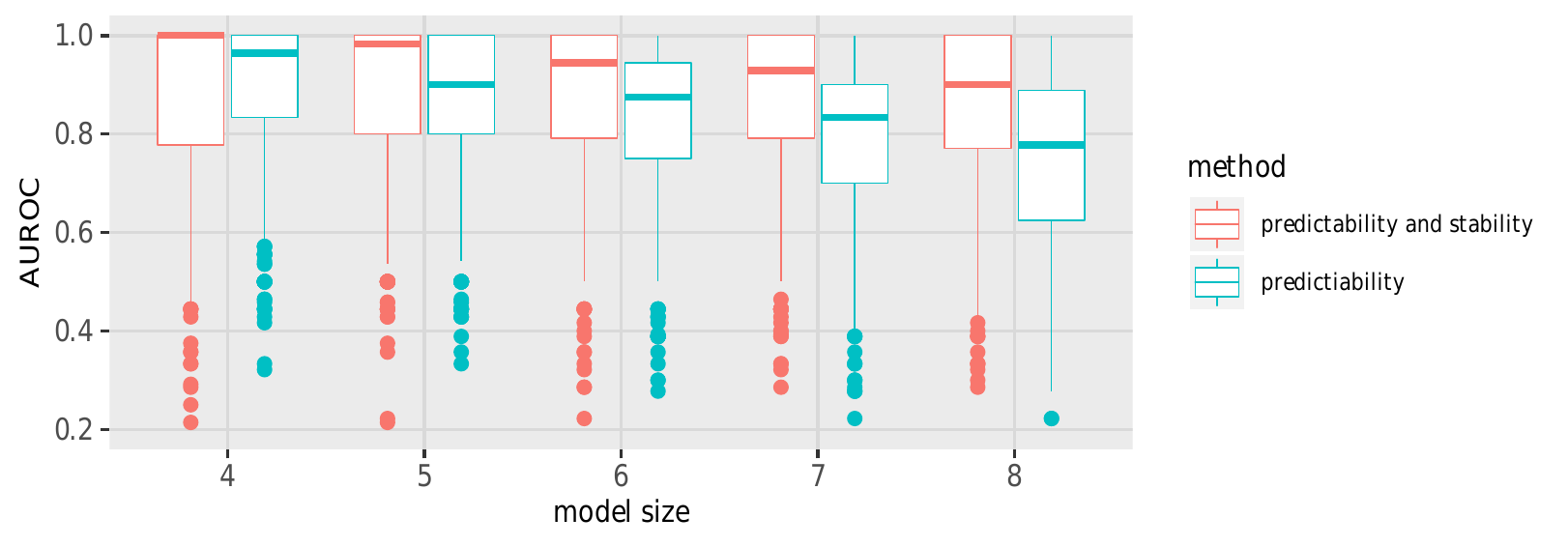}
  \caption{Results illustrating how CausalKinetiX behaves when model
    class is increased. Enforcing stability results in better
    regularization even on large model classes.}
  \label{fig:simulation_overfitting_vars}
\end{figure}

\FloatBarrier

\subsubsection{Effect of time-dependent measurement noise}

The data simulated according to \hyperlink{dat:maillard}{Data Set~1}
is corrupted with time-independent measurement noise. However, we do
not regard the assumption of i.i.d.\ measurement noise as crucial for
our procedure: The noise becomes most relevant during the smoothing
step of CausalKinetiX.  As long as the distribution of the noise does
not negatively impact the smoothing, the output of CausalKinetiX will
be unaffected. Furthermore, even if there are biases induced in the
smoother due to time-dependent noise, they do not necessarily harm
CausalKinetiX as long as the dynamics are still captured to a
satisfactory degree.  This is why we expect CausalKinetiX to be robust
with respect to time-dependent measurement noise. We verify this
empirically by performing the same experiment as in
Section~\ref{sec:maillard_reaction} but replacing the independent
Gaussian noise with auto-regressive noise of the form
\begin{equation*}
  \eps_{t}=a\cdot\eps_{t-1} + W_t,
\end{equation*}
where $a\in(-1,1)$ and $W_t$ is standard normal noise. (We furthermore
screen down to $15$ terms to decrease the computation burden of the
experiment.)  The results for different values of $a$ are given in
Figure~\ref{fig:time_dependent_noise} and illustrate that there is
indeed no negative effect of time-dependent noise in this case.

\begin{figure}[h]
  \centering
  \includegraphics{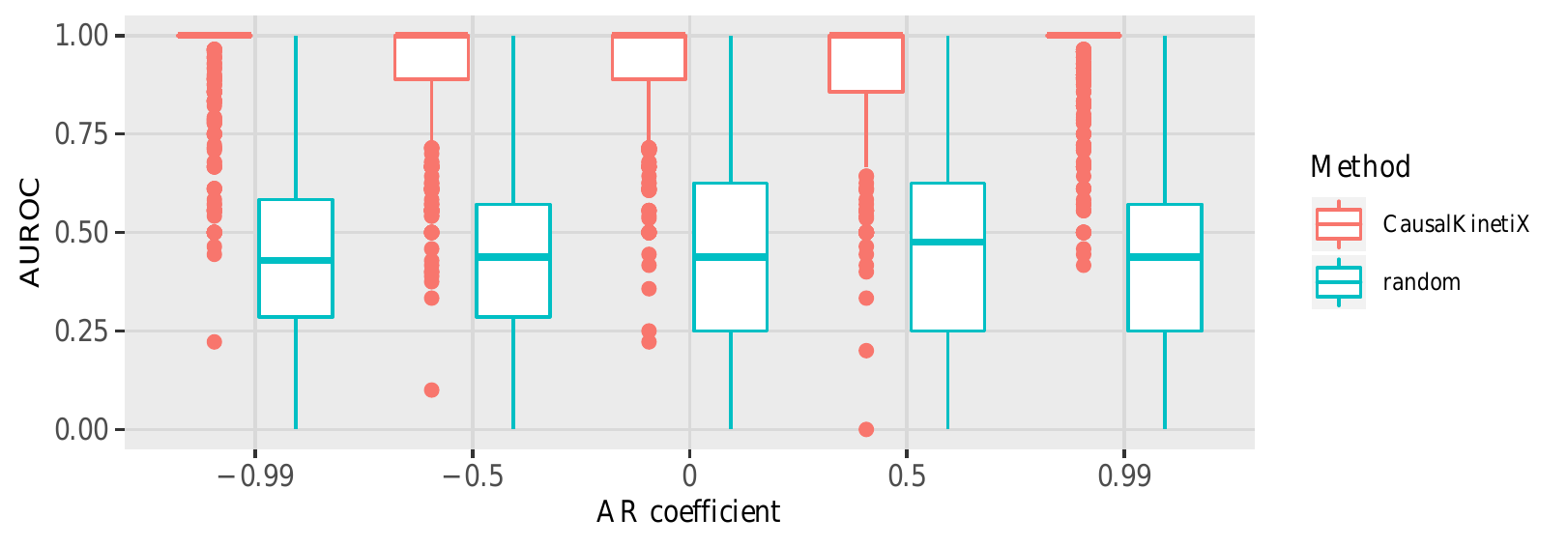}
  \caption{Results illustrating the affect of time-dependence in the
    measurement noise on the variable ranking performance of
    CausalKinetiX. Added time-dependence has no negative effect on the
    performance, which is expected since CausalKinetiX is based on
    dynamics rather than the trajectories directly.}
  \label{fig:time_dependent_noise}
\end{figure}

\FloatBarrier

\subsubsection{Comparison of screening procedures}\label{sec:screen_sim}
Using data generated as in \hyperlink{dat:maillard}{Data Set~1}, we
compare the two screening methods based on GM and DM. To this end, we
sample $B=1000$ data sets and apply both DM and GM to rank all
$11\cdot 10 \cdot 0.5 + 22 = 77$ individual terms of the form $X^kX^j$
and $X^j$ ($d = 11$) based on their first entrance into the model. For
each data set, we then compute the worst rank of any true term and
plot them in Figure~\ref{fig:histo_screening}. For comparison, we also
include the results from a random ranking, i.e., a random permutation
of the terms.  Both methods perform better than the random baseline,
and DM outperforms GM in this setting.  This might be because the
integral approximation used in DM is more robust than the estimation
of the derivatives required for GM.
\begin{figure}[t]
  \centering
  \includegraphics{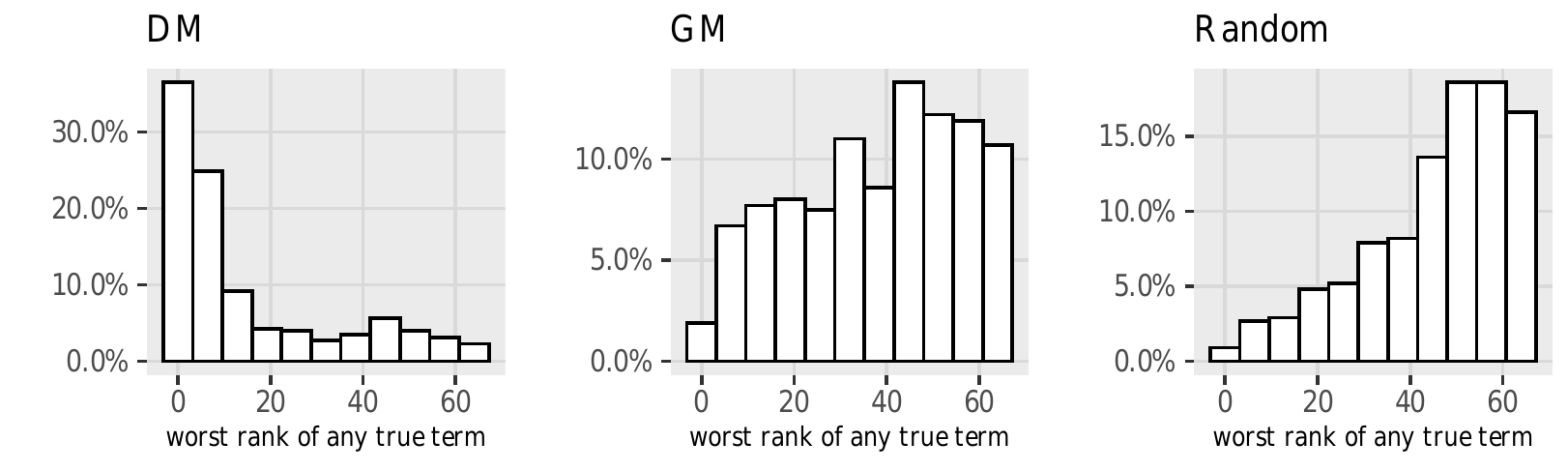}
  \caption{Comparison between different screening methods based on
    $B=1000$ simulations from \protect\hyperlink{dat:maillard}{Data
      Set~1} in Section~\ref{sec:maillard_reaction}. All $77$ terms of
    the form $X^kX^j$ and $X^j$ are ranked according to the screening
    procedure.  The x-axis shows the rank of the worst ranked term
    from the true model. High concentration on the left implies good
    screening performance.  Here, DM outperforms
    GM.}
  \label{fig:histo_screening}
\end{figure}

\FloatBarrier

\subsubsection{Consistency analysis}\label{sec:consistency_exp}
We now illustrate our theoretical consistency result presented in the
Methods section and Section~\ref{sec:parametric_models}. Again, we
simulate from \hyperlink{dat:maillard}{Data Set~1}, where we consider
different values of $L$ and $n$ to analyze the asymptotic
behavior. Instead of increasing the value of $n$ we decrease the noise
variance as this has a similar effect but is computationally
faster. Since computing the RankAccuracy requires knowledge of
the invariant sets, we use a setting with many experimental conditions
(here, $10$ experimental conditions) and assume that in this limit only
super-sets of the true model are invariant. The results shown in
Figure~\ref{fig:consistency} demonstrate the convergence of the
RankAccuracy towards one as the number of time steps $L$ goes to
infinity and the noise variance goes to zero.

\begin{figure}[h]
  \centering
  \includegraphics{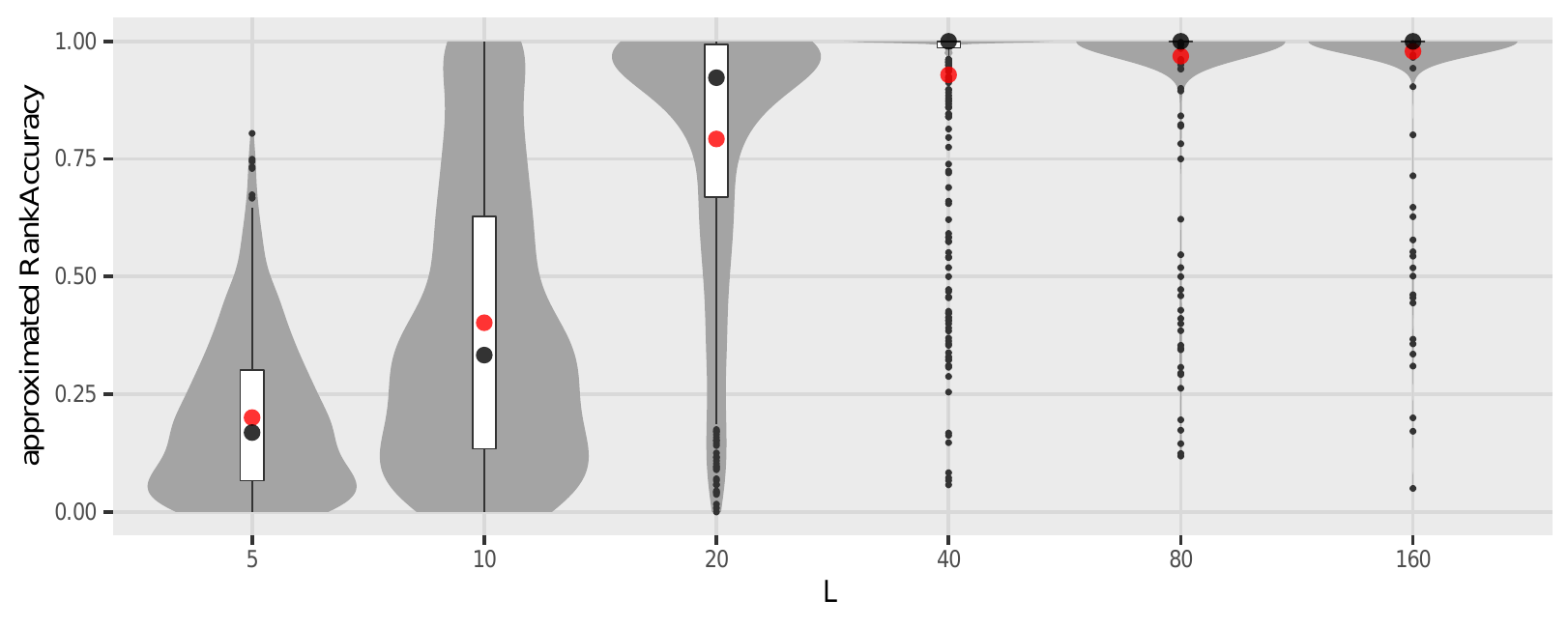}
  \caption{Results for simulation in
    Section~\ref{sec:consistency_exp}. For different numbers of time
    points $L$ and noise variance proportional~$\frac{10}{L^2}$ we
    sampled $500$ simulations from
    \protect\hyperlink{dat:maillard}{Data Set~1}. For each simulation
    we compute the $\operatorname{RankAccuracy}$. Red points
    correspond to mean $\operatorname{RankAccuracy}$, large black points to
    median $\operatorname{RankAccuracy}$.}
  \label{fig:consistency}
\end{figure}

\FloatBarrier

\subsubsection{Increasing experimental
  conditions}\label{sec:increasing_exp}

In this section, we illustrate how an increase in experimental
conditions affects the variable ranking performance of
CausalKinetiX. Again, we simulate from \hyperlink{dat:maillard}{Data
  Set~1}, where we consider different numbers of experimental
conditions. To ensure the comparison is fair, we choose the number of
repetitions per experimental condition to ensure that the number of
total observations is fixed at 16. Furthermore, in order to remove any
effect resulting from ordering of the variables we relabel them in
each repetition by randomly permuting the labels. Each ranking is then
assessed by computing the area under the operator receiver curve
(AUROC) based on the known ground truth (i.e., parents of the target
$\PA[]{Y}$). The results are given in
Figure~\ref{fig:increasing_exp}. Here, we applied CausalKinetiX using
the exhaustive model class (see \ref{sec:parametric_models}
Section~\ref{sec:mass-action}) and considered all possible models
consisting of individual variables and interactions ($66$ potential
predictor terms). We restricted the search to models with at most $4$
such terms after reducing the number of terms by a prior screening
step to $11$. The results show that CausalKinetiX benefits from an
increased number of experimental conditions.

\begin{figure}[h]
  \centering
  \includegraphics{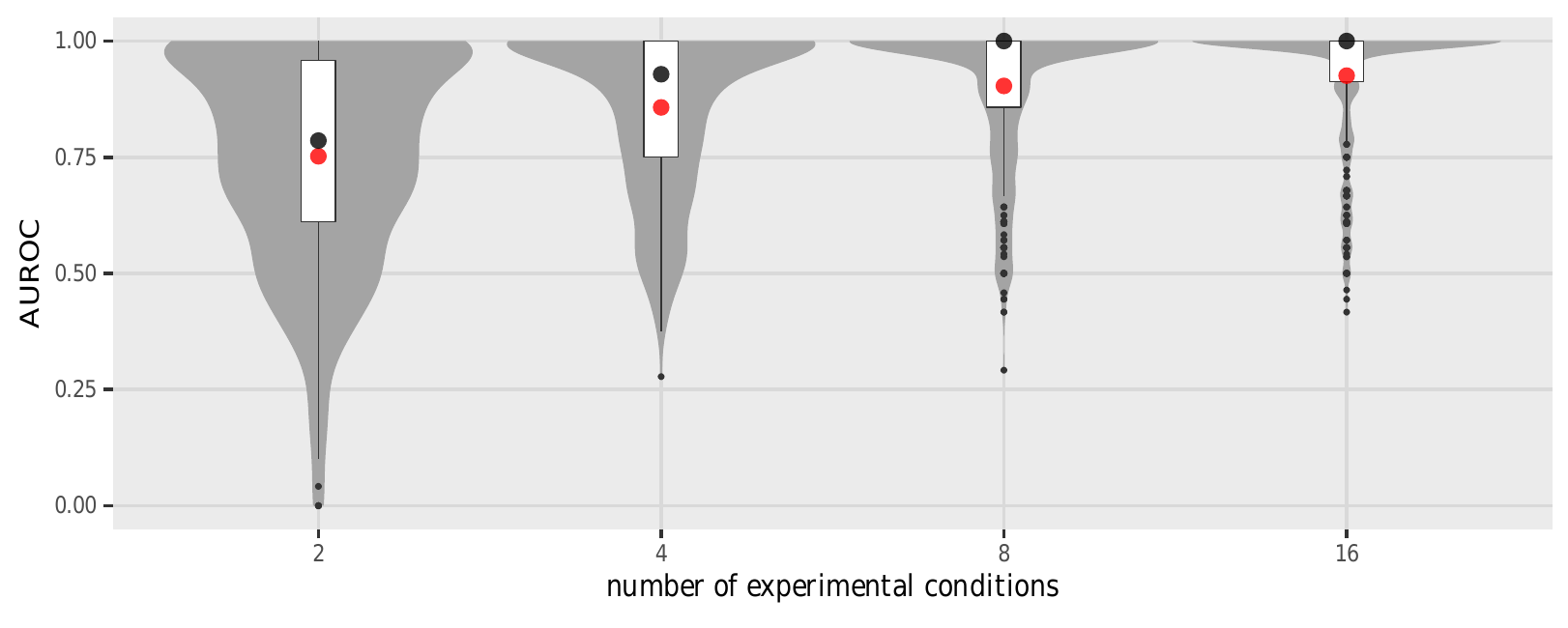}
  \caption{Results for simulation in
    Section~\ref{sec:increasing_exp}. For different numbers of
    experimental conditions we sampled $500$ simulations from
    \protect\hyperlink{dat:maillard}{Data Set~1}. For each simulation
    we compute the AUROC. Red points
    correspond to mean AUROC, large black
    points to median AUROC.}
  \label{fig:increasing_exp}
\end{figure}

\FloatBarrier

\subsubsection{Scalability} \label{sec:scalability} Next, we analyze
how our method scales with the number of variables $d$, the number of
environments $m$, the number of repetitions in each environment $R$,
and the number of observed time points for each trajectory $L$.
Figure~\ref{fig:runtime} illustrates the run-time of our method when
one of these parameters is varied while the others are kept fixed. The
data are generated according to \hyperlink{dat:targetmodel}{Data
  Set~2}. The key steps driving the computational cost of our
procedure are the smoothing in steps (M3) and~(M5), as well as the
estimation step (M4).  In our case, the cost of the estimation
procedure, fitting a linear model with ordinary least squares, is
negligible.  Since the number of smoothing operations, we have to
perform grows linearly with respect to $m$ and $R$, we expect a linear
increase in run-time.  Accordingly, the slopes in
Figure~\ref{fig:runtime} (bottom left and top right) are close to one.

We compute the smoothing spline in~(M3) using a convex quadratic
program, which can be solved in polynomial time -- even if the number
of constraints grows linearly, see~(M5).  The data points in
Figure~\ref{fig:runtime} (bottom right) do not lie on a straight line,
which may be due to some computational overhead for small values of
$L$ or due to the quadratic program itself.  The worst case complexity
of convex quadratic programming is cubic in sample size, but many
instances can be solved more efficiently.  Correspondingly, the slope
in Figure~\ref{fig:runtime} (bottom right) is not larger than three.
When only values $L \geq 64$ are taken into account, the slope is
estimated as $2.9$, which is close to the worst case guarantee of $3$.
Finally, varying the number of variables impacts the size of
$\mathcal{M}$, that is, the number of models.  In
Figure~\ref{fig:runtime}, we consider the case of main-effects models
of up to three variables, which results in $\mathcal{O}(d^3)$ models.
If we again assume that run-time of the estimation step can be
neglected, we expect a slope of $3$ in the log-log plot.  In our
empirical experiments the slope is estimated as $2.8$
(Figure~\ref{fig:runtime} top left).
\begin{figure}[h]
  \centering
  \includegraphics{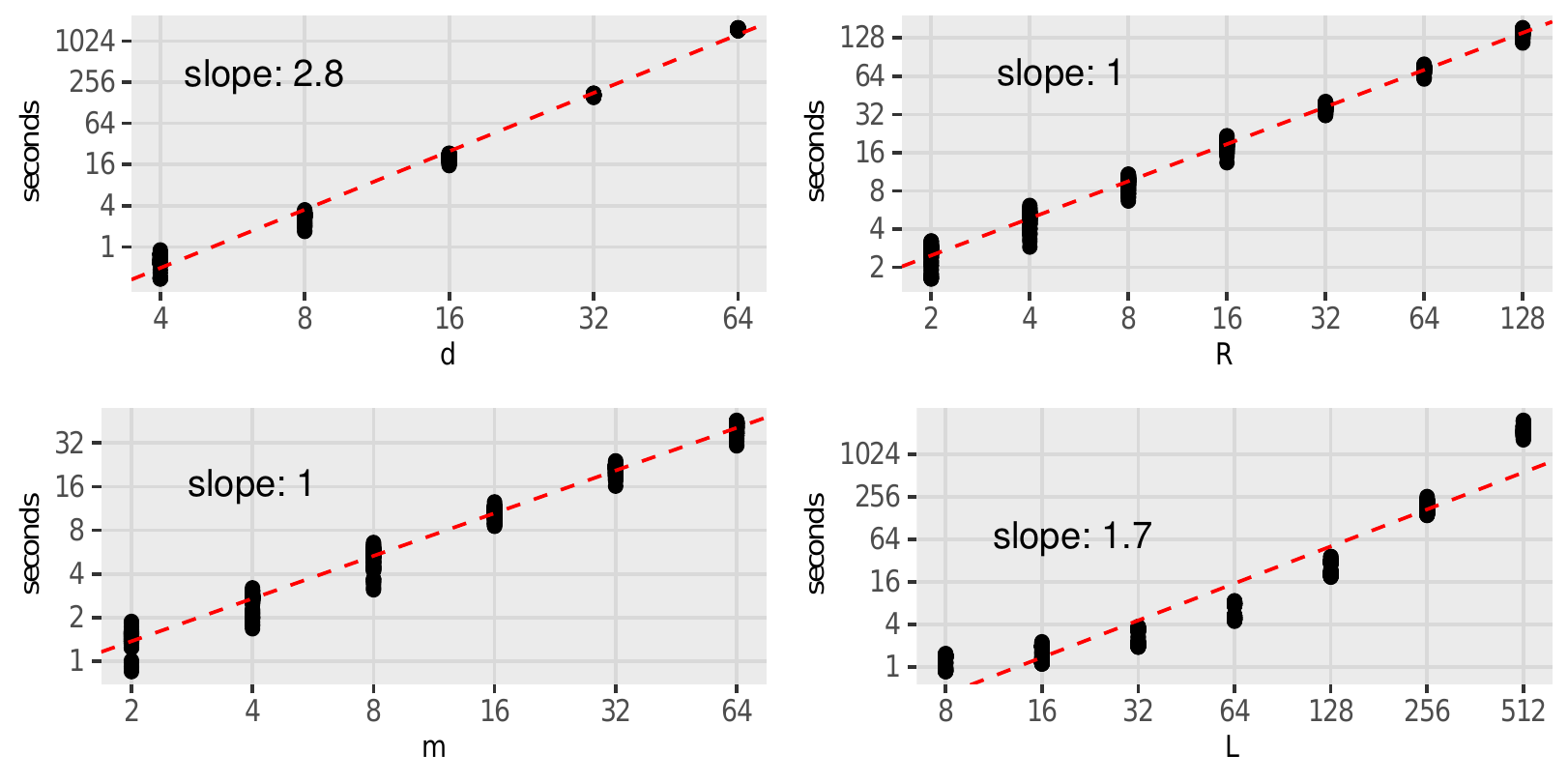}
  \caption{Run-time analysis for the parameters number of variables
    $d$, number of repetitions $R$, number of environments $m$ and
    number of time points $L$. In each plot one parameter is
    varied while the remaining are kept constant and the run-time is
    computed $100$ times for a full application of our method. The
    dotted red line is a linear fit to the log-log-transformed plots where
    the slope estimates the polynomial runtime order.}
  \label{fig:runtime}
\end{figure}

\FloatBarrier

\subsubsection{Allowing for complex predictor
  models}\label{sec:targetmodel_sim}
Our procedure requires that only the dynamics of the target variable
are given by an ODE model.  We do not model the dynamics of the
predictors, which as a consequence may follow any arbitrarily complex
model.  As an illustration we sample trajectories such that the
predictors are completely random and only the target variable
satisfies an invariant model. The details of the data generation are
shown in \hyperlink{dat:targetmodel}{Data Set~2}.

\begin{mdframed}[roundcorner=5pt,
  frametitle={\hypertarget{dat:targetmodel}{Data Set~2}: Target
    model based on predictor trajectories}]
  Consider functions of the form
  \begin{equation*}
    f_{c_1,c_2,c_3,c_4}(t)=\frac{c_1}{1+e^{c_2 (t-3)}}+\frac{c_3}{1+e^{c_4 (t-3)}},
  \end{equation*}
  i.e., these functions are linear combinations of sigmoids which have
  smooth trajectories that imitate dynamics observed in some real data
  experiment. For each of the $5$ experimental conditions we sample
  $d=12$ trajectories $X_t^j=f_{c_1,c_2,c_3,c_4}(t)$ for $t\in[0,6]$,
  where $c_1,c_2,c_3,c_4$ are i.i.d.\ standard normal. Based on these
  trajectories and the ODE given by
    \begin{equation*}
    \dot{Y}_t=\theta_1X^1+\theta_2X^2,\quad Y_0=0,
  \end{equation*}
  where $\theta_1=0.0001$ and $\theta_2=0.0002$,
    we compute
  the trajectories of the target variable
  $Y$ 
  by numerical integration. Finally, the
  observations are given as noisy versions of the values of these
  trajectories on an equally spaced time grid with $L=15$ time points
  between $0$ and $10$. The noise at each observation is independently
  normal distributed with mean $0$ and variance proportional to the
  total variation norm of the trajectory plus a small positive
  constant (in case the trajectory is constant), i.e.,
  $\sigma=c\cdot\norm{y}_{\operatorname{TV}}+10^{-7}$, where $y(t)$ are the
  true trajectories and $c\sim\operatorname{Unif}(0.05,0.15)$. Sample trajectories
  are given in Figure~\ref{fig:sample_trajectories_targetmodel}.
\end{mdframed}

The results are shown in
Figure~\ref{fig:simulation_targetmodel}. Here, we applied
CausalKinetiX for both the exhaustive and main effects model
class. For the exhaustive model class we again consider individual
variables and interactions as possible terms ($78$ terms) and reduce
to half these ($39$ terms) using screening and then apply our method
for all models with at most $3$ terms. For the main effect models we
perform no screening and consider all models consisting of at
most $3$ variables. Even though none of the predictors follows an ODE
model our procedure is capable of recovering the true causal parents
and again improves on plain prediction (DM and GM).

\begin{figure}[h]
  \centering
  \includegraphics{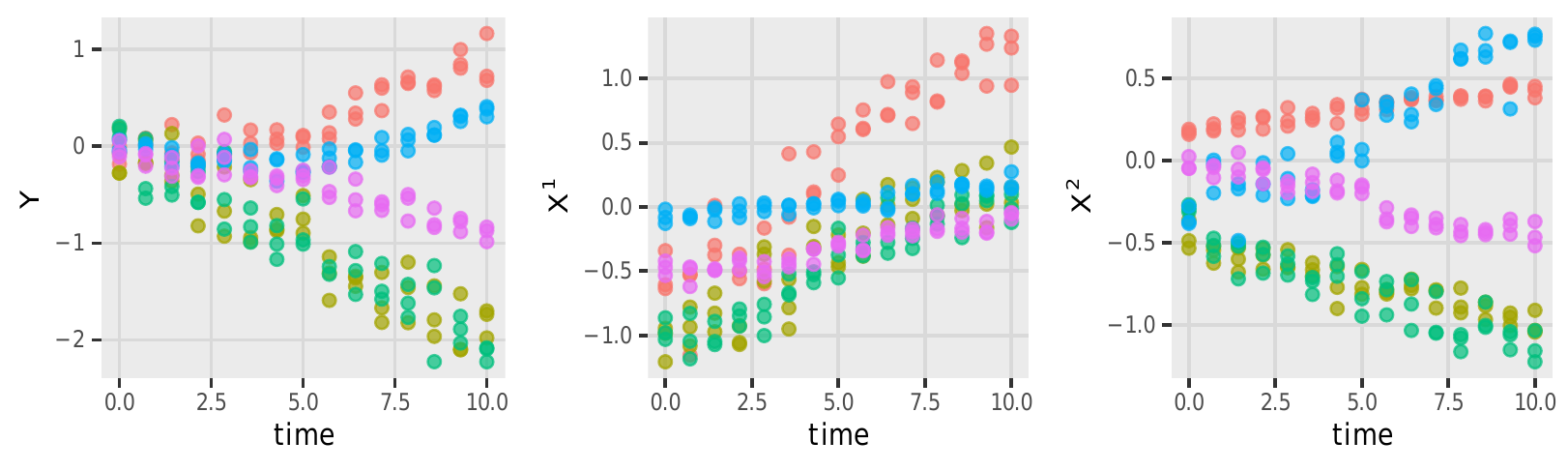}
  \caption{Sample observations for the target variable $Y$ and its
    two parents $X^1$ and $X^2$
    \protect\hyperlink{dat:targetmodel}{Data Set~2}. Points represent
    noisy observations with different colors used for the $5$
    different experiments, e.g., red corresponds to experimental
    condition~1.}
  \label{fig:sample_trajectories_targetmodel}
\end{figure}

\begin{figure}[h]
  \centering
  \includegraphics{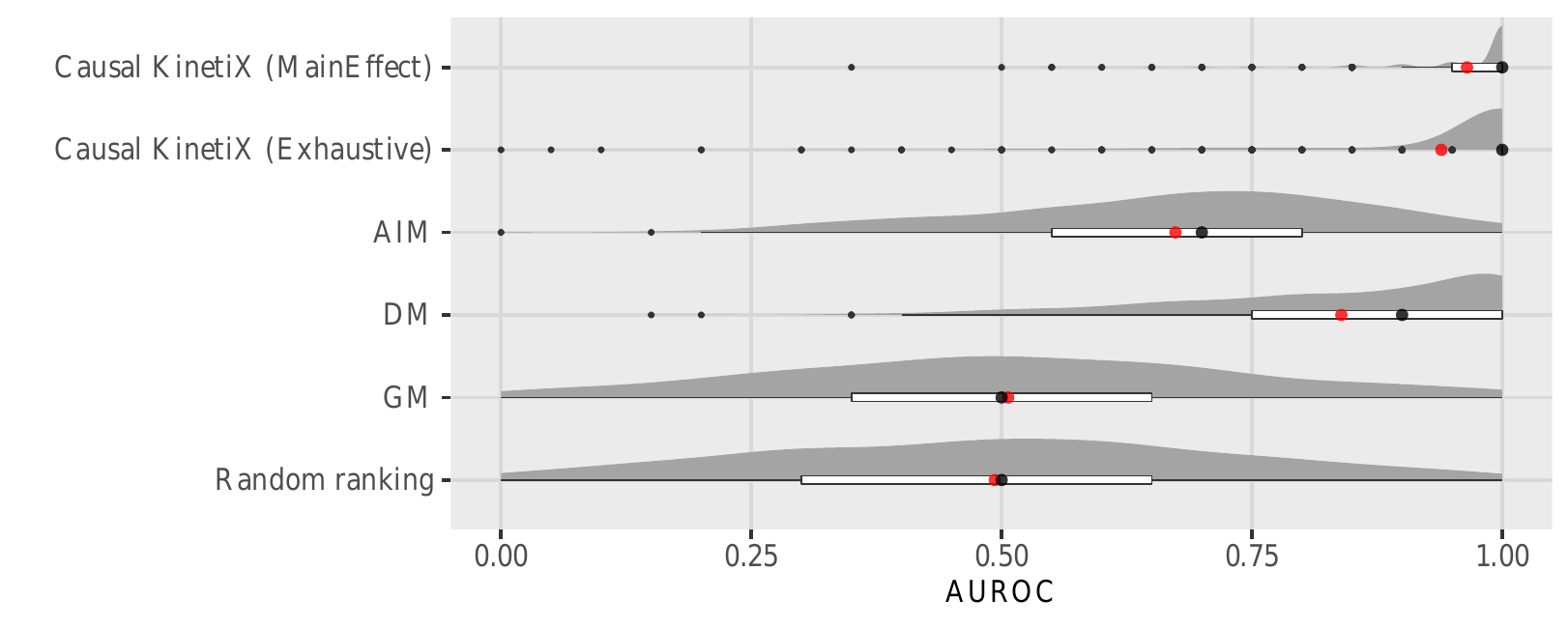}
  \caption{Results for simulation in
    Section~\ref{sec:targetmodel_sim}. Red points correspond to mean
    AUROC, large black points to median AUROC.}
  \label{fig:simulation_targetmodel}
\end{figure}

\FloatBarrier

\subsubsection{Robustness in the presence of hidden
  variables}\label{sec:hidden_exp}

In many practical applications hidden (unobserved) variables are
omnipresent. Since we only model the target equation, hidden variables
do not affect our methodology if they appear anywhere outside the
target variable $Y$.  In this section, we show that even if they enter
the target equation our procedure is generally expected to behave
well. The data in this section are generated according to
\hyperlink{dat:hidden}{Data Set~3}, which is based on an artificially
constructed ODE system, for which some of the variables are assumed to
be hidden. Example trajectories are shown in
Figure~\ref{fig:sample_trajectories_hidden}.

\begin{mdframed}[roundcorner=5pt,
  frametitle={\hypertarget{dat:hidden}{Data Set~3}: Hidden variable model}]
  The exact ODE structure is given in
  Section~\ref{sec:hidden_model}. We generate data from $16$
  experimental conditions and sample $3$ repetitions for each
  experiment. The experimental conditions are the following.
  \begin{compactitem}
  \item \textbf{Experimental condition $1$ (observational data):}\\
    Trajectories are simulated using the parameters given in
    Section~\ref{sec:hidden_model}.
  \item \textbf{Experimental conditions $2$ to $16$ (interventional data):}\\
    Trajectories are simulated based on the following two types of interventions
    \begin{compactitem}
    \item \textbf{initial value intervention:} Initial values are
      sampled for $X^1$, $X^2$ and $X^5$ uniform between $0$ and $10$,
      the remainder of the quantities are kept at zero initially as
      they are all products of the reactions.
    \item \textbf{blocking reactions:} Random reactions other than
      $k_4$, $k_5$ and $k_7$ are set to zero by fixing the
      corresponding reaction constant $k_i\equiv 0$. The expected
      number of reactions set to zero is $2$. Additionally, the rate
      $k_7$ is randomly perturbed either by sampling it uniform on
      $[0,0.2]$ or uniform on $[-0.1, 0.3]$.
    \end{compactitem}
  \end{compactitem}
  Based on these experimental conditions each true trajectory is
  computed using numerical integration. Finally, the observations are
  noisy versions of the values of these trajectories on an exponential
  time grid with $L=20$ time points between $0$ and $100$. The noise
  at each observation is independently normal distributed with mean
  $0$ and variance proportional to the total variation norm of the
  trajectory plus a small positive constant (in case the trajectory is
  constant), i.e.,
  $\sigma=c\cdot\norm{y}_{\operatorname{TV}}+10^{-7}$, where $y(t)$
  are the true trajectories and
  $c\sim\operatorname{Unif}(0.01,0.1)$. Example trajectories for the
  variables $X^2$ and $H^2$ depending on the values of $k_7$ are
  illustrated in Figure~\ref{fig:sample_trajectories_hidden}.
\end{mdframed}

\begin{figure}[h]
  \centering
  \includegraphics[height = 0.26\textheight]{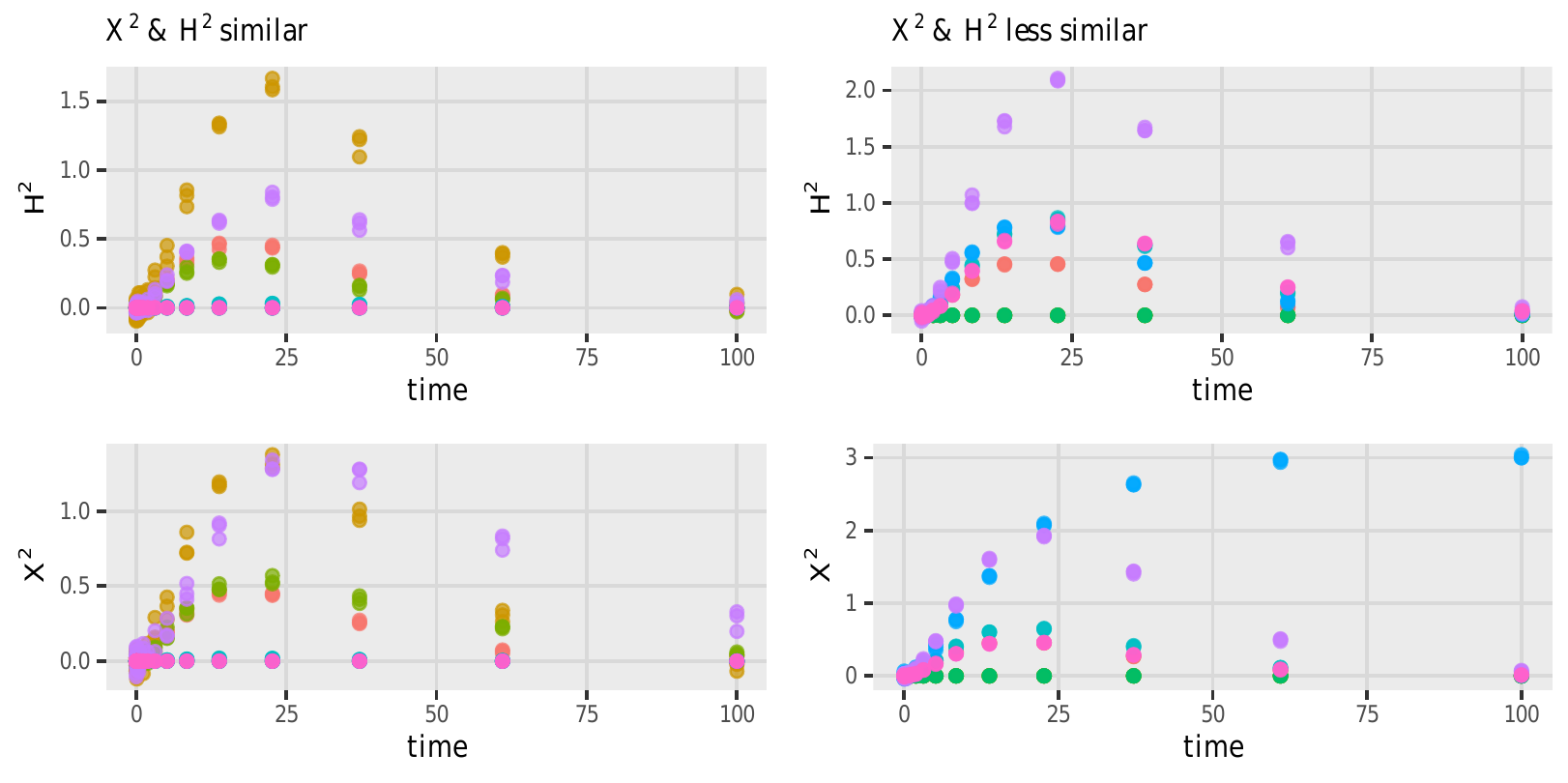}
  \caption{Sample observations of the two predictors $X^2$ and $H^2$
    from \protect\hyperlink{dat:hidden}{Data Set~3} (only first $8$
    experiments) for two different choices of perturbations of
    $k_7$. From left to right: For $k_7$ uniform on $[0,0.2]$ the
    dynamics are similar but not identical, for $k_7$ uniform on
    $[-0.1,0.3]$ the dynamics become very different. Points represent
    noisy observations of the underlying ODE trajectories.}
  \label{fig:sample_trajectories_hidden}
\end{figure}

We conduct three experiments, whose results are shown in
Figure~\ref{fig:hidden_ranking2} and
Figure~\ref{fig:hidden_ranking}. In the first setting (left plots),
all variables are observed.  The system of equations is built such
that $X^2$ and $H^2$ obey very similar but not identical trajectories
(here $k_7$ is perturbed less). Most methods are able to correctly
identify $X^3$ and $H^2$ as the direct causes of $Y$ -- those
variables are usually ranked highest, see
Figures~\ref{fig:hidden_ranking2} and~\ref{fig:hidden_ranking} (left).
In the second setting (middle plots), $H^2$ is unobserved.  Because of
the similarity between $H^2$ and $X^2$, the methods now infer $X^2$ as
a direct cause.  Finally, the third setting (right plots) differ from
the second setting in the sense that $H^2$ and $X^2$ are significantly
different (here $k_7$ is perturbed more).  The latter variable still
helps for prediction but does not yield an invariant
model. CausalKinetiX still reliably infers $X^3$ as a direct cause,
which is usually ranked higher than any of the other variables, see
Figures~\ref{fig:hidden_ranking2} and~\ref{fig:hidden_ranking}
(right). The results show that CausalKinetiX is relatively robust against
the existence of unobserved variables.

\begin{figure}[h]
  \centering{
  \includegraphics[height = 0.15\textheight]{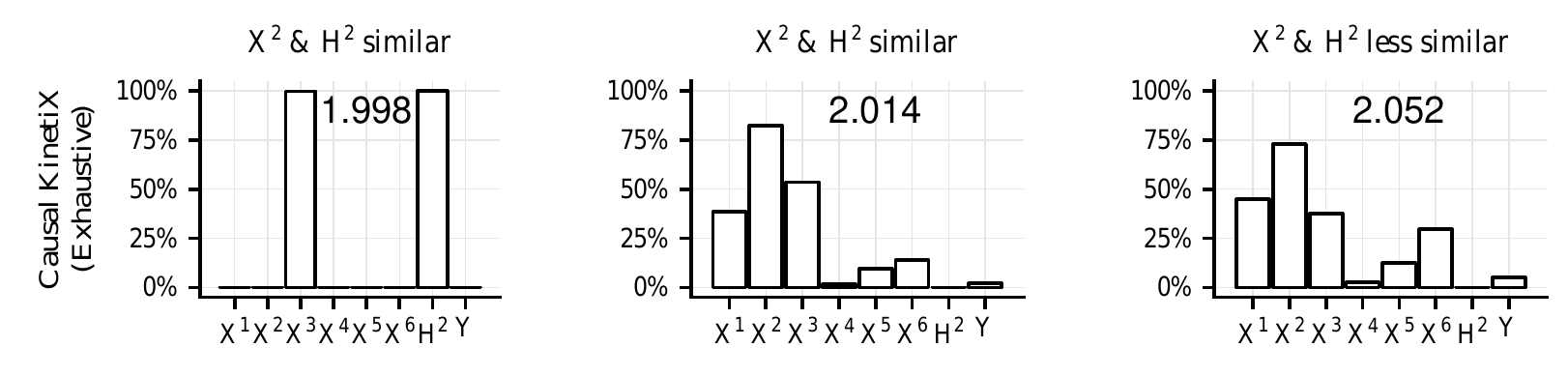} 
  \includegraphics[height = 0.15\textheight]{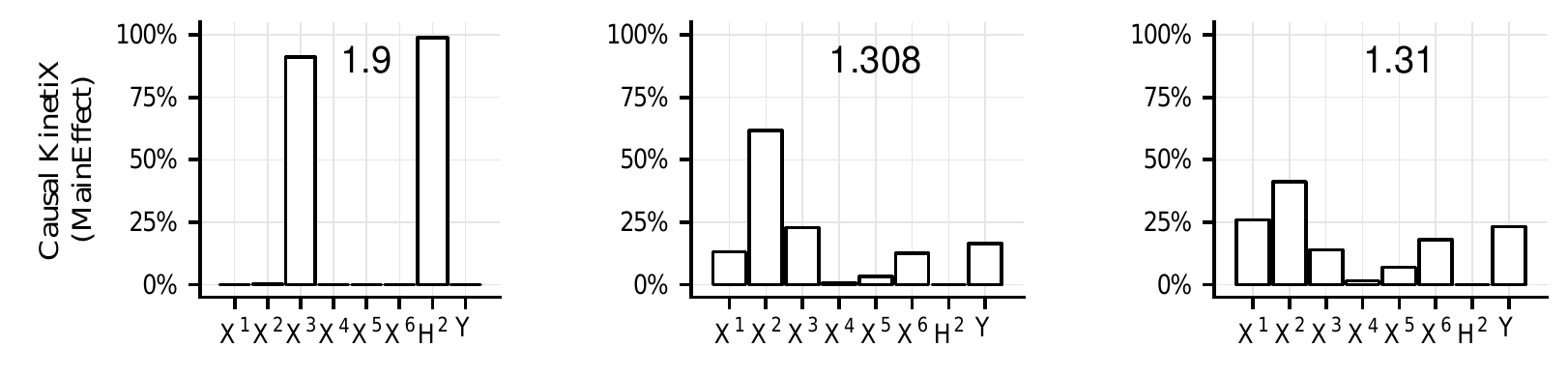}}
\caption{\label{fig:hidden_ranking2}Results for the experiment
  described in Section~\ref{sec:hidden_exp} (hidden variables).  Plot
  shows how often each variable gets a $p$-value smaller than $0.01$.
  The number on each histogram is the average number of significant
  variables at a $1\%$ level.}
\end{figure}

\begin{figure}
  \centering{
  \includegraphics[width=0.9\textwidth]{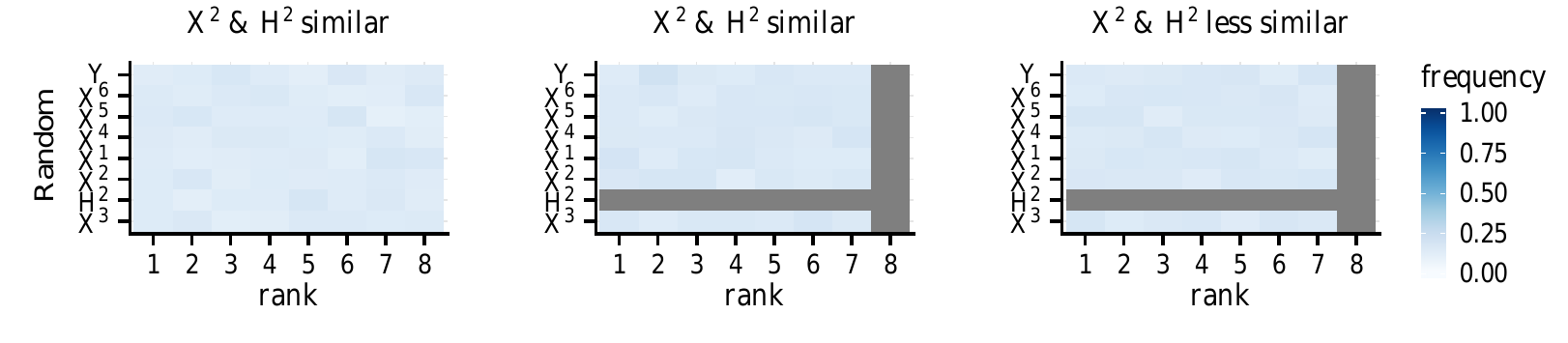}
  \includegraphics[width=0.9\textwidth]{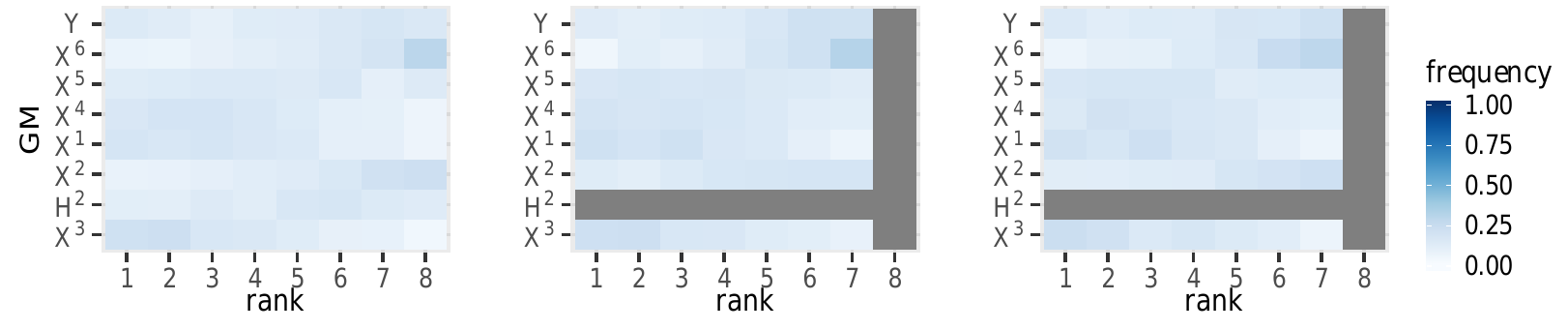}
  \includegraphics[width=0.9\textwidth]{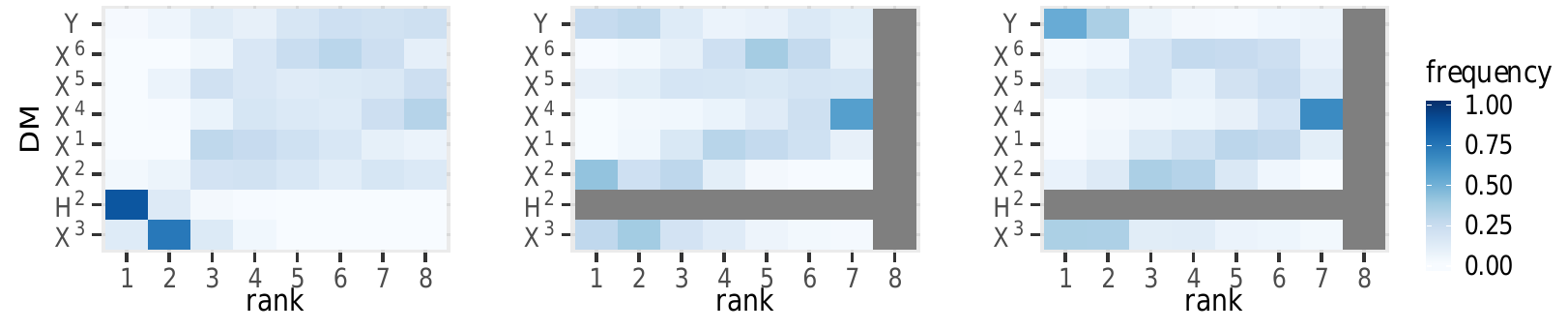}
  \includegraphics[width=0.9\textwidth]{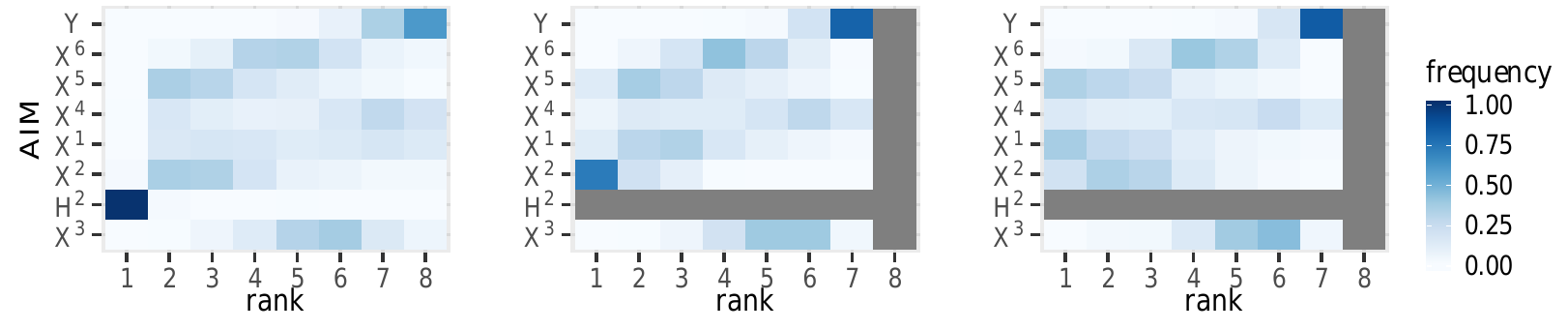}
  \includegraphics[width=0.9\textwidth]{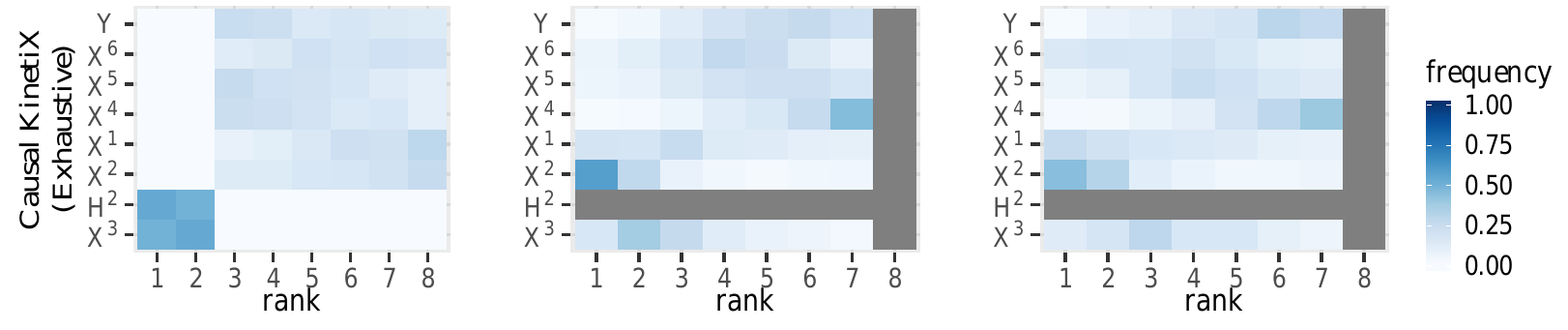}
  \includegraphics[width=0.9\textwidth]{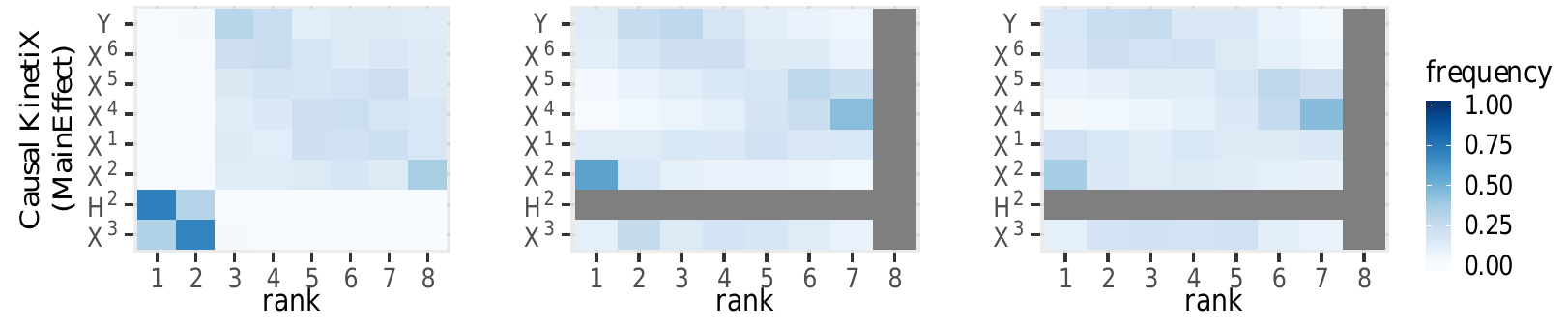}}
  \caption{  \label{fig:hidden_ranking}
Results for the experiment described in
    Section~\ref{sec:hidden_exp} (hidden variables). Left: all
    variables are observed.  From top to bottom: Random, GM,
    DM, AIM, CausalKinetiX (Exhaustive), CausalKinetiX
    (Main Effect)}
\end{figure}

\FloatBarrier

\subsection{Supplementary results to metabolic networks analysis}
The resulting integrated model fits when the entire model search
procedure is performed by holding out one experiment is given in
Figure~\ref{fig:fully_out_of_sample}. Despite the lack of
heterogeneity in two of these held-out experiments, the CausalKinetiX
variable ranking is very robust. To show this we look at the variable
rankings of the fully-out-of-sample experiments from
Figure~\ref{fig:fully_out_of_sample}. The results are presented in
the table below. The true causal variables, as
well as the true causal model, are unknown. For illustration purposes,
we indicate which of the highly ranked variables appear in the model
from above which has obtained the best score when based on all five
experiments. (This model was able to explain all the variation in the
different experiments, as illustrated by the plots in the main paper.)
As a comparison, when screening down to only three terms one obtains
the following different model
\begin{equation*}
  \dot{Y}_t = \theta_1 Z_t X_t^{128} X_t^{128}
  + \theta_2 Z_t X_t^{242} X_t^{298}
  -\theta_3 Y_t X_t^{33} X_t^{138},
\end{equation*}
which are the terms included in DM-NONLSQ-3.

\definecolor{darkgreen}{RGB}{0,100,0}
\small{
  \begin{mdframed}[roundcorner=5pt]
    \begin{minipage}[c]{0.6\textwidth}
      \centering
      \begin{tabular}{@{}cccccc@{}}
        & \multicolumn{5}{c}{\textbf{held-out-experiment}}\\
        \textbf{rank} &  1 &  2 &  3 &  4 &  5 \\ \midrule
        $\mathbf{1}$ & $\mathbf{\textcolor{darkgreen}{X^{33}}}$ & $\mathbf{\textcolor{darkgreen}{X^{33}}}$ & $\mathbf{\textcolor{darkgreen}{X^{33}}}$ & $\mathbf{\textcolor{darkgreen}{X^{33}}}$ & $\mathbf{\textcolor{darkgreen}{X^{33}}}$\\
        $\mathbf{2}$ & $\mathbf{\textcolor{darkgreen}{X^{56}}}$ & $X^{38}$ & $X^{73}$ & $X^{59}$ & $\mathbf{\textcolor{darkgreen}{X^{56}}}$\\
        $\mathbf{3}$ & $\mathbf{\textcolor{darkgreen}{X^{122}}}$ & $X^{61}$ &
                                                                              $\mathbf{\textcolor{darkgreen}{X^{122}}}$ & $\mathbf{\textcolor{darkgreen}{X^{128}}}$ & $\mathbf{\textcolor{darkgreen}{X^{122}}}$\\
        $\mathbf{4}$ & $\mathbf{\textcolor{darkgreen}{X^{128}}}$ &
                                                                   $\mathbf{\textcolor{darkgreen}{X^{128}}}$ & $\mathbf{\textcolor{darkgreen}{X^{138}}}$ & $\mathbf{\textcolor{darkgreen}{X^{168}}}$ & $\mathbf{\textcolor{darkgreen}{X^{128}}}$\\
        $\mathbf{5}$ & $\mathbf{\textcolor{darkgreen}{X^{138}}}$ & $\mathbf{\textcolor{darkgreen}{X^{138}}}$ & $\mathbf{\textcolor{darkgreen}{X^{168}}}$ & $X^{246}$ & $\mathbf{\textcolor{darkgreen}{X^{138}}}$\\
        $\mathbf{6}$ & $\mathbf{\textcolor{darkgreen}{X^{168}}}$ & $\mathbf{\textcolor{darkgreen}{X^{168}}}$ & $X^{215}$ & $X^{61}$ & $\mathbf{\textcolor{darkgreen}{X^{168}}}$
      \end{tabular}
    \end{minipage}
    \hfill\vline\hfill\hspace{0.02\textwidth}
    \begin{minipage}[c]{0.35\textwidth}
      \begin{align}
        \dot{Y}_t &= \theta_1 Z_t \mathbf{\textcolor{darkgreen}{X_t^{56} X_t^{122}}}\nonumber\\
        &\quad+ \theta_2 Z_t \mathbf{\textcolor{darkgreen}{X_t^{128} X_t^{168}}}\nonumber\\
        &\quad-\theta_3 Y_t \mathbf{\textcolor{darkgreen}{X_t^{33} X_t^{138}}}\label{eq:topmod}
      \end{align}
    \end{minipage}
  \end{mdframed}

\begin{figure}[h]
  \centering
  \includegraphics{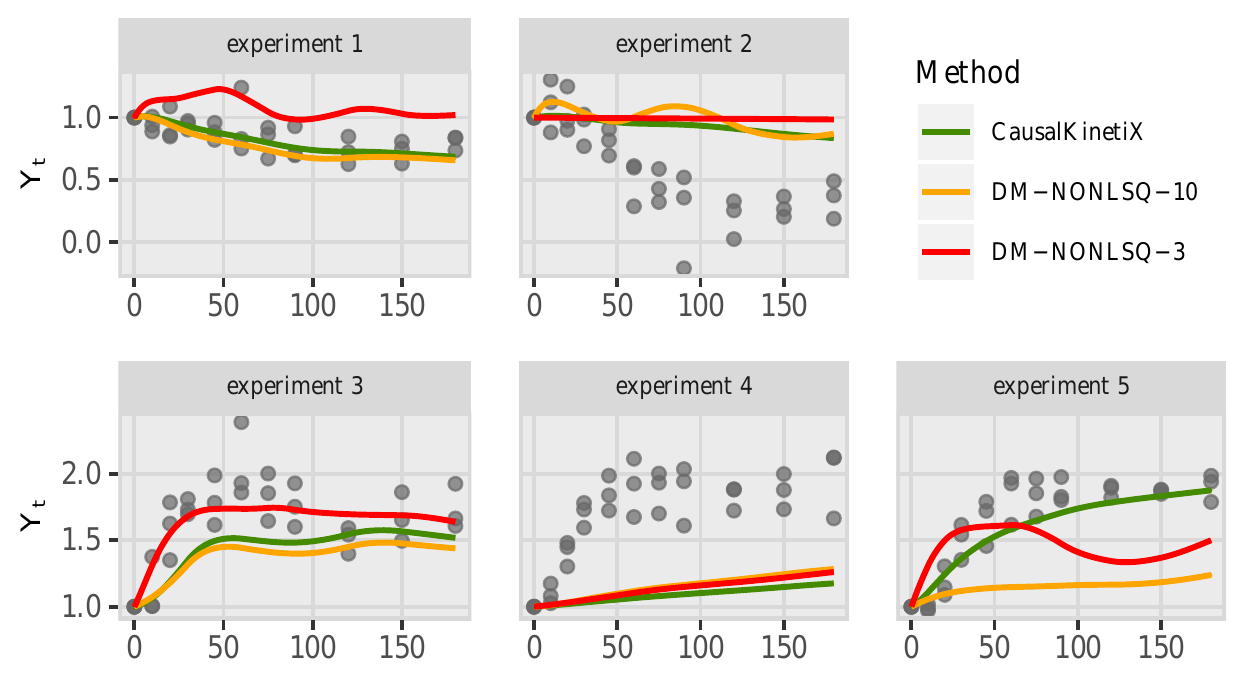}
  \caption{Metabolic network analysis. Fully~out-of-sample fit.  The
    plot shows the models' ability to generalize to new experiments.
    Each plot shows model-based trajectories that are obtained when
    that experiment is neither used for model identification nor
    parameter estimation. This is a very hard problem. CausalKinetiX
    shows the best generalization performance.}
  \label{fig:fully_out_of_sample}
\end{figure}

\subsubsection{Overfitting of trajectories in metabolic network}\label{sec:overfitting_traj}

To underscore the findings in Section~\ref{sec:overfitting_vars}, we
further illustrate the regularizing effect of stability component of
CausalKinetiX, we perform an additional experiment on the metabolic
data. Starting from the model [\ref{eq:topmod}] we proceed by adding 5
terms (from the top $1000$ screened terms) in a greedy fashion based
on two scores: (i) the standard CausalKinetiX score and (ii) the
modified CausalKinetiX score which does not hold out experiments in
step (M4) of the procedure. The results in
Figure~\ref{fig:overfitting_traj} highlight that stability is indeed
helpful for regularizing the fitted trajectories.

\clearpage

\begin{figure}[h]
  \centering
  \includegraphics{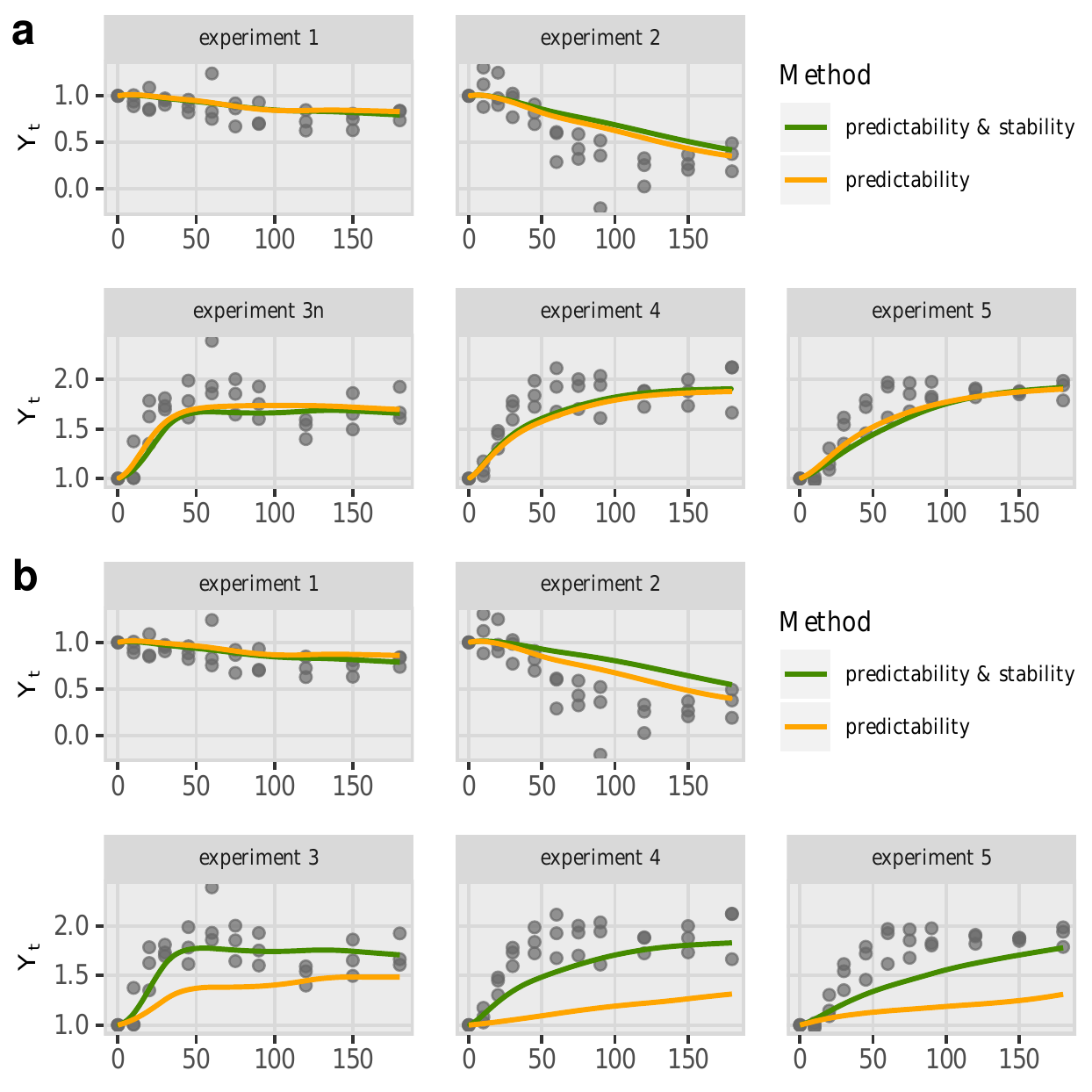}
  \caption{Stability as regularization to overfitting in metabolic
    network analysis. Comparison between two models consisting of 8
    terms where each is constructed in a greedy fashion using a score
    measuring mainly predictability and our proposed score which
    includes stability. In \textbf{a}, the in-sample trajectories are
    shown and the model based solely on predictability performs
    better. In \textbf{b}, the out-of-experiment performance (same
    type of sample-splitting as in the main article Figure 4
    \textbf{b}) of the same models are compared, illustrating the
    regularizing effect of including stability into the score.}
  \label{fig:overfitting_traj}
\end{figure}

\clearpage

\subsection{Additional details on simulation settings} \label{sec:additionbiomod52}

\subsubsection{Biomodel 52}\label{sec:biomodel52}~\\

\scriptsize{
\begin{minipage}{0.3\textwidth}
  \begin{mdframed}[roundcorner=5pt, frametitle={Reactions equations}]
    \begin{align*}
      \text{Glu} &\overset{k_1}{\longrightarrow} \text{Fru}\\
      \text{Fru} &\overset{k_2}{\longrightarrow} \text{Glu}\\
      \text{Glu} &\overset{k_3}{\longrightarrow} \text{Formic acid} +
                     \text{C5}\\
      \text{Fru} &\overset{k_4}{\longrightarrow} \text{Formic acid} +
                     \text{C5}\\
      \text{Fru} &\overset{k_5}{\longrightarrow} 2\cdot\text{Triose}\\
      \text{Triose} &\overset{k_6}{\longrightarrow} \text{Cn} +
                        \text{Acetic acid}\\
      \text{lys R} + \text{Glu} &\overset{k_7}{\longrightarrow}
                                      \text{Amadori}\\
      \text{Amadori} &\overset{k_8}{\longrightarrow} \text{Acetic
                         acid} + \text{lys R}\\
      \text{Amadori} &\overset{k_9}{\longrightarrow} \text{AMP}\\
      \text{lys R} + \text{Fru} &\overset{k_{10}}{\longrightarrow} \text{AMP}\\
      \text{AMP} &\overset{k_{11}}{\longrightarrow} \text{Melanoidin}
    \end{align*}
  \end{mdframed}
\end{minipage}
\hspace{2cm}
\begin{minipage}{0.5\textwidth}
  \begin{mdframed}[roundcorner=5pt, frametitle={ODE equations}]
    \begin{align*}
      \tfrac{\text{d}}{\text{dt}}[\text{Glu}] &= -(k_1+k_3)[\text{Glu}] + k_2[\text{Fru}] +
                                                k_7[\text{Glu}][\text{lys R}]\\
      \tfrac{\text{d}}{\text{dt}}[\text{Fru}] &= k_1[\text{Glu}] - (k_2+k_4+k_5)[\text{Fru}]
                                                -k_{10}[\text{Fru}][\text{lys R}]\\
      \tfrac{\text{d}}{\text{dt}}[\text{Formic acid}] &= k_3[\text{Glu}] + k_4[\text{Fru}]\\
      \tfrac{\text{d}}{\text{dt}}[\text{Triose}] &= 2k_5[\text{Fru}] - k_6[\text{Triose}]\\
      \tfrac{\text{d}}{\text{dt}}[\text{Acetic acid}] &= k_6[\text{Triose}] + k_8[\text{Amadori}]\\
      \tfrac{\text{d}}{\text{dt}}[\text{Cn}] &= k_6[\text{Triose}]\\
      \tfrac{\text{d}}{\text{dt}}[\text{Amadori}] &= -(k_8+k_9)[\text{Amadori}] + k_7[\text{Glu}][\text{lys R}]\\
      \tfrac{\text{d}}{\text{dt}}[\text{AMP}] &= k_9[\text{Amadori}] - k_{11}[\text{AMP}] +
                                                k_{10}[\text{Fru}][\text{lys R}]\\
      \tfrac{\text{d}}{\text{dt}}[\text{C5}] &= k_3[\text{Glu}] + k_4[\text{Fru}]\\
      \tfrac{\text{d}}{\text{dt}}[\text{lys R}] &= k_8[\text{Amadori}] - k_7[\text{Glu}][\text{lys R}] -
                                                  k_{10}[\text{Fru}][\text{lys R}]\\
      \tfrac{\text{d}}{\text{dt}}[\text{Melanoidin}] &=
                                                       k_{11}[\text{AMP}]
    \end{align*}
  \end{mdframed}
\end{minipage}
}

\scriptsize{
\begin{mdframed}[roundcorner=5pt, frametitle={Parameters and initial
    conditions}]
  \begin{minipage}{0.4\textwidth}
    \begin{align*}
      k_1&=0.01\\
      k_2&=0.00509\\
      k_3&=0.00047\\
      k_4&=0.0011\\
      k_5&=0.00712\\
      k_6&=0.00439\\
      k_7&=0.00018\\
      k_8&=0.11134\\
      k_9&=0.14359\\
      k_{10}&=0.00015\\
      k_{11}&=0.12514
    \end{align*}
  \end{minipage}
  \hfill\vline\hfill
  \begin{minipage}{0.4\textwidth}
    \begin{align*}
      [\text{Glu}]\mid_{t=0} &= 160\\
      [\text{Fru}]\mid_{t=0} &= 0\\
      [\text{Formic acid}]\mid_{t=0} &= 0\\
      [\text{Triose}]\mid_{t=0} &= 0\\
      [\text{Acetic acid}]\mid_{t=0} &= 0\\
      [\text{Cn}]\mid_{t=0} &= 0\\
      [\text{Amadori}]\mid_{t=0} &= 0\\
      [\text{AMP}]\mid_{t=0} &= 0\\
      [\text{C5}]\mid_{t=0} &= 0\\
      [\text{lys R}]\mid_{t=0} &= 15\\
      [\text{Melanoidin}]\mid_{t=0} &=0
    \end{align*}
  \end{minipage}
\end{mdframed}
}

\clearpage

\subsubsection{Artificial hidden variable
  model}\label{sec:hidden_model}~\\

\scriptsize{
\begin{minipage}{0.3\textwidth}
  \begin{mdframed}[roundcorner=5pt, frametitle={Reactions equations}]
    \begin{align*}
      X^1 &\overset{k_1}{\longrightarrow} H^1\\
      H^1 &\overset{k_2}{\longrightarrow} X^2 + H^2\\
      H^1 &\overset{k_3}{\longrightarrow} X^1\\
      H^2 &\overset{k_4}{\longrightarrow} Y\\
      X^3 &\overset{k_5}{\longrightarrow} Y + X^4\\
      X^1 + X^4 &\overset{k_6}{\longrightarrow} X^3\\
      X^2 &\overset{k_7}{\longrightarrow} X^6\\
      X^5 &\overset{k_8}{\longrightarrow} X^3\\
      X^4 &\overset{k_9}{\longrightarrow} X^5
    \end{align*}
  \end{mdframed}
\end{minipage}
\hspace{2cm}
\begin{minipage}{0.5\textwidth}
  \begin{mdframed}[roundcorner=5pt, frametitle={ODE equations}]
    \begin{align*}
      \tfrac{\text{d}}{\text{dt}}[X^1] &= -k_1[X^1] + k_3[H^1] - k_6[X^1][X^4]\\
      \tfrac{\text{d}}{\text{dt}}[X^2] &= k_2[H^1] - k_7[X^2]\\
      \tfrac{\text{d}}{\text{dt}}[X^3] &= -k_5[X^3] + k_6[X^1][X^4] + k_8[X^5]\\
      \tfrac{\text{d}}{\text{dt}}[X^4] &= k_5[X^3] - k_6[X^1][X^4] - k_9[X^4]\\
      \tfrac{\text{d}}{\text{dt}}[X^5] &= k_9[X^4] - k_8[X^5]\\
      \tfrac{\text{d}}{\text{dt}}[X^6] &= k_7[X^2]\\
      \tfrac{\text{d}}{\text{dt}}[H^1] &= k_1[X^1] - (k_2+k_3)[H^1]\\
      \tfrac{\text{d}}{\text{dt}}[H^2] &= k_2[H^1] - k_4[H^2]\\
      \tfrac{\text{d}}{\text{dt}}[Y] &= k_4[H^2] + k_5[X^3]
    \end{align*}
  \end{mdframed}
\end{minipage}
}

\scriptsize{
\begin{mdframed}[roundcorner=5pt, frametitle={Parameters and initial
    conditions}]
  \begin{minipage}{0.4\textwidth}
    \begin{align*}
      k_1&=0.08\\
      k_2&=0.08\\
      k_3&=0.01\\
      k_4&=0.1\\
      k_5&=0.003\\
      k_6&=0.06\\
      k_7&=0.1\\
      k_8&=0.02\\
      k_9&=0.05
    \end{align*}
  \end{minipage}
  \hfill\vline\hfill
  \begin{minipage}{0.4\textwidth}
    \begin{align*}
      [X^1]\mid_{t=0} &= 5\\
      [X^2]\mid_{t=0} &= 0\\
      [X^3]\mid_{t=0} &= 0\\
      [X^4]\mid_{t=0} &= 5\\
      [X^5]\mid_{t=0} &= 0\\
      [X^6]\mid_{t=0} &= 0\\
      [H^1]\mid_{t=0} &= 0\\
      [H^2]\mid_{t=0} &= 0\\
      [Y]\mid_{t=0} &= 0
    \end{align*}
  \end{minipage}
\end{mdframed}
}

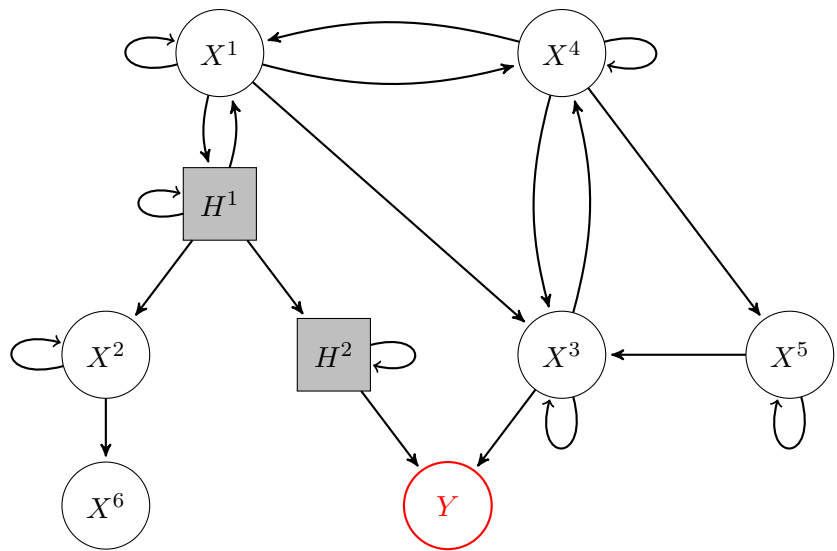
\begin{figure}[h!]
  \centering
  \begin{tikzpicture}[scale=1]
    \tikzstyle{VertexStyle} = [shape = circle, minimum width = 3em,draw]
    \SetGraphUnit{2}
    \Vertex[Math,L=X^1,x=-3,y=4]{X1}
    \Vertex[Math,L=X^2,x=-4.5,y=0]{X2}
    \Vertex[Math,L=X^3,x=1.5,y=0]{X3}
    \Vertex[Math,L=X^4,x=1.5,y=4]{X4}
    \Vertex[Math,L=X^5,x=4.5,y=0]{X5}
    \Vertex[Math,L=X^6,x=-4.5,y=-2]{X6}
    \tikzstyle{VertexStyle} = [shape = circle, minimum width =
    3em, draw, color=red, thick]
    \Vertex[Math,L=Y,x=0,y=-2]{Y}
    \tikzstyle{VertexStyle} = [shape = rectangle, minimum width = 2.5em,
    minimum height = 2.5em, draw, fill=lightgray]
    \Vertex[Math,L=H^1,x=-3,y=2]{H1}
    \Vertex[Math,L=H^2,x=-1.5,y=0]{H2}
    \tikzstyle{EdgeStyle} = [->,>=stealth',shorten > = 2pt]
    \Edge(X1)(X3)
    \Edge(X4)(X5)
    \Edge(X5)(X3)
    \Edge(H1)(H2)
    \Edge(X3)(Y)
    \Edge(H2)(Y)
    \Edge(H1)(X2)
    \Edge(X2)(X6)
    \tikzset{EdgeStyle/.append style = {->, bend right=15}}
    \Edge(X4)(X1)
    \Edge(X1)(X4)
    \Edge(H1)(X1)
    \Edge(X1)(H1)
    \Edge(X4)(X3)
    \Edge(X3)(X4)
    \draw (X1) edge [loop left, thick](X1);
    \draw (X2) edge [loop left, thick](X2);
    \draw (X3) edge [loop below, thick](X3);
    \draw (X4) edge [loop right, thick](X4);
    \draw (X5) edge [loop below, thick](X5);
    \draw (H1) edge [loop left, thick](H1);
    \draw (H2) edge [loop right, thick](H2);
  \end{tikzpicture}
  \caption{Graph representation of hidden variable ODE model. If the
    rate $k_4$ is equal to the rate $k_7$ the variables $X^2$ and
    $H^2$ will have identical dynamics.}
  \label{fig:graph}
\end{figure}}


\end{document}